\def\1{\bm{1}}
\DeclareMathAlphabet{\mathsfit}{\encodingdefault}{\sfdefault}{m}{sl}
\SetMathAlphabet{\mathsfit}{bold}{\encodingdefault}{\sfdefault}{bx}{n}
\newcommand{\E}{\mathbb{E}}
\newcommand{\R}{\mathbb{R}}
\DeclareMathOperator*{\argmin}{arg\,min}
\theoremstyle{plain}
\newtheorem{theorem}{Theorem}
\newtheorem{lemma}{Lemma}
\newtheorem{assumption}{Assumption}
\newtheorem{proposition}{Proposition}
\newtheorem{definition}{Definition}
\newtheorem{remark}{Remark}
\title{Bias-Variance Trade-off for Clipped Stochastic First-Order Methods:\\ From Bounded Variance to Infinite Mean}
\author{Chuan He\thanks{Department of Mathematics, Link\"oping University, Sweden (email: {\tt chuan.he@liu.se}). This work was partially supported by the Wallenberg AI, Autonomous Systems and Software Program (WASP) funded by the Knut and Alice Wallenberg Foundation.}
}
\date{
\today
}
\begin{document}

\maketitle
	
\begin{abstract}
Stochastic optimization is fundamental to modern machine learning. Recent research has extended the study of stochastic first-order methods (SFOMs) from light-tailed to heavy-tailed noise, which frequently arises in practice, with clipping emerging as a key technique for controlling heavy-tailed gradients. Extensive theoretical advances have further shown that the oracle complexity of SFOMs depends on the tail index $\alpha$ of the noise. Nonetheless, existing complexity results often cover only the case $\alpha \in (1,2]$, that is, the regime where the noise has a finite mean, while the complexity bounds tend to infinity as $\alpha$ approaches $1$. This paper tackles the general case of noise with tail index $\alpha\in(0,2]$, covering regimes ranging from noise with bounded variance to noise with an infinite mean, where the latter case has been scarcely studied. Through a novel analysis of the bias-variance trade-off in gradient clipping, we show that when a symmetry measure of the noise tail is controlled, clipped SFOMs achieve improved complexity guarantees in the presence of heavy-tailed noise for any tail index $\alpha \in (0,2]$. Our analysis of the bias-variance trade-off not only yields new unified complexity guarantees for clipped SFOMs across this full range of tail indices, but is also straightforward to apply and can be combined with classical analyses under light-tailed noise to establish oracle complexity guarantees under heavy-tailed noise. Finally, numerical experiments validate our theoretical findings.
\end{abstract}

{\small \noindent\textbf{Keywords:} Stochastic composite optimization, Heavy-tailed noise, Gradient clipping, First-order oracle complexity}

\medskip

{\small \noindent\textbf{Mathematics Subject Classification:} 49M37, 90C15, 90C25, 90C30, 90C90}

\section{Introduction}

Stochastic first-order methods (SFOMs) have been instrumental in driving recent progress in machine learning. In contrast to deterministic first-order methods, SFOMs cannot directly access exact gradients and instead rely on stochastic gradient estimates, with estimation noise originating from various sources such as sampling \cite{bottou2010large} and deliberate injection to improve generalization \cite{he2019parametric, igl2019generalization} or to preserve privacy \cite{abadi2016deep}. Motivated by their widespread applications, studies of SFOMs have garnered considerable attention, advancing the theoretical foundations of optimization and leading to powerful algorithmic frameworks that incorporate stochasticity grounded in statistical principles. In this paper, we consider the composite optimization problem:
\begin{align}\label{ucpb}
\min_{x\in\R^n} \{F(x) := f(x) + h(x)\}.
\end{align}
where $f:\R^n\to\R$ is continuously differentiable and $h:\R^n\to(-\infty,\infty]$ is lower semicontinuous and convex. Problem \eqref{ucpb} has broad applications in machine learning, where $f$ typically represents the loss function, reflecting errors over the training data, and $h$ usually denotes a regularization term that enforces desirable properties on the model. We refer the readers to \cite{sra2011optimization} for further details on the applications of problem \eqref{ucpb}.

There have been extensive algorithmic developments on SFOMs for solving \eqref{ucpb} or its special cases; e.g., see \cite{alacaoglu2025towards,bottou2018optimization,davis2021low,foster2019complexity,gao2024non,ghadimi2012optimal,ghadimi2013optimal,lan2012optimal,liang2024single,moulines2011non,nemirovski2009robust,nemirovski1983problem,polyak1992acceleration,robbins1951stochastic,shalev2009stochastic}. In the classical setting of SFOMs, the exact gradient $\nabla f$ is unavailable. Instead, we have access to a stochastic oracle $G:\R^n\times\Xi\to\R^n$ that satisfies the unbiasedness and bounded-variance conditions:
\begin{align}\label{asp:unbias-variance}
    \E[G(x;\xi)] = \nabla f(x),\quad \E[\|G(x;\xi) - \nabla f(x)\|^2]\le\sigma^2\qquad\forall x\in\R^n
\end{align}
for some $\sigma>0$. Under this condition and the assumption that $\nabla f$ is Lipschitz continuous, many SFOMs have been proposed for solving \eqref{ucpb} and its special cases. In particular, when $f$ is convex and $h$ is the indicator function of a simple closed convex set, an accelerated stochastic proximal gradient method (SPGM) has been developed in \cite{lan2012optimal} to achieve a first-order oracle complexity of $\mathcal{O}(\epsilon^{-2})$ for finding an $\epsilon$-stochastic optimal solution (see Definition \ref{def:sol} for its precise definition). When $f$ is strongly convex, accelerated SPGMs have been developed in \cite{ghadimi2012optimal,ghadimi2013optimal} to achieve a first-order oracle complexity of $\mathcal{O}(\epsilon^{-1})$ for finding an $\epsilon$-stochastic optimal solution. Moreover, when $f$ is generally nonconvex, an SPGM with momentum have been developed in \cite{gao2024non} to achieve a first-order oracle complexity of $\mathcal{O}(\epsilon^{-4})$ for finding an $\epsilon$-stochastic stationary point (see Definition \ref{def:sol} for its precise definition).

Recently, there has been growing interest in stochastic optimization under heavy-tailed noise, extending beyond condition \eqref{asp:unbias-variance}---a trend largely driven by modern applications such as transformer training \cite{zhang2020adaptive} and advanced privacy-preserving techniques \cite{csimcsekli2024privacy}. In particular, to model heavy-tailed noise, one may extend \eqref{asp:unbias-variance} to impose unbiasedness and a finite $\alpha$th central moment, for some $\alpha \in (1,2]$, on the stochastic gradient $G$; that is, one assumes that
\begin{align}\label{asp:heavy-tail}
\E[G(x;\xi)] = \nabla f(x),\quad \E[\|G(x;\xi) - \nabla f(x)\|^\alpha]\le\sigma^\alpha\qquad\forall x\in\R^n   
\end{align}
for some $\sigma>0$. As an algorithmic strategy for handling noise, clipping has been widely incorporated into modern deep learning \cite{pascanu2013difficulty}, and extensive studies have investigated clipped SFOMs for solving problem \eqref{ucpb} or its special cases under the heavy-tailed condition in \eqref{asp:heavy-tail}; see, e.g., \cite{cutkosky2021high, gorbunov2020stochastic, liu2024high, nguyen2023improved, sadiev2023high, sadiev2025second, zhang2020adaptive}. In particular, clipped SFOMs have been analyzed in \cite{zhang2020adaptive} for solving the unconstrained special case of problem \eqref{ucpb} with $h=0$. When $f$ is strongly convex, a first-order oracle complexity of $\mathcal{O}(\epsilon^{-\alpha/(2(\alpha-1))})$ has been established for finding an $\epsilon$-stochastic optimal solution. In the generally nonconvex case, an oracle complexity of $\mathcal{O}(\epsilon^{-(3\alpha-2)/(\alpha-1)})$ has been established for finding an $\epsilon$-stochastic stationary point (i.e., a point whose gradient norm is at most $\epsilon$ in expectation). It has also been shown in \cite{zhang2020adaptive} that both complexity bounds match the corresponding lower bounds. In addition, it has been shown in \cite{sadiev2023high} that when $f$ is convex, the clipped SFOM achieves an oracle complexity of $\mathcal{O}(\epsilon^{-\alpha/(\alpha-1)})$ for finding an $\epsilon$-stochastic optimal solution, which matches the lower bound in \cite{nemirovski1979efficient}. More recently, it has been pointed out in \cite{fatkhullin2025can} that the vanilla projected SGD also achieves these optimal complexity bounds for solving the special case of \eqref{ucpb} with $h$ being an indicator function of a simple closed convex set. Furthermore, in \cite{he2025accelerated}, it has been show that the vanilla accelerated SPGM achieves the oracle complexity that is universally optimal for smooth, weakly smooth, and nonsmooth convex optimization, as well as stochastic convex optimization under the heavy-tailed condition \eqref{asp:heavy-tail}. For nonconvex problems, normalized SFOMs can also attain the aforementioned optimal complexity, indicating that normalization appears to be a viable alternative to clipping for handling heavy-tailed noise; e.g., see \cite{he2025complexity,hubler2024gradient,liu2025nonconvex,sun2025revisiting}. These results suggest that studies of clipped SFOMs under condition \eqref{asp:heavy-tail} may not fully justify the advantages of gradient clipping for handling heavy-tailed noise, as vanilla stochastic algorithms and (potentially simpler) alternative algorithms can achieve comparable complexity bounds. On another note, all these complexity bounds tend to infinity as $\alpha\to1$, failing to cover cases where the noise follows heavy-tailed distributions with an infinite mean, such as the Cauchy and Lévy distributions. This naturally raises the question:
\begin{center}
    \textit{Can we develop and analyze SFOMs under heavy-tailed noise with a possibly infinite mean?}
\end{center}

This paper provides an affirmative answer to this question. We show that clipped SPGMs can provably solve problem \eqref{ucpb} in the presence of heavy-tailed noise with a potentially infinite mean, provided that a symmetry measure of the noise tail is appropriately controlled. Specifically, we first introduce heavy-tailed conditions for any tail index $\alpha \in (0,2]$, characterized by the bounded central moment as in \eqref{alpha-moment} and the power-law condition of the density as in \eqref{density-tail}. We then provide the regularity conditions: the noise distributions are asymptotically unbiased as in \eqref{truncate-bias}, and a measure of symmetry for the noise tail is controlled as in \eqref{near-sym}. Conditions  \eqref{truncate-bias} and \eqref{near-sym} include heavy-tailed distributions that are symmetric about the origin, such as the standard Cauchy distribution, and are much weaker than imposing perfectly symmetry. Since the expectation of the stochastic gradient with infinite-mean noise is undefined, it is natural to consider using clipped stochastic gradients, which have finite first- and second-order moments but also introduce bias (e.g., see \cite{koloskova2023revisiting}). To facilitate the complexity analysis of clipped SPGMs, we further investigate the bias and variance of clipped stochastic gradients, which display a clear trade-off pattern. As a result, our analysis indicates that a moderate clipping threshold is required to efficiently obtain a solution of desirable accuracy, avoiding excessive variance from a threshold that is too large or excessive bias from a threshold that is too small. Finally, we leverage the bias-variance trade-off to establish the first-order oracle complexity of a clipped SPGM (Algorithm \ref{alg:c-spgm}) for solving problem \eqref{ucpb} when $f$ is convex, and of a clipped SPGM with momentum when $f$ is nonconvex (Algorithm \ref{alg:c-spgm-m}). Our oracle complexity bounds are summarized in Table \ref{table:complexity} for ease of reference.


\begin{table}[h!]
\centering
\caption{Oracle Complexity Bounds for Clipped SPGMs under Different Noise Conditions}
\label{table:complexity}
\smallskip
\resizebox{\textwidth}{!}{
\begin{tabular}{c|c|c|c|c}
\hline
& \multirow{2}{*}{solution type} & light-tailed & $\alpha$-heavy-tailed & $\alpha$-symmetric \& $\alpha$-heavy-tailed  \\
& &  ($\alpha=2$)  & ($\alpha\in(1,2)$) & ($\alpha\in(0,2)$ under Assumption \ref{asp:decay})  \\[4pt]
\hline
& & & & \\[-9pt]
strongly convex & \multirow{2}{*}{$\E[F(x)-F^*]\le\epsilon$} &$\widetilde{\mathcal{O}}(\epsilon^{-1})$ & $\widetilde{\mathcal{O}}(\epsilon^{-\alpha/(2(\alpha-1))})$ &  $\widetilde{\mathcal{O}}(\epsilon^{-(\alpha+2)/(2\alpha)})$ \\[2pt]
convex & & $\mathcal{O}(\epsilon^{-2})$ & $\mathcal{O}(\epsilon^{-\alpha/(\alpha-1)})$ &  $\mathcal{O}(\epsilon^{-(\alpha+2)/\alpha})$ \\[4pt]
\hline
& & & & \\[-9pt]
nonconvex & $\E[\mathrm{dist}^2(0,\partial F(x))]\le\epsilon^2$ &  $\mathcal{O}(\epsilon^{-4})$ & $\mathcal{O}(\epsilon^{-(3\alpha-2)/(\alpha-1)})$ & $\mathcal{O}(\epsilon^{-(3\alpha+2)/\alpha})$ \\[4pt]
\hline
\end{tabular}
}
\end{table}

Our main contributions are twofold.
\begin{itemize}
    \item We provide novel unified oracle complexity guarantees for clipped SPGMs under heavy-tailed noise with any tail index $\alpha \in (0,2]$, provided that a symmetry measure of the noise tail is appropriately controlled.
    \item We establish the bias-variance trade-off for clipped stochastic gradients, which can be combined with the analysis of SFOMs under light-tailed noise. As a result, we derive, to the best of our knowledge, the first complexity guarantees for clipped SPGMs for solving convex and nonconvex composite optimization problems.
\end{itemize}

It is noteworthy that some other works also study clipped SFOMs under (nearly) symmetric noise; see, e.g., \cite{armacki2025large, chen2020understanding, jakovetic2023nonlinear}. These works often assume that the noise density is positive near zero and that it is either perfectly symmetric or a mixture of symmetric and asymmetric densities. This is fundamentally different from our setup: we allow the density to be void near zero and require only that the noise tail be nearly symmetric at a specific controlling rate (see Assumption \ref{asp:decay} below). Moreover, in \cite{puchkin2024breaking}, SFOMs equipped with clipping with median-of-means gradient estimators have been developed and analyzed under a mixture of symmetric and asymmetric noise. However, their assumption on noise symmetry requires $k$-fold convolution, which is more complicated than our setup.

The rest of this paper is organized as follows. In Section \ref{sec:not}, we introduce the notation and assumptions used throughout this paper. In Section \ref{sec:bias-var}, we establish the bias-variance trade-off for clipping. In Section~\ref{sec:complexity}, we establish complexity bounds for a clipped SPGM for solving convex problems and a clipped SPGM with momentum for solving nonconvex problems. Section \ref{sec:nr} presents numerical results. In Section \ref{sec:pf}, we provide the proofs of our main results.

\section{Notation and Assumptions}\label{sec:not}
Throughout this paper, we use $\R^n$ to denote the $n$-dimensional Euclidean space. We denote the Euclidean norm and $\ell_\infty$-norm for vectors by $\|\cdot\|$ and $\|\cdot\|_{\infty}$, respectively. For any set $\mathcal{S}\subseteq\R^m$ and $v\in\R^m$, we denote the Euclidean projection onto $\mathcal{S}$ as $\Pi_{\mathcal{S}}(v)$. For any proper closed convex function $\varphi$, we denote its subdifferential by $\partial \varphi$ and define the proximal mapping associated with $\varphi$, with parameter $\eta>0$, as $\mathrm{prox}_{\eta\varphi}(x) := \argmin_{z\in\R^n}\{\varphi(z) + (2\eta)^{-1}\|z-x\|^2\}$. We denote the domain of $\varphi$ as $\mathrm{dom}\,\varphi$. We let $G=[G_1\ \cdots\ G_n]^T:\R^n\times\Xi\to\R^n$ be a stochastic estimator for $\nabla f=[\nabla_1 f\ \cdots\ \nabla_n f]^T$. For every $x\in\R^n$ and $\xi\in\Xi$, we define the stochastic gradient with coordinate-wise clipping as
$G_\tau(x;\xi)=\Pi_{\{g:\|g\|_\infty\le\tau\}}(G(x;\xi))$, and define the estimation noise as $N(x;\xi):=G(x;\xi) - \nabla f(x)$, which has the coordinate representation:
\begin{equation}\label{grad-noise}
N(x;\xi) = \left[\begin{matrix}
N_1(x;\xi)\\
\vdots\\
N_n(x;\xi)
\end{matrix}\right]\qquad\forall x\in\R^n,\xi\in\Xi.  
\end{equation}
For each $1\le i\le n$ and $x\in\R^n$, we let $p_{i,x}$ be the density function of the random variable $N_i(x;\xi)$.  For any $s\in\R$, we let $\mathrm{sgn}(s)$ be $1$ if $s\ge0$ and let it
be $-1$ otherwise. In addition, we use $\mathcal{O}(\cdot)$ to denote the standard big-O notation, $\widetilde{\mathcal{O}}(\cdot)$ to denote big-O notation with hidden logarithmic factors, and $o(\cdot)$ to denote the standard small-o notation.

We now make our main assumptions throughout this paper.

\begin{assumption}\label{asp:basic}
\begin{enumerate}
\item[{(a)}] The proximal operator associated with $h$ can be evaluated exactly, and its domain $\mathrm{dom}\,h$ is bounded.
\item[{(b)}] The gradient $\nabla f$ is Lipschitz continuous on $\mathrm{dom}\,h$, i.e., there exists $L_f>0$ such that $\|\nabla f(y) - \nabla f(x)\|\le L_f\|y - x\|$ for all $x,y\in\mathrm{dom}\,h$.
\item[{(c)}] There exist $\alpha\in(0,2]$ and $\Lambda_1,\Lambda_2,u_1>0$ such that the density functions $\{p_{i,x}\}$ satisfy
\begin{subequations}
\begin{align}
\sup_{\substack{x \in \mathrm{dom}\,h, \\ 1 \le i \le n}}\Big\{\int_{-\infty}^\infty |u|^\alpha p_{i,x}(u) \mathrm{d}u\Big\} & \le \Lambda_1,\qquad \label{alpha-moment}\\
\sup_{\substack{x \in \mathrm{dom}\,h, \\ 1 \le i \le n}}\{p_{i,x}(u)\}& \le \Lambda_2 |u|^{-(\alpha+1)}\qquad\ \forall |u|\ge u_1,\label{density-tail}\\
\lim_{\tau\to\infty} \sup_{\substack{x \in \mathrm{dom}\,h, \\ 1 \le i \le n}}\Big\{\Big| \int^\tau_{-\tau} u p_{i,x}(u)\mathrm{d}u \Big|\Big\} & = 0,\label{truncate-bias}\\
\lim_{\tau\to\infty} \sup_{\substack{x \in \mathrm{dom}\,h, \\ 1 \le i \le n}}\Big\{\Big|\tau\int_\tau^{\infty} (p_{i,x}(u)-p_{i,x}(-u))\mathrm{d}u\Big|\Big\}&=0.\label{near-sym}
\end{align}    
\end{subequations}
\end{enumerate}
\end{assumption}

\begin{remark}
(i) Assumption \ref{asp:basic}(a) is quite common in stochastic optimization. We define the diameter of $\mathrm{dom}\,h$, the upper bound for $\nabla f$ over $\mathrm{dom}\,h$, and the lower bound of $F$ over $\mathrm{dom}\,h$, respectively, as follows:
\begin{equation}\label{def:Uf-Dh}
D_h = \sup_{x,y\in\mathrm{dom}\,h}\{\|x-y\|\},\quad U_f = \sup_{x\in\mathrm{dom}\,h}\{\|\nabla f(x)\|_\infty\},\quad F_{\mathrm{low}} = \inf_{x\in\mathrm{dom}\,h}\{F(x)\}.  
\end{equation}
For convenience, we denote a lower bound for the clipping threshold as
\begin{align}\label{def:tau-underline}
\tau_{(1)} = u_1 + U_f.
\end{align}
Assumption \ref{asp:basic}(b) is standard. It implies the following descent inequality:
\begin{equation}\label{ineq:descent}
f(y) \le f(x) + \nabla f(x)^T(y-x) + \frac{L_f}{2}\|y-x\|^2\qquad\forall x,y\in\mathrm{dom}\,h.
\end{equation}

(ii) Assumption \ref{asp:basic}(c) formalizes the heavy-tailed noise condition that underpins our analysis, where \eqref{alpha-moment} and \eqref{density-tail} indicate that the noise has a tail index $\alpha\in(0,2]$, and \eqref{truncate-bias} and \eqref{near-sym} serve as additional regularity conditions. We make a few remarks on \eqref{alpha-moment}-\eqref{near-sym} below:

\begin{itemize}
    \item Condition \eqref{alpha-moment} indicates that the $\alpha$th moment of the noise is bounded, which is common in previous studies of stochastic optimization under heavy-tailed noise (e.g., see \cite{zhang2020adaptive}), although main prior work has only considered the case $\alpha\in(1,2]$. In addition, condition \eqref{density-tail} implies that the density function decays according to a power law at the rate of $\mathcal{O}(|u|^{-(\alpha+1)})$, which is useful for modeling distributions with heavy-tailed behavior (e.g., \cite{stumpf2012critical}). As will be shown shortly in Proposition \ref{lem:htbd-density}, conditions \eqref{alpha-moment} and \eqref{density-tail} are nearly equivalent under certain regularity conditions. We impose both conditions solely for convenience of presentation. 
    \item Condition \eqref{truncate-bias} represents asymptotic unbiasedness and generalizes the common unbiasedness condition for finite-mean noise to infinite-mean noise. To see this, one can interchange the limit and the supremum in \eqref{truncate-bias} for the finite-mean case. Condition \eqref{near-sym} implies that the integral over the tail $[\tau,\infty)$ of the symmetry measure $p_{i,x}(\cdot)-p_{i,x}(-\cdot)$ decays at the rate $o(\tau^{-1})$. It holds for symmetric distributions, e.g., the standard Cauchy distribution. For arbitrary heavy-tailed distributions with tail index $\alpha\in(1,2]$, it also holds regardless of whether the distribution is nearly symmetric, as shown in Lemma \ref{lem:bias-asym}.
\end{itemize}    
\end{remark}

The following proposition connects the bounded finite-moment condition with the power-law decay of the noise density. Its proof is deferred to Section \ref{subsec:pf-not}.

\begin{proposition}\label{lem:htbd-density}
Let $\alpha\in(0,2]$ be given, and $\zeta$ be a real-valued random variable with the density function $p(\cdot)$. Then the following statements hold.
\begin{enumerate}
\item[{{(i)}}] Suppose that there exist $z_1>0$ and $M>0$ such that $p(z)\le M/|z|^{\alpha+1}$ for all $|z|\ge z_1$. Then, $\E[|\zeta|^{\beta}]$ is finite for any $\beta\in(0,\alpha)$.
\item[{{(ii)}}] Suppose that $\E[|\zeta|^\alpha]\le M$ for some $M>0$, and $p$ is eventually monotone, i.e., there exists some $z_1>0$ such that $p$ is nonincreasing over $[z_1,\infty)$ and nondecreasing over $(-\infty,-z_1]$. Then, we have $p(z) \le M (2/|z|)^{\alpha+1}$ for all $|z|\ge 2z_1$.
\end{enumerate} 
\end{proposition}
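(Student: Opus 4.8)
Both parts are concrete analytic statements about a one-dimensional density, so the plan is to reduce each to an elementary estimate on tail integrals.

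For part (i), the goal is to show $\E[|\zeta|^\beta] = \int_{-\infty}^\infty |z|^\beta p(z)\,\mathrm{d}z < \infty$ for $\beta \in (0,\alpha)$. I would split the integral into the bounded region $\{|z| \le z_1\}$ and the tail $\{|z| > z_1\}$. On the bounded region the integrand is at most $z_1^\beta p(z)$, whose integral is bounded by $z_1^\beta$ since $p$ integrates to $1$. On the tail, substitute the hypothesis $p(z) \le M/|z|^{\alpha+1}$ to get $\int_{|z|>z_1} |z|^\beta p(z)\,\mathrm{d}z \le 2M\int_{z_1}^\infty z^{\beta-\alpha-1}\,\mathrm{d}z$, and this last integral converges precisely because $\beta - \alpha - 1 < -1$, i.e. $\beta < \alpha$. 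Summing the two pieces gives a finite bound. This part is routine and I expect no obstacle.

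For part (ii), the hypothesis is a moment bound $\E[|\zeta|^\alpha] \le M$ plus eventual monotonicity of $p$; the conclusion is a pointwise tail bound $p(z) \le M(2/|z|)^{\alpha+1}$ for $|z| \ge 2z_1$. The key idea is the standard ``monotone density'' trick: for $z \ge 2z_1$, monotonicity of $p$ on $[z_1,\infty)$ gives, for any $t$ with $z_1 \le t \le z$, that $p(z) \le p(t)$. Integrating this over the interval $t \in [z/2, z]$ (note $z/2 \ge z_1$, so the whole interval lies in the monotone region) yields
\begin{equation*}
\frac{z}{2}\, p(z) \le \int_{z/2}^{z} p(t)\,\mathrm{d}t \le \int_{z/2}^{z} \Big(\frac{2t}{z}\Big)^{\alpha} p(t)\,\mathrm{d}t \le \Big(\frac{2}{z}\Big)^{\alpha}\int_{z/2}^{z} t^{\alpha} p(t)\,\mathrm{d}t \le \Big(\frac{2}{z}\Big)^{\alpha} M,
\end{equation*}
where the second inequality uses $2t/z \ge 1$ on the interval and $\alpha > 0$, and the last uses $\int t^\alpha p(t)\,\mathrm{d}t \le \E[|\zeta|^\alpha] \le M$. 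Rearranging gives $p(z) \le 2M(2/z)^{\alpha}/z = M (2/z)^{\alpha+1} \cdot (1/2) \cdot 2 = \ldots$; more carefully, $p(z) \le \frac{2}{z}\cdot\big(\frac{2}{z}\big)^{\alpha} M = M\big(\frac{2}{z}\big)^{\alpha+1}$, which is exactly the claim. The case $z \le -2z_1$ is symmetric, using that $p$ is nondecreasing on $(-\infty,-z_1]$ and integrating over $[z, z/2]$.

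The only mildly delicate point — and the place where I would be most careful — is the choice of integration window in part (ii): it must be long enough that the length factor $z/2$ produces the right power of $z$, yet must stay entirely inside the region where monotonicity is available, which forces the lower endpoint to be at least $z_1$ and hence the restriction $|z| \ge 2z_1$. Choosing $[z/2, z]$ threads this exactly and also conveniently makes $2t/z \in [1,2]$, which is what lets the crude bound $1 \le (2t/z)^\alpha$ be inserted for free. No compactness, measurability, or limiting arguments beyond these are needed.
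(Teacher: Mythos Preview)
Your proposal is correct and follows essentially the same route as the paper: identical split for part (i), and the same monotone-density window $[z/2,z]$ for part (ii). The only cosmetic difference is that in part (ii) the paper bounds $\int_{z/2}^{z} p(t)\,\mathrm{d}t$ by the tail probability $\mathbb{P}(|\zeta|\ge z/2)$ and then invokes Markov's inequality, whereas you insert the factor $(2t/z)^\alpha\ge 1$ to pass directly to the moment bound; both yield exactly $M(2/z)^\alpha$ and the rest is identical.
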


\begin{remark}
Eventual monotonicity in Proposition \ref{lem:htbd-density}(ii) has commonly been used as a regularity condition in the study of heavy-tailed distributions; see Theorem 2.5 in \cite{nair2013fundamentals}.
\end{remark}

The next lemma shows that if a density function has a power-law tail decaying at the rate $\mathcal{O}(|z|^{-(\alpha+1)})$ with $\alpha \in (1,2]$, then the convergence in \eqref{truncate-bias} and \eqref{near-sym} holds with rate $\mathcal{O}(\tau^{-(\alpha-1)})$. Its proof is relegated to Section \ref{subsec:pf-bias-var}.

\begin{lemma}\label{lem:bias-asym}
Let $\alpha\in(1,2]$, and $\zeta$ be a real-valued random variable with mean zero and the density function $p(\cdot)$. Suppose that there exist some $z_1>0$ and $M>0$ such that $p(z)\le M/|z|^{\alpha+1}$ for all $|z|\ge z_1$. Then,
\begin{align}\label{decay-rate-a}
\Big|\int^\tau_{-\tau} zp(z)\mathrm{d}z \Big|\le \frac{2M}{(\alpha-1)\tau^{\alpha-1}},\quad \Big|\tau\int^\infty_\tau(p(z)-p(-z))\mathrm{d}z\Big|\le \frac{2M}{\alpha \tau^{\alpha-1}} \qquad\forall \tau\ge z_1. 
\end{align}
\end{lemma}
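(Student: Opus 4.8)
The plan is to estimate the two tail integrals in \eqref{decay-rate-a} directly, using only the power-law bound $p(z)\le M/|z|^{\alpha+1}$ for $|z|\ge z_1$ together with the mean-zero hypothesis $\int_{-\infty}^\infty zp(z)\,\mathrm{d}z=0$.

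For the first bound, I would write $\int_{-\tau}^{\tau} zp(z)\,\mathrm{d}z = -\int_{|z|>\tau} zp(z)\,\mathrm{d}z$, which is valid precisely because the mean is zero (note that by Proposition \ref{lem:htbd-density}(i) the first absolute moment is finite when $\alpha>1$, so the mean is well defined and the splitting is legitimate). Then I bound
\begin{align*}
\Big|\int_{|z|>\tau} zp(z)\,\mathrm{d}z\Big| \le \int_{|z|>\tau}|z|\,p(z)\,\mathrm{d}z \le \int_{|z|>\tau}|z|\cdot\frac{M}{|z|^{\alpha+1}}\,\mathrm{d}z = 2M\int_\tau^\infty z^{-\alpha}\,\mathrm{d}z = \frac{2M}{(\alpha-1)\tau^{\alpha-1}},
\end{align*}
where the last equality uses $\alpha>1$ so the integral converges and $\tau\ge z_1$ so the density bound applies on the whole region of integration.

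For the second bound, I would note that $|p(z)-p(-z)|\le p(z)+p(-z)$, so for $\tau\ge z_1$,
\begin{align*}
\Big|\tau\int_\tau^\infty (p(z)-p(-z))\,\mathrm{d}z\Big| \le \tau\int_\tau^\infty (p(z)+p(-z))\,\mathrm{d}z \le \tau\int_\tau^\infty \frac{2M}{z^{\alpha+1}}\,\mathrm{d}z = \tau\cdot\frac{2M}{\alpha\tau^{\alpha}} = \frac{2M}{\alpha\tau^{\alpha-1}},
\end{align*}
again using $\alpha>0$ for convergence (here even $\alpha\le 2$ is not needed, only $\alpha>1$ for the final rate to be a genuine decay, but the inequality itself holds for all $\alpha>0$).

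There is no real obstacle here; the only subtlety worth double-checking is the justification for replacing the truncated integral by the complement of the tail in the first estimate, which hinges on the mean being zero and the absolute first moment being finite — the latter is exactly what Proposition \ref{lem:htbd-density}(i) provides in the regime $\alpha\in(1,2]$. Everything else is a routine application of the bound $p(z)\le M|z|^{-(\alpha+1)}$ and elementary integration of power functions. I might also remark that a slightly sharper constant could be obtained in the second estimate by not discarding the cancellation in $p(z)-p(-z)$, but since the statement only asks for the stated rate, the crude triangle-inequality bound suffices.
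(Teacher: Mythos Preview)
Your proposal is correct and follows essentially the same route as the paper's own proof: both use the mean-zero hypothesis to rewrite the truncated integral as minus the tail integral and then bound via $p(z)\le M|z|^{-(\alpha+1)}$, and both handle the second inequality by the crude triangle bound $|p(z)-p(-z)|\le p(z)+p(-z)$ followed by the same power-law integration. If anything, your explicit remark that the finiteness of the first absolute moment (via Proposition~\ref{lem:htbd-density}(i)) legitimizes the splitting is a nice touch that the paper leaves implicit.
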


We remark that Assumption \ref{asp:basic} is sufficient, as shown in Section \ref{sec:complexity}, to ensure that the clipped SPGMs obtain approximate solutions to problem \eqref{ucpb} for any target tolerance $\epsilon\in(0,1)$ within a finite number of iterations. To derive oracle complexity with explicit dependence on the tolerance $\epsilon$ and the tail index $\alpha$, we make the following additional regularity assumption on the decay rates associated with \eqref{truncate-bias} and \eqref{near-sym}. Under this assumption, we can establish tighter complexity results (see Table \ref{table:complexity}) and, more importantly, handle problems with tail index $\alpha\in(0,1]$ with explicit oracle complexity, a regime that, to the best of our knowledge, has not been studied previously.

\begin{assumption}\label{asp:decay}
There exist $\Gamma_1,\Gamma_2>0$ and $\tau_{(2)}\ge\tau_{(1)}$ such that the density functions $\{p_{i,x}\}$ satisfy 
\begin{equation*}
\sup_{\substack{x \in \mathrm{dom}\,h, \\ 1 \le i \le n}}\Big\{\Big|\int^\tau_{-\tau}up_{i,x}(u)\mathrm{d}u \Big|\Big\}\le \frac{\Gamma_1}{\tau^{\alpha}},\quad \sup_{\substack{x \in \mathrm{dom}\,h, \\ 1 \le i \le n}}\Big\{\Big|\tau\int^\infty_\tau(p_{i,x}(u)-p_{i,x}(-u))\mathrm{d}u\Big|\Big\}\le\frac{\Gamma_2}{\tau^{\alpha}}
\end{equation*}
for all $\tau\ge\tau_{(2)}$, where $\alpha\in(0,2]$ and $\tau_{(1)}>0$ are given in Assumption \ref{asp:basic}(c).
\end{assumption}

\begin{remark}
(i) Assumption \ref{asp:decay} indicates that \eqref{truncate-bias} and \eqref{near-sym} converge with a sublinear rate of convergence. It is readily seen that this assumption applies to noise distributions with symmetric densities, including the standard Cauchy and Lévy stable distributions. Moreover, it applies to noise with a controlled measure of near symmetry and a potentially infinite mean.

(ii) It is not restrictive to assume that the index $\alpha$ in Assumption \ref{asp:decay} coincides with that in Assumption~\ref{asp:basic}(c). Indeed, if Assumptions \ref{asp:basic}(c) and \ref{asp:decay} hold with different indices $\alpha_1,\alpha_2\in(0,2]$, then both assumptions also hold with $\alpha=\min\{\alpha_1,\alpha_2\}$. 
\end{remark}

We next give formal definitions for approximate solutions of problem \eqref{ucpb}.

\begin{definition}\label{def:sol}
Let $\epsilon\in(0,1)$. We say that 
\begin{itemize}
\item $x\in\R^n$ is an $\epsilon$-stochastic optimal solution of \eqref{ucpb} if it satisfies $\E[F(x)-F^*]\le\epsilon$;
\item $x\in\R^n$ is an $\epsilon$-stochastic stationary point of \eqref{ucpb} if it satisfies $\E[\mathrm{dist}^2(0,\partial F(x))]\le\epsilon^2$.
\end{itemize}
\end{definition}



\section{Bias-Variance Trade-off for Clipping}\label{sec:bias-var}

In this section, we establish the bias-variance trade-off for clipping under noise with varying tail indices, including noise with an infinite mean.

To motivate the bias-variance trade-off, we recall that the classical analysis of SFOMs under light-tailed noise typically assumes that the noise has zero mean and finite variance. Nonetheless, when the noise is heavy-tailed, its variance, and sometimes even its mean, is infinite. 
If one applies clipping to heavy-tailed stochastic gradients to trim extreme values, the resulting gradient has finite mean and variance. However, while clipping controls both the mean and variance to be finite, it inevitably introduces bias: the clipped stochastic gradient cannot inherit the unbiasedness of the original stochastic gradient (before clipping). If one views the original stochastic gradient as applying clipping with an infinite threshold, then the variance increases with the clipping threshold, while the bias decreases. This raises the following questions:
\begin{itemize}
\item What is the quantitative relationship between the bias and the clipping threshold, as well as between the variance and the clipping threshold?
\item By adjusting the clipping threshold, one can trade off variance and bias. How does this bias-variance trade-off affect the oracle complexity of algorithms?
\end{itemize}
We address the first question in this section and answer the second by applying the bias-variance trade-off to establish the oracle complexity of clipped SPGMs in the next section.

Before delving into a formal analysis of the bias-variance trade-off, we provide an illustrative visualization of how the bias and variance vary with the clipping threshold for noise with different tail indices. Specifically, we denote the bias and variance of a one-dimensional clipped estimator as follows:
\begin{align}\label{def:bias-variance}
B(\tau) = \big|\E\big[\Pi_{[-\tau,\tau]}(a+\zeta)\big] - a\big|,\quad V(\tau) = \E\big[\big(\Pi_{[-\tau,\tau]}(a+\zeta) - a \big)^2\big],
\end{align}
where $a\in\R$ is a constant, $\zeta$ is a random variable modeling noise, and $\tau>0$ is the clipping threshold. We estimate the expectations in \eqref{def:bias-variance} by taking the average of randomly generated samples, and Figure \ref{fig:bias-var-trade-off} shows the trends of bias and variance as functions of the clipping threshold under standard Gaussian, symmetric Lévy stable, and standard Cauchy noise. We can observe from this figure that, as the clipping threshold increases from 0 to 100, the bias approaches zero while the variance increases toward infinity, except in the Gaussian noise case, where the variance remains finite as the threshold goes to infinity. Observe that the tail indices increase from left to right: $2$, $1.5$, $1$, and $0.5$. Moreover, the variance for the last three heavy-tailed noise cases shows different growth patterns---concave, linear, and convex, respectively---meaning the increase becomes steeper from left to right.
\begin{figure}[ht]
\includegraphics[width=\linewidth]{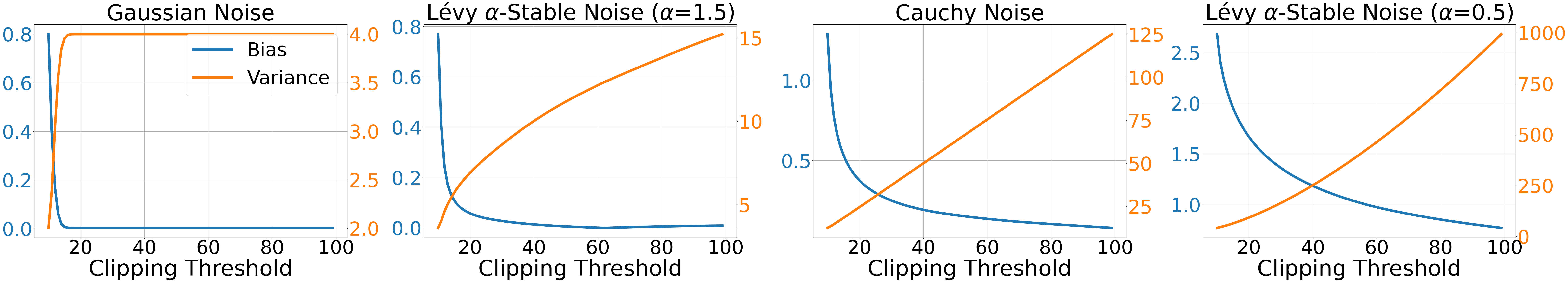}
\label{fig:bias-var-trade-off}
\caption{Bias-Variance Trade-offs under Different Noise}
\end{figure}

Figure \ref{fig:bias-var-trade-off} shows clear trends in bias and variance with respect to the clipping threshold. For a rigorous analysis, the following lemma establishes upper bounds for bias and variance as functions of the clipping threshold. Its proof is relegated to Section \ref{subsec:pf-bias-var}.
\begin{lemma}\label{lem:bias-var-decomp}
Let $\zeta$ be a real-valued random variable with the density function $p(\cdot)$, and $a\in\R$ and $\alpha\in(0,2]$ be given. Suppose that $\E[|\zeta|^\alpha]\le M_1$ holds for some $M_1>0$, and that there exists $z_1>0$ and $M_2>0$ such that $p(z)\le M_2/|z|^{\alpha+1}$ for all $|z|\ge z_1$. Then, for all $\tau\ge z_1 + |a|$, we have
\begin{align}
\big|\E\big[\Pi_{[-\tau,\tau]}(a+\zeta)\big] - a\big| & \le \Big|\int_{-\tau}^\tau zp(z)\mathrm{d}z\Big| + \Big|\tau\int_\tau^\infty (p(z) - p(-z)) \mathrm{d}z\Big| +  \frac{2M_2|a|}{(\tau-|a|)^{\alpha}}\Big(\frac{|a|}{\tau - |a|} + \frac{1}{\alpha}\Big),\label{ineq:1dim-bias}\\
\E\big[\big(\Pi_{[-\tau,\tau]}(a+\zeta)- a\big)^2\big] & \le M_1 (\tau+|a|)^{2-\alpha} + \frac{2M_2(\tau^2 + a^2)}{\alpha(\tau-|a|)^\alpha}.\label{ineq:1dim-var}
\end{align}
\end{lemma}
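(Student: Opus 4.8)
The plan is to decompose the clipped quantity $\Pi_{[-\tau,\tau]}(a+\zeta)$ according to where $a+\zeta$ lands relative to the interval $[-\tau,\tau]$, and then estimate the contributions of the "interior" region and the two "overflow" regions separately. Write $Y = a+\zeta$; on the event $\{|Y|\le\tau\}$ the clip is inactive and $\Pi_{[-\tau,\tau]}(Y) = Y$, while on $\{Y>\tau\}$ it equals $\tau$ and on $\{Y<-\tau\}$ it equals $-\tau$. A change of variables $z = y-a$ converts everything into integrals against $p$. For the bias, I would first handle the "idealized" case where the overflow thresholds for $\zeta$ are exactly symmetric, i.e. pretend $Y>\tau \iff \zeta>\tau$; that contributes exactly $\int_{-\tau}^{\tau} z\,p(z)\,dz + \tau\int_{\tau}^{\infty}(p(z)-p(-z))\,dz$ to $\E[\Pi_{[-\tau,\tau]}(Y)] - a$ (this is a short direct computation: the interior term is the truncated mean, and the two overflow terms $\tau\Pr(\zeta>\tau) - \tau\Pr(\zeta<-\tau)$ combine into the stated symmetry integral). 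The remaining error comes from the mismatch between the true overflow events $\{|a+\zeta|>\tau\}$ and the shifted ones $\{|\zeta|>\tau\}$ — i.e. $\zeta$ lying in the symmetric-difference bands, which are contained in $[-\tau-|a|,\,-\tau+|a|]\cup[\tau-|a|,\,\tau+|a|]$ (roughly, bands of width $2|a|$ near $\pm\tau$). On each such band I would bound the integrand crudely: the discrepancy between the clipped value and $z$ is at most $|a|$ in magnitude there, and the probability mass of the band is controlled via the tail bound $p(z)\le M_2/|z|^{\alpha+1}$ for $|z|\ge z_1$ (legitimate since $\tau - |a|\ge z_1$), giving mass $\le M_2\int_{\tau-|a|}^{\tau+|a|} z^{-(\alpha+1)}\,dz \le 2M_2|a|/(\tau-|a|)^{\alpha+1}$ per band, and I'd also need the truncated-mean-type correction giving the $\frac{1}{\alpha}$ term. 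Collecting these yields the $\frac{2M_2|a|}{(\tau-|a|)^\alpha}\bigl(\frac{|a|}{\tau-|a|} + \frac{1}{\alpha}\bigr)$ error term in \eqref{ineq:1dim-bias}.

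**For the variance**, the random variable $\bigl(\Pi_{[-\tau,\tau]}(a+\zeta) - a\bigr)^2$ is supported (as a function of $\zeta$) with two regimes. On $\{|a+\zeta|\le\tau\}$ it equals $\zeta^2$; on the overflow events it equals $(\tau-a)^2 \le (\tau+|a|)^2$ or $(-\tau-a)^2\le(\tau+|a|)^2$. A clean way to organize the bound: split the $\zeta$-axis at $|z| = \tau+|a|$. For $|z|\le\tau+|a|$, wherever the clip is active the clipped deviation is at most $\tau+|a|\le|a+\zeta\text{-something}|$... more carefully, on $\{|a+\zeta|\le\tau\}$ we have deviation $|\zeta|\le\tau+|a|$ hence deviation${}^2\le|\zeta|^\alpha(\tau+|a|)^{2-\alpha}$ since $2-\alpha\ge 0$; integrating against $p$ over this region and using $\E[|\zeta|^\alpha]\le M_1$ gives the $M_1(\tau+|a|)^{2-\alpha}$ term. (If $|\zeta|\le 1$ one should be slightly careful that $|\zeta|^2\le|\zeta|^\alpha$ fails, but then deviation${}^2\le(\tau+|a|)^{2-\alpha}\cdot|\zeta|^\alpha$ still needs $|\zeta|\ge$ something; the cleaner route is deviation${}^2 = |\zeta|^2 = |\zeta|^\alpha|\zeta|^{2-\alpha}\le|\zeta|^\alpha(\tau+|a|)^{2-\alpha}$ using $|\zeta|\le\tau+|a|$ directly, valid for all $\zeta$ — no sign issue.) For the overflow contribution, deviation${}^2\le(\tau+|a|)^2\le 2(\tau^2+a^2)$ and the overflow probability $\Pr(|a+\zeta|>\tau)\le\Pr(|\zeta|>\tau-|a|)\le M_2\int_{\tau-|a|}^\infty z^{-(\alpha+1)}\,dz = \frac{M_2}{\alpha}(\tau-|a|)^{-\alpha}$, which combines to the second term in \eqref{ineq:1dim-var} after a minor regrouping of the constants (the factor $2$ from $(\tau+|a|)^2\le 2(\tau^2+a^2)$ being absorbed so the stated bound reads $\frac{2M_2(\tau^2+a^2)}{\alpha(\tau-|a|)^\alpha}$). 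I should double-check the interior-region variance is not double counted — it is exactly $\E[\zeta^2\mathbf{1}_{|a+\zeta|\le\tau}]\le\E[|\zeta|^\alpha]\,(\tau+|a|)^{2-\alpha}$ on the region where $|\zeta|\le\tau+|a|$, which includes the whole inactive event, so the split is consistent.

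**The main obstacle** I anticipate is the bias term: getting the overflow-mismatch error into exactly the stated form $\frac{2M_2|a|}{(\tau-|a|)^\alpha}\bigl(\frac{|a|}{\tau-|a|}+\frac1\alpha\bigr)$ requires a careful accounting of two distinct error sources — (i) the band where the clip switches on/off differently for $Y$ versus $\zeta$, contributing an $|a|$-times-band-mass term that produces the $\frac{|a|}{\tau-|a|}$ factor, and (ii) the difference between $\int_{\tau-|a|}^\infty$ and $\int_\tau^\infty$ type tail integrals producing the $\frac1\alpha$ factor — and one must track signs so that the genuinely symmetric part is cleanly isolated as the first two terms of \eqref{ineq:1dim-bias} rather than being absorbed into the error. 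The variance bound, by contrast, is essentially a two-line estimate once the split at $|z|=\tau+|a|$ is chosen, with the only subtlety being the harmless use of $|\zeta|^{2-\alpha}\le(\tau+|a|)^{2-\alpha}$ on the inactive event and $2-\alpha\ge 0$ on the support. Everything uses only $\E[|\zeta|^\alpha]\le M_1$, the tail bound for $|z|\ge z_1$, and the hypothesis $\tau\ge z_1+|a|$ (which guarantees $\tau-|a|\ge z_1$, so the tail bound applies on all the relevant overflow bands).
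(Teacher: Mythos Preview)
Your approach is the paper's. For the bias the paper writes
$\E[\Pi_{[-\tau,\tau]}(a+\zeta)]-a = \int_{-\tau-a}^{\tau-a}zp(z)\,dz + (\tau-a)\int_{\tau-a}^\infty p(z)\,dz - (\tau+a)\int_{-\infty}^{-\tau-a}p(z)\,dz$
and then algebraically regroups to isolate $\int_{-\tau}^\tau zp(z)\,dz + \tau\int_\tau^\infty(p(z)-p(-z))\,dz$; the leftover correction pieces are exactly your sources (i) and (ii), namely $\int_{-\tau-a}^{-\tau}(z+\tau)p(z)\,dz - \int_{\tau-a}^\tau(z-\tau)p(z)\,dz$ (each term bounded by $a^2M_2/(\tau-|a|)^{\alpha+1}$ via length $|a|$ times $|z-\tau|\le|a|$ times $\sup p$) and $-a\bigl(\int_{-\infty}^{-\tau-a}p(z)\,dz+\int_{\tau-a}^\infty p(z)\,dz\bigr)$ (bounded by $2|a|M_2/(\alpha(\tau-|a|)^\alpha)$). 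Doing this exact regrouping rather than bounding a symmetric difference of events is what makes the constants come out as stated; note in particular that the actual bands are $[\tau-a,\tau]$ and $[-\tau-a,-\tau]$ of width $|a|$, not $2|a|$.

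One slip in your variance argument: $\Pr(|\zeta|>\tau-|a|)$ involves both tails, so the density bound gives $\frac{2M_2}{\alpha(\tau-|a|)^\alpha}$, not $\frac{M_2}{\alpha(\tau-|a|)^\alpha}$. Combined with your uniform overflow-deviation bound $(\tau+|a|)^2\le 2(\tau^2+a^2)$ this would overshoot the stated constant by a factor of $2$. The paper gets the sharp constant by keeping the two overflow contributions separate --- the right overflow contributes $(\tau-a)^2\int_{\tau-a}^\infty p(z)\,dz$ and the left $(\tau+a)^2\int_{-\infty}^{-\tau-a}p(z)\,dz$, with each one-sided tail integral bounded by $\frac{M_2}{\alpha(\tau-|a|)^\alpha}$ --- and then using the identity $(\tau-a)^2+(\tau+a)^2=2(\tau^2+a^2)$. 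Your interior estimate $z^2 = |z|^\alpha|z|^{2-\alpha}\le|z|^\alpha(\tau+|a|)^{2-\alpha}$ on $[-\tau-a,\tau-a]$ is exactly what the paper does.
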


In view of Lemma \ref{lem:bias-var-decomp}, we observe that for a clipped stochastic estimator, its bias $B(\tau)$ and variance $V(\tau)$, defined in \eqref{def:bias-variance}, satisfy the following upper bounds:
\begin{align*}
B(\tau) \le \Big|\int_{-\tau}^\tau zp(z)\mathrm{d}z\Big| + \Big|\tau\int_\tau^\infty (p(z) - p(-z)) \mathrm{d}z\Big| + \mathcal{O}(\tau^{-\alpha}),\quad V(\tau) \le \mathcal{O}(\tau^{2-\alpha}).  
\end{align*}
We further notice that
\begin{itemize}
    \item The bias can be decomposed into: the truncated expectation of the noise over $[-\tau, \tau]$, a symmetry measure of the tail over $[\tau, \infty]$, and a diminishing term. By this and Lemma \ref{lem:bias-asym}, one can see that $B(\tau)\to0$ as $\tau\to\infty$ when $\alpha\in(1,2]$. Also, when the first two terms in the upper bound for $B(\tau)$ converge to zero, we obtain $B(\tau)\to0$ as $\tau\to\infty$. In either case, the clipped estimator becomes nearly unbiased when the clipping threshold is sufficiently large.
    \item The growth curve of the variance reflects the patterns shown in Figure \ref{fig:bias-var-trade-off}; specifically, it is concave, linear, and convex for $\alpha\in(0,1)$, $\alpha=1$, and $\alpha\in(1,2)$, respectively.
\end{itemize}

We are now ready to analyze the bias-variance trade-off for the clipped stochastic gradient. We first introduce the bias of the clipped stochastic gradient, and the set of clipping thresholds that enforce a sufficiently small bias, as follows:
\begin{align}
\Delta(\tau)& = \sup_{x\in\mathrm{dom}\,h}\big\{\|\E[G_{\tau}(x;\xi)]-\nabla f(x)\|\big\}\qquad\forall \tau\ge0,\label{def:Delta-tau}\\
\mathcal{T}(\varepsilon)& = \{\tau\ge \tau_{(1)}: \Delta(\tau)\le \varepsilon\}\qquad\forall \varepsilon>0,\label{def:T-eps}
\end{align}
where $\tau_{(1)}$ is defined in \eqref{def:tau-underline}. For convenience, we define an upper bound for the variance as
\begin{align}\label{def:sigma-tau}
\sigma^2(\tau) = n\bigg[\Lambda_1 (\tau + U_f)^{2-\alpha} + \frac{2\Lambda_2(\tau^2 + U_f^2)}{\alpha(\tau -U_f)^\alpha}\bigg]\qquad\forall \tau\ge\tau_{(1)},    
\end{align}
where $\alpha,\Lambda_1,\Lambda_2$ are given in Assumption \ref{asp:basic}(c), and $U_f$ is defined in \eqref{def:Uf-Dh}.

The following theorem shows that under Assumption \ref{asp:basic}(c), $\mathcal{T}(\varepsilon)$ is nonempty for any $\varepsilon>0$ and the variance of the clipped stochastic gradient is bounded by $\sigma^2(\tau)$. Moreover, when $\alpha\in(1,2]$, there exists $\tau_1(\varepsilon)\in\mathcal{T}(\varepsilon)$ with $\tau_1(\varepsilon)=\mathcal{O}(\varepsilon^{-1/(\alpha-1)})$. Its proof is relegated to Section \ref{subsec:pf-bias-var}.

\begin{theorem}\label{thm:trade-off1}
Suppose that Assumption \ref{asp:basic}(c) holds. Let $u_1$, $\alpha$, $\Lambda_1$, and $\Lambda_2$ be given in Assumption \ref{asp:basic}(c), and $\tau_{(1)}$, $U_f$, $\mathcal{T}(\cdot)$ and $\sigma^2(\cdot)$ be defined in \eqref{def:Uf-Dh}, \eqref{def:tau-underline}, \eqref{def:T-eps} and \eqref{def:sigma-tau}, respectively. Then the following statements hold.
\begin{enumerate}
\item[{\rm (i)}] $\mathcal{T}(\varepsilon)$ is nonempty for any $\varepsilon>0$, and moreover, 
\begin{align}  \label{ineq:upbd-var-lem}
\sup_{x\in\mathrm{dom}\,h}\big\{\E\big[\big\|G_{\tau}(x;\xi) - \nabla f(x)\big\|^2\big]\big\} \le \sigma^2(\tau) \qquad\forall \tau\ge\tau_{(1)}.  
\end{align}
\item[{\rm (ii)}] When $\alpha\in(1,2]$, for all $\varepsilon\in(0,1)$, we have $\tau_{1}(\varepsilon)\in \mathcal{T}(\varepsilon)$ with $\tau_{1}(\varepsilon)$ defined as
\begin{align}\label{def:tau1-eps}  
\tau_{1}(\varepsilon) = \max\Big\{\tau_{(1)},\Big(\frac{6\sqrt{n}\Lambda_2}{(\alpha-1)\varepsilon}\Big)^{\frac{1}{\alpha-1}},U_f + \Big(\frac{4\sqrt{n}\Lambda_2 U_f\tau_{(1)}}{u_1\varepsilon}\Big)^{\frac{1}{\alpha}}\Big\}.
\end{align}
\end{enumerate}
\end{theorem}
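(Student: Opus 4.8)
The plan is to prove Theorem~\ref{thm:trade-off1} by working coordinatewise and invoking Lemma~\ref{lem:bias-var-decomp} for each coordinate with the right substitutions. The key observation is that $G_\tau(x;\xi)$ clips $G(x;\xi) = \nabla f(x) + N(x;\xi)$ coordinatewise, so the $i$th coordinate of $G_\tau(x;\xi)$ is exactly $\Pi_{[-\tau,\tau]}(a_i + \zeta_i)$ with $a_i = \nabla_i f(x)$ and $\zeta_i = N_i(x;\xi)$ distributed with density $p_{i,x}$. The constant $a_i$ satisfies $|a_i| \le U_f$ by \eqref{def:Uf-Dh}, and the hypotheses of Lemma~\ref{lem:bias-var-decomp} are met with $M_1 = \Lambda_1$, $M_2 = \Lambda_2$, $z_1 = u_1$ (from Assumption~\ref{asp:basic}(c)), and the restriction $\tau \ge z_1 + |a_i|$ is guaranteed once $\tau \ge \tau_{(1)} = u_1 + U_f$.

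For part~(i), the variance bound \eqref{ineq:upbd-var-lem} follows by summing \eqref{ineq:1dim-var} over the $n$ coordinates: each coordinate contributes at most $\Lambda_1(\tau + |a_i|)^{2-\alpha} + 2\Lambda_2(\tau^2 + a_i^2)/(\alpha(\tau-|a_i|)^\alpha)$, and since $|a_i| \le U_f$ and the map $t \mapsto \tau - t$ decreases (so $\tau - |a_i| \ge \tau - U_f$) while the numerators increase in $|a_i|$, each term is at most the $i$th summand of $\sigma^2(\tau)/n$; here I need the elementary monotonicity facts that for $\tau \ge \tau_{(1)}$ both $(\tau+t)^{2-\alpha}$ and $(\tau^2+t^2)/(\tau-t)^\alpha$ are nondecreasing in $t \in [0, U_f]$, which is immediate. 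For the nonemptiness of $\mathcal{T}(\varepsilon)$, I would use the bias bound \eqref{ineq:1dim-bias}: summing over coordinates (with $\|\cdot\| \le \sqrt{n}\max_i|\cdot|$, or more simply bounding the Euclidean norm of the bias vector by $\sqrt{n}$ times the max coordinate bias), the three contributions are (a) $|\int_{-\tau}^\tau u p_{i,x}(u)\,du|$, (b) $|\tau\int_\tau^\infty(p_{i,x}(u)-p_{i,x}(-u))\,du|$, and (c) the explicit $\mathcal{O}(\tau^{-\alpha})$ remainder. Conditions \eqref{truncate-bias} and \eqref{near-sym} say the suprema over $x, i$ of (a) and (b) tend to $0$ as $\tau \to \infty$, and the remainder (c) tends to $0$ as well; hence $\Delta(\tau) \to 0$, so $\mathcal{T}(\varepsilon) \ne \emptyset$ for every $\varepsilon > 0$.

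For part~(ii), when $\alpha \in (1,2]$ I would instead bound terms (a) and (b) quantitatively using Lemma~\ref{lem:bias-asym}, which gives $|\int_{-\tau}^\tau u p_{i,x}(u)\,du| \le 2\Lambda_2/((\alpha-1)\tau^{\alpha-1})$ and $|\tau\int_\tau^\infty(p_{i,x}(u)-p_{i,x}(-u))\,du| \le 2\Lambda_2/(\alpha\tau^{\alpha-1})$ for $\tau \ge u_1$ — but note these require the coordinate noise to have mean zero, which is \emph{not} assumed; instead I suspect the right move is to apply Lemma~\ref{lem:bias-asym} not directly (since mean-zero may fail) but to re-derive the bound on (a)+(b) from the density tail condition \eqref{density-tail} alone, exactly as in the proof of Lemma~\ref{lem:bias-asym} but keeping the truncated-mean term; alternatively, since $\nabla_i f(x) + N_i$ has the clipped estimator structure, one applies Lemma~\ref{lem:bias-var-decomp} and then bounds the first two terms there via the density tail. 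The remainder term (c) is $\tfrac{2\Lambda_2|a_i|}{(\tau-|a_i|)^\alpha}(\tfrac{|a_i|}{\tau-|a_i|}+\tfrac1\alpha) = \mathcal{O}(\tau^{-\alpha}) = o(\tau^{-(\alpha-1)})$. Collecting, one gets $\Delta(\tau) \le \sqrt{n}\big(\tfrac{2\Lambda_2}{(\alpha-1)\tau^{\alpha-1}} + \tfrac{2\Lambda_2}{\alpha\tau^{\alpha-1}} + (\text{remainder})\big)$, and then I solve for $\tau$ making each piece at most $\varepsilon/3$ (or a convenient split): the term $6\sqrt{n}\Lambda_2/((\alpha-1)\tau^{\alpha-1}) \le \varepsilon$ gives the second entry in the max in \eqref{def:tau1-eps}, and the remainder term being at most $\varepsilon$ gives, after bounding $\tau - U_f \ge$ a constant multiple of $\tau$ and $\tau \ge \tau_{(1)}$, the third entry $U_f + (4\sqrt{n}\Lambda_2 U_f\tau_{(1)}/(u_1\varepsilon))^{1/\alpha}$; the first entry $\tau_{(1)}$ is just the floor from Assumption~\ref{asp:basic}(c).

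The main obstacle I anticipate is the bookkeeping in part~(ii): matching the exact constants in the closed-form expression \eqref{def:tau1-eps} requires carefully choosing how to split $\varepsilon$ among the bias pieces and how to lower-bound $\tau - U_f$ in terms of $\tau$ and $\tau_{(1)}$ (e.g.\ using $\tau \ge \tau_{(1)} = u_1 + U_f$ to write $\tau - U_f \ge u_1 \ge u_1\tau/\tau_{(1)} \cdot (\tau_{(1)}/\tau)$, or more crudely $\tau - U_f \ge (u_1/\tau_{(1)})\tau$), and then inverting the resulting power-law inequalities cleanly. The potential subtlety about whether Lemma~\ref{lem:bias-asym} applies verbatim (it assumes mean zero) is the one conceptual point to get right — the safe route is to use only the density-tail part of its proof, which needs no moment-zero assumption, together with the already-separated truncated-mean term from \eqref{ineq:1dim-bias}.
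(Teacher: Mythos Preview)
Your approach is correct and matches the paper's proof exactly: apply Lemma~\ref{lem:bias-var-decomp} coordinatewise with $(a,\zeta,M_1,M_2,z_1)=(\nabla_i f(x),N_i(x;\xi),\Lambda_1,\Lambda_2,u_1)$, sum for the variance, and use \eqref{truncate-bias}--\eqref{near-sym} plus the $\mathcal{O}(\tau^{-\alpha})$ remainder for $\mathcal{T}(\varepsilon)\ne\emptyset$. Your mean-zero worry is unfounded---when $\alpha\in(1,2]$, condition \eqref{alpha-moment} gives $N_i(x;\xi)$ a finite mean and \eqref{truncate-bias} forces it to be zero, so Lemma~\ref{lem:bias-asym} applies verbatim; the paper then combines its two outputs via $\tfrac{2}{\alpha-1}+\tfrac{2}{\alpha}=\tfrac{2(2\alpha-1)}{\alpha(\alpha-1)}\le\tfrac{3}{\alpha-1}$, simplifies the remainder using $\tau-U_f\ge u_1$ and $1/\alpha\le1$ to get $\tfrac{2\Lambda_2 U_f\tau_{(1)}}{u_1(\tau-U_f)^\alpha}$, and splits $\varepsilon$ as $\varepsilon/2+\varepsilon/2$ (not thirds) to obtain the second and third entries of \eqref{def:tau1-eps}.
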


The following theorem shows that if we further impose Assumption \ref{asp:decay}, there exists $\tau_2(\varepsilon)\in\mathcal{T}(\varepsilon)$ with $\tau_2(\varepsilon)=\mathcal{O}(\varepsilon^{-1/\alpha})$, which improves upon the rate in \eqref{def:tau1-eps}. Its proof is deferred to Section \ref{subsec:pf-bias-var}.

\begin{theorem}\label{thm:trade-off2}
Suppose that Assumptions \ref{asp:basic}(c) and \ref{asp:decay} hold. Let $u_1$, $\alpha$, $\Lambda_1$, and $\Lambda_2$ be given in Assumption \ref{asp:basic}(c), $\tau_{(2)}$, $\Gamma_1$, and $\Gamma_2$ be given in Assumption \ref{asp:decay}, and $U_f$, $\mathcal{T}(\cdot)$ and $\sigma^2(\cdot)$ be defined in \eqref{def:Uf-Dh}, \eqref{def:T-eps} and \eqref{def:sigma-tau}, respectively. Then, for all $\varepsilon\in(0,1)$, we have $\tau_2(\varepsilon)\in\mathcal{T}(\varepsilon)$ with $\tau_2(\varepsilon)$ defined as
\begin{align}\label{def:tau-eps-2}
\tau_{2}(\varepsilon) := \max\Big\{\tau_{(2)}, \Big(\frac{2\sqrt{n}(\Gamma_1+\Gamma_2)}{\varepsilon}\Big)^{\frac{1}{\alpha}},U_f + \Big(\frac{4\sqrt{n}\Lambda_2 U_f(U_f/u_1 + 1/\alpha)}{\varepsilon}\Big)^{\frac{1}{\alpha}}\Big\}.
\end{align}    
\end{theorem}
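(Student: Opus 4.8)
The plan is to reduce the vector-valued bias $\Delta(\tau_2(\varepsilon))$ to a coordinate-wise estimate via the one-dimensional bound of Lemma \ref{lem:bias-var-decomp}, then insert the quantitative decay rates from Assumption \ref{asp:decay} and check that each resulting term is at most $\varepsilon/2$. Fix $\varepsilon\in(0,1)$ and write $\tau=\tau_2(\varepsilon)$. By construction $\tau\ge\tau_{(2)}\ge\tau_{(1)}=u_1+U_f$, which already settles the requirement $\tau\ge\tau_{(1)}$ in \eqref{def:T-eps} and, crucially, gives $\tau-U_f\ge u_1$. Now fix $x\in\mathrm{dom}\,h$ and $1\le i\le n$ and apply Lemma \ref{lem:bias-var-decomp} to $\zeta=N_i(x;\xi)$ (with density $p_{i,x}$), $a=\nabla_i f(x)$, and the parameters $M_1=\Lambda_1$, $M_2=\Lambda_2$, $z_1=u_1$: the hypotheses $\E[|\zeta|^\alpha]\le\Lambda_1$ and $p_{i,x}(u)\le\Lambda_2|u|^{-(\alpha+1)}$ for $|u|\ge u_1$ are precisely \eqref{alpha-moment} and \eqref{density-tail}, and $\tau\ge u_1+U_f\ge z_1+|a|$ since $|a|\le U_f$. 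Noting that coordinate-wise $\ell_\infty$-clipping means $(G_\tau(x;\xi))_i=\Pi_{[-\tau,\tau]}(a+\zeta)$, inequality \eqref{ineq:1dim-bias} yields
\begin{align*}
\big|\E[(G_\tau(x;\xi))_i]-\nabla_i f(x)\big|\le\Big|\int_{-\tau}^\tau u\,p_{i,x}(u)\,\mathrm{d}u\Big|+\Big|\tau\int_\tau^\infty(p_{i,x}(u)-p_{i,x}(-u))\,\mathrm{d}u\Big|+\frac{2\Lambda_2|a|}{(\tau-|a|)^\alpha}\Big(\frac{|a|}{\tau-|a|}+\frac1\alpha\Big).
\end{align*}

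Next I would bound each of the three terms uniformly over $x$ and $i$. Since $\tau\ge\tau_{(2)}$, Assumption \ref{asp:decay} bounds the first two terms by $\Gamma_1/\tau^\alpha$ and $\Gamma_2/\tau^\alpha$. For the residual term, each of the factors $|a|$, $(\tau-|a|)^{-\alpha}$, and $|a|/(\tau-|a|)+1/\alpha$ is nonnegative and nondecreasing in $|a|$ on $[0,\tau)$, so the product is maximized at $|a|=U_f$; combining with $\tau-U_f\ge u_1$ gives the bound $2\Lambda_2 U_f(U_f/u_1+1/\alpha)/(\tau-U_f)^\alpha$. Passing from coordinates to the Euclidean norm via $\|v\|\le\sqrt{n}\,\|v\|_\infty$ then gives
\begin{align*}
\Delta(\tau)\le\frac{\sqrt{n}\,(\Gamma_1+\Gamma_2)}{\tau^\alpha}+\frac{2\sqrt{n}\,\Lambda_2 U_f(U_f/u_1+1/\alpha)}{(\tau-U_f)^\alpha}.
\end{align*}

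Finally, the second component of the maximum defining $\tau_2(\varepsilon)$ in \eqref{def:tau-eps-2} forces $\tau^\alpha\ge 2\sqrt{n}(\Gamma_1+\Gamma_2)/\varepsilon$, so the first term above is at most $\varepsilon/2$; the third component forces $\tau-U_f\ge(4\sqrt{n}\Lambda_2 U_f(U_f/u_1+1/\alpha)/\varepsilon)^{1/\alpha}$, hence $(\tau-U_f)^\alpha\ge 4\sqrt{n}\Lambda_2 U_f(U_f/u_1+1/\alpha)/\varepsilon$, so the second term is at most $\varepsilon/2$. Adding gives $\Delta(\tau_2(\varepsilon))\le\varepsilon$, i.e.\ $\tau_2(\varepsilon)\in\mathcal{T}(\varepsilon)$. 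I expect the only delicate point to be the monotonicity reduction of the residual term to $|a|=U_f$ together with the bookkeeping that distinguishes the denominator $\tau-U_f$ from $\tau$; this is exactly what forces a separate ``$U_f+(\cdots)^{1/\alpha}$'' branch in the definition of $\tau_2(\varepsilon)$ rather than a single power of $\tau$. Everything else is a direct substitution of the already-established Lemma \ref{lem:bias-var-decomp} and Assumption \ref{asp:decay}, and the $\tau_{(2)}$ branch of the maximum is present only to validate Assumption \ref{asp:decay} and to guarantee $\tau\ge\tau_{(1)}$.
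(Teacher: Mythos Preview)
Your proposal is correct and follows essentially the same route as the paper: both apply the coordinate-wise bias bound from Lemma~\ref{lem:bias-var-decomp}, replace $|a|$ by $U_f$ via monotonicity (using $\tau-U_f\ge u_1$ to simplify $U_f/(\tau-U_f)$ to $U_f/u_1$), pass to the Euclidean norm via the $\sqrt{n}$ factor, insert the $\Gamma_1/\tau^\alpha$ and $\Gamma_2/\tau^\alpha$ rates from Assumption~\ref{asp:decay}, and then verify that the two resulting terms are each at most $\varepsilon/2$ for $\tau=\tau_2(\varepsilon)$. The only cosmetic difference is that the paper cites the intermediate inequality \eqref{upbd-bias-decom} already derived in the proof of Theorem~\ref{thm:trade-off1}, whereas you re-derive it directly.
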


\section{Complexity of Clipped Stochastic Proximal Gradient Methods}\label{sec:complexity}

In this section, we use the bias-variance trade-off to establish the first-order oracle complexity of a clipped SPGM for solving convex problems and a clipped SGPM with momentum for nonconvex problems.

\subsection{Complexity of a Clipped SPGM for Convex Problems}\label{subsec:complexity-cvx}

In this subsection, we establish the first-order oracle complexity of a clipped SPGM for solving convex and strongly convex problems, respectively. Throughout this subsection, we let $x^*$ denote an optimal solution of \eqref{ucpb}, and we make the following assumption regarding the convexity of $f$.
\begin{assumption}\label{asp:cvx}
The function $f$ is convex on $\mathrm{dom}\,h$, i.e., there exists $\mu_f\ge0$ such that 
\begin{align*}
f(y)\ge f(x) + \nabla f(x)^T(y-x) + \frac{\mu_f}{2}\|y-x\|^2\qquad\forall x,y\in\mathrm{dom}\,h.    
\end{align*}    
\end{assumption}

The clipped SPGM is an extension of stochastic approximation algorithms (see, e.g., \cite{nemirovski2009robust}) in which stochastic gradients are replaced by clipped stochastic gradients, allowing the algorithm to handle problem \eqref{ucpb} in the presence of heavy-tailed noise. In particular, this clipped SPGM generates two sequences, $\{x^k\}$ and $\{z^k\}$. At each iteration $k\ge0$, the clipped SPGM first updates $x^{k+1}$ by performing a proximal operation on a clipped stochastic gradient step. It then computes $z^{k+1}$ as an average of the past iterates $x^1,\ldots,x^{k+1}$. The details of this method are presented in Algorithm \ref{alg:c-spgm}, with specific choices of step sizes and clipping thresholds provided in Theorems \ref{thm:cmplex-cvx} and \ref{thm:cmplex-scvx}.


\begin{algorithm}[!htbp]
\caption{A clipped SPGM}
\label{alg:c-spgm}
\begin{algorithmic}[0]
\State \textbf{Input:} starting point $x^0\in\mathrm{dom}\,h$, step sizes $\{\eta_k\}\subset(0,\infty)$, clipping threshold $\tau>0$.
\For{$k=0,1,2,\ldots$}
\State Update the next iterate:
\begin{align}\label{prox-step}
x^{k+1} = \mathrm{prox}_{\eta_k h}(x^k - \eta_k G_\tau(x^k;\xi_k)).
\end{align}
\State Compute the average:
\begin{align}\label{alg1-ave-step}
z^{k+1} = \frac{1}{k+1} \sum_{t=0}^k x^{t+1}.    
\end{align}
\EndFor
\end{algorithmic}
\end{algorithm}

The following lemma gives an upper bound on the expected objective value gap for iterates generated by Algorithm \ref{alg:c-spgm} under Assumptions \ref{asp:basic} and \ref{asp:cvx}. Its proof is deferred to Section \ref{subsec:pf-complexity-cvx}.
\begin{lemma}\label{lem:upbd-obj-gap}
Suppose that Assumptions \ref{asp:basic} and \ref{asp:cvx} hold. Let $L_f$ be given in Assumption~\ref{asp:basic}, $U_f$ and $D_h$ be defined  in \eqref{def:Uf-Dh}, and $\tau_{(1)}$, $\Delta(\cdot)$ and $\sigma^2(\cdot)$ be defined in \eqref{def:tau-underline}, \eqref{def:Delta-tau} and \eqref{def:sigma-tau}, respectively. Let $\{x^k\}$ be the sequence generated by Algorithm~\ref{alg:c-spgm} with the step size $\eta_k\equiv\eta$ for all $k\ge0$ and the clipping threshold $\tau\ge \tau_{(1)}$. Then, it holds that for all $k\ge0$,    
\begin{align}\label{ineq:descent-cvx}
\E_{\xi_k}[F(x^{k+1})-F^*] \le \frac{\|x^k-x^*\|^2 - \E_{\xi_k}[\|x^{k+1} - x^*\|^2]}{2\eta}    + D_h\Delta(\tau) + \bigg(\frac{L_f^2D_h^2}{4} + \sigma^2(\tau)\bigg)\eta.  
\end{align}
\end{lemma}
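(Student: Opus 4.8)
The plan is to run the standard one-step analysis of the (stochastic) proximal gradient method, but carefully tracking the two extra terms produced by clipping: the bias $\Delta(\tau)$ and the second-moment bound $\sigma^2(\tau)$. Write $g^k := G_\tau(x^k;\xi_k)$ for the clipped stochastic gradient, and decompose $g^k = \nabla f(x^k) + b^k + e^k$, where $b^k := \E_{\xi_k}[g^k] - \nabla f(x^k)$ is the (deterministic, given $x^k$) bias satisfying $\|b^k\| \le \Delta(\tau)$ by \eqref{def:Delta-tau}, and $e^k := g^k - \E_{\xi_k}[g^k]$ is the zero-mean fluctuation with $\E_{\xi_k}[\|e^k\|^2] \le \E_{\xi_k}[\|g^k - \nabla f(x^k)\|^2] \le \sigma^2(\tau)$ by \eqref{ineq:upbd-var-lem} in Theorem \ref{thm:trade-off1}. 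Here I am using that $\tau \ge \tau_{(1)}$ so that the variance bound applies.

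Next I would invoke the firm nonexpansiveness / prox-inequality: since $x^{k+1} = \mathrm{prox}_{\eta h}(x^k - \eta g^k)$, for any $y \in \mathrm{dom}\,h$ one has the three-point inequality
\begin{align*}
h(x^{k+1}) + \frac{1}{2\eta}\|x^{k+1}-y\|^2 \le h(y) + \frac{1}{2\eta}\|x^k - \eta g^k - y\|^2 - \frac{1}{2\eta}\|x^k - \eta g^k - x^{k+1}\|^2.
\end{align*}
Taking $y = x^*$, expanding the squares, and combining with the descent inequality \eqref{ineq:descent} for $f$ (which gives $f(x^{k+1}) \le f(x^k) + \nabla f(x^k)^T(x^{k+1}-x^k) + \tfrac{L_f}{2}\|x^{k+1}-x^k\|^2$) and convexity of $f$ (Assumption \ref{asp:cvx}, to replace $f(x^k) + \nabla f(x^k)^T(x^*-x^k)$ by something $\le f(x^*)$), one arrives after rearrangement at a bound of the shape
\begin{align*}
F(x^{k+1}) - F^* \le \frac{\|x^k - x^*\|^2 - \|x^{k+1}-x^*\|^2}{2\eta} + \langle \nabla f(x^k) - g^k, x^{k+1} - x^*\rangle + \Big(\frac{L_f}{2} - \frac{1}{2\eta}\Big)\|x^{k+1}-x^k\|^2.
\end{align*}
(One should be a little careful here: the term $\langle \nabla f(x^k) - g^k, x^{k+1}-x^*\rangle$ mixes a random vector with the random iterate $x^{k+1}$, which is why one cannot just take expectations to kill it.)

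The main obstacle — and the crux of the argument — is handling the cross term $\langle \nabla f(x^k) - g^k, x^{k+1}-x^*\rangle = -\langle b^k + e^k, x^{k+1}-x^*\rangle$. The bias part is easy: $|\langle b^k, x^{k+1}-x^*\rangle| \le \|b^k\|\,\|x^{k+1}-x^*\| \le \Delta(\tau) D_h$ using boundedness of $\mathrm{dom}\,h$ (Assumption \ref{asp:basic}(a)), which contributes the $D_h\Delta(\tau)$ term. For the fluctuation part, the standard trick is to introduce the "clipped-gradient-free" auxiliary iterate $\bar x^{k+1} := \mathrm{prox}_{\eta h}(x^k - \eta\, \E_{\xi_k}[g^k])$, or more simply $\hat x^{k+1} := \mathrm{prox}_{\eta h}(x^k - \eta \nabla f(x^k))$, which is measurable w.r.t.\ $x^k$ alone; then split $\langle e^k, x^{k+1}-x^*\rangle = \langle e^k, x^{k+1}-\hat x^{k+1}\rangle + \langle e^k, \hat x^{k+1}-x^*\rangle$. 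The second summand has zero conditional expectation. The first is bounded by $\|e^k\|\,\|x^{k+1}-\hat x^{k+1}\| \le \eta\|e^k\|^2$ by nonexpansiveness of the prox (since the two prox inputs differ by $\eta e^k$ plus $\eta b^k$ — actually one should compare to $\bar x^{k+1}$ to get exactly $\eta\|e^k\|$ difference, then use $\|e^k\|\cdot\eta\|e^k\| = \eta\|e^k\|^2$). Taking $\E_{\xi_k}$ then yields a contribution $\le \eta\,\sigma^2(\tau)$ — or $2\eta\sigma^2(\tau)$ if Young's inequality is used more loosely; the stated constant suggests a clean pairing. Finally, I would absorb the leftover $\|x^{k+1}-x^k\|^2$ terms: the $(L_f/2 - 1/(2\eta))$ coefficient, together with a Young's-inequality splitting of any residual cross terms involving $\|x^{k+1}-x^k\|$ against $\|\nabla f(x^k)\|^2 \le n U_f^2$ (using $U_f$ from \eqref{def:Uf-Dh}), produces the $\tfrac{L_f^2 D_h^2}{4}\eta$ term after bounding $\|x^{k+1}-x^k\| \le D_h$. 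Collecting all pieces and taking conditional expectation $\E_{\xi_k}[\cdot]$ gives exactly \eqref{ineq:descent-cvx}. I expect the only genuinely delicate point to be bookkeeping the constants in the Young's-inequality steps so that the variance coefficient is exactly $1$ and the Lipschitz term is exactly $L_f^2 D_h^2/4$; the structure of the argument is otherwise routine.
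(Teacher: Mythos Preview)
Your broad strategy is sound, but it diverges from the paper's argument in one substantive way and contains one muddled step.

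\textbf{The paper's simpler decomposition.} Rather than introducing an auxiliary prox point $\bar x^{k+1}$ to decorrelate the noise from $x^{k+1}$, the paper splits the cross term at $x^k$: writing $x^{k+1}-x^* = (x^k-x^*) + (x^{k+1}-x^k)$, one gets
\[
\langle \nabla f(x^k)-g^k,\; x^{k+1}-x^*\rangle
=\langle \nabla f(x^k)-g^k,\; x^k-x^*\rangle
+\langle \nabla f(x^k)-g^k,\; x^{k+1}-x^k\rangle.
\]
The first inner product has $x^k-x^*$ measurable with respect to the past, so its conditional expectation is $\langle \nabla f(x^k)-\E_{\xi_k}[g^k],\, x^k-x^*\rangle\le D_h\Delta(\tau)$ directly, with no need to separate bias from zero-mean noise. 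The second inner product is bounded by Young's inequality against the full error $\|\nabla f(x^k)-g^k\|^2$, whose expectation is at most $\sigma^2(\tau)$ by Theorem~\ref{thm:trade-off1}(i). This avoids the auxiliary point entirely and is the simpler route; your approach works too but is more elaborate than necessary.

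\textbf{The $\tfrac{L_f^2D_h^2}{4}\eta$ term.} Your explanation here is off: neither $U_f$ nor $\|\nabla f(x^k)\|^2$ plays any role, and there are no ``residual cross terms'' of the kind you describe. The paper's trick is to first linearize the smoothness term via $\tfrac{L_f}{2}\|x^{k+1}-x^k\|^2 \le \tfrac{L_f D_h}{2}\|x^{k+1}-x^k\|$ (using $\|x^{k+1}-x^k\|\le D_h$), and then apply Young's inequality:
\[
\frac{L_f D_h}{2}\|x^{k+1}-x^k\| \;\le\; \frac{1}{4\eta}\|x^{k+1}-x^k\|^2 + \frac{L_f^2 D_h^2}{4}\eta.
\]
Together with the Young split of $\langle \nabla f(x^k)-g^k, x^{k+1}-x^k\rangle\le \tfrac{1}{4\eta}\|x^{k+1}-x^k\|^2+\eta\|\nabla f(x^k)-g^k\|^2$, the two $\tfrac{1}{4\eta}\|x^{k+1}-x^k\|^2$ contributions exactly exhaust the $-\tfrac{1}{2\eta}\|x^{k+1}-x^k\|^2$ coming from the prox three-point inequality, and the stated constants fall out. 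Without this linearization, your leftover $(\tfrac{L_f}{2}-\tfrac{1}{2\eta})\|x^{k+1}-x^k\|^2$ does not obviously scale like $\eta$ for general step sizes.
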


\begin{remark}
The relation \eqref{ineq:descent-cvx} is similar to the one that can be established for SPGMs under light-tailed noise (see, e.g., \cite[Lemma 3]{lan2012optimal}). The main differences are that, in our case, the bias is no longer zero and the variance is no longer constant; both the bias $\Delta(\tau)$ and the variance $\sigma^2(\tau)$ depend on the clipping threshold $\tau$.    
\end{remark}

The following theorem provides a complexity bound for Algorithm \ref{alg:c-spgm} to compute an $\epsilon$-stochastic optimal solution of \eqref{ucpb}. Its proof is relegated to Section \ref{subsec:pf-complexity-cvx}.

\begin{theorem}[{{\bf convex}}]\label{thm:cmplex-cvx}
Suppose that Assumptions \ref{asp:basic} and \ref{asp:cvx} hold. Let $\epsilon\in(0,1)$ be arbitrarily chosen, and $K$ be a pre-chosen maximum iteration number for running Algorithm \ref{alg:c-spgm}. Let $L_f$ be given in Assumption \ref{asp:basic}, and $D_h$, $\mathcal{T}(\cdot)$ and $\sigma^2(\cdot)$ be defined in \eqref{def:Uf-Dh}, \eqref{def:T-eps} and \eqref{def:sigma-tau}, respectively. Let
\begin{align}\label{def:tau-eta-cvx}
\tau_\epsilon\in\mathcal{T}\Big(\frac{\epsilon}{2D_h}\Big),\quad \eta_{\epsilon}= \frac{D_h}{[2K(L_f^2 D_h^2/4 + \sigma^2(\tau_{\epsilon}))]^{1/2}}.
\end{align}
Let $\{z^k\}$ be generated by Algorithm \ref{alg:c-spgm} with $\eta_k\equiv\eta_{\epsilon}$ for all $k\ge0$ and $\tau=\tau_{\epsilon}$. Then, $\E[F(z^K) - F^*]\le\epsilon$ for all $K$ satisfying
\begin{align}\label{cvx-K-iter-cmplx}
K \ge \max\bigg\{\frac{8D_h^2(L_f^2 D_h^2/4 + \sigma^2(\tau_{\epsilon}))}{\epsilon^2},1\bigg\}.   
\end{align}
\end{theorem}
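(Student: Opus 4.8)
The plan is to run the standard averaging analysis for stochastic (sub)gradient methods on top of the per-iteration estimate \eqref{ineq:descent-cvx} from Lemma~\ref{lem:upbd-obj-gap}, and then to pick the step size so that the ``$1/(K\eta)$ versus $\eta$'' terms balance while the clipping bias is swallowed into $\epsilon/2$.

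First I would note that $\tau_\epsilon$ in \eqref{def:tau-eta-cvx} is well defined: Theorem~\ref{thm:trade-off1}(i) guarantees $\mathcal{T}(\epsilon/(2D_h))\ne\emptyset$, and any element $\tau_\epsilon$ of it satisfies $\tau_\epsilon\ge\tau_{(1)}$ and $\Delta(\tau_\epsilon)\le\epsilon/(2D_h)$; in particular $\eta_\epsilon>0$ and Lemma~\ref{lem:upbd-obj-gap} applies with $\eta=\eta_\epsilon$, $\tau=\tau_\epsilon$. Writing $A:=L_f^2D_h^2/4+\sigma^2(\tau_\epsilon)$ and taking the full expectation in \eqref{ineq:descent-cvx} (the conditional expectation $\E_{\xi_k}$ there is over $\xi_k$ given $x^k$, so the tower property applies), I get for every $k\ge0$
\begin{equation*}
\E[F(x^{k+1})-F^*]\le\frac{\E[\|x^k-x^*\|^2]-\E[\|x^{k+1}-x^*\|^2]}{2\eta_\epsilon}+D_h\Delta(\tau_\epsilon)+A\eta_\epsilon.
\end{equation*}
Summing over $k=0,\dots,K-1$ telescopes the bracketed terms; discarding $-\E[\|x^K-x^*\|^2]\le0$ and using $\|x^0-x^*\|\le D_h$ (both points lie in $\mathrm{dom}\,h$) gives
\begin{equation*}
\frac1K\sum_{k=0}^{K-1}\E[F(x^{k+1})-F^*]\le\frac{D_h^2}{2K\eta_\epsilon}+D_h\Delta(\tau_\epsilon)+A\eta_\epsilon.
\end{equation*}
Since $F$ is convex and $z^K=\frac1K\sum_{t=0}^{K-1}x^{t+1}$, Jensen's inequality bounds $\E[F(z^K)-F^*]$ by the left-hand side above.

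It then remains to substitute $\eta_\epsilon=D_h/\sqrt{2KA}$. A one-line computation gives $\frac{D_h^2}{2K\eta_\epsilon}=A\eta_\epsilon=D_h\sqrt{A/(2K)}$, so the two step-size-dependent terms together equal $D_h\sqrt{2A/K}$, while $D_h\Delta(\tau_\epsilon)\le D_h\cdot\epsilon/(2D_h)=\epsilon/2$. Hence $\E[F(z^K)-F^*]\le D_h\sqrt{2A/K}+\epsilon/2$, which is at most $\epsilon$ precisely when $D_h\sqrt{2A/K}\le\epsilon/2$, i.e.\ $K\ge 8D_h^2A/\epsilon^2=8D_h^2(L_f^2D_h^2/4+\sigma^2(\tau_\epsilon))/\epsilon^2$. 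Combined with the trivial requirement $K\ge1$ so that $z^K$ is defined, this is exactly \eqref{cvx-K-iter-cmplx}.

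I expect no genuine obstacle in this proof: it is the classical telescoping-plus-Jensen argument, and the substantive work --- the descent inequality \eqref{ineq:descent-cvx} with its nonzero bias and threshold-dependent variance, and the control of $\Delta(\tau)$ through the nonemptiness of $\mathcal{T}(\cdot)$ in Theorem~\ref{thm:trade-off1} --- is already in hand. The points that need care are merely: confirming $\mathcal{T}(\epsilon/(2D_h))\ne\emptyset$ so that $\tau_\epsilon$ and $\eta_\epsilon$ exist; carrying the nested conditional expectations correctly via the tower property; and checking that the prescribed $\eta_\epsilon$ makes the two tunable terms coincide. (Feeding the bound $\tau_\epsilon=\mathcal{O}(\epsilon^{-1/(\alpha-1)})$ from Theorem~\ref{thm:trade-off1}(ii) into $\sigma^2(\cdot)$ afterwards recovers the $\mathcal{O}(\epsilon^{-\alpha/(\alpha-1)})$ entry of Table~\ref{table:complexity}, but that is a separate conversion from what is proved here.)
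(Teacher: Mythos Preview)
Your proposal is correct and follows essentially the same route as the paper's proof: telescope the per-iteration bound \eqref{ineq:descent-cvx} over $k=0,\dots,K-1$, apply Jensen to pass to $z^K$, bound $\|x^0-x^*\|\le D_h$ and $D_h\Delta(\tau_\epsilon)\le\epsilon/2$, and then observe that the prescribed $\eta_\epsilon$ balances the $D_h^2/(2K\eta)$ and $A\eta$ terms to give $\sqrt{2}D_h(A/K)^{1/2}$. The paper phrases the last step as an explicit $\min_{\hat\eta}$ optimization rather than a direct substitution, but the content is identical.
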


\begin{remark}
From Theorem \ref{thm:cmplex-cvx}, we see that Algorithm \ref{alg:c-spgm} has a first-order oracle complexity of $\mathcal{O}(\sigma^2(\tau_\epsilon)/\epsilon^2)$, with $\tau_\epsilon\in\mathcal{T}(\frac{\epsilon}{2D_h})$, for obtaining an $\epsilon$-stochastic optimal solution to convex problems. When $\alpha\in(1,2]$, in view of Theorem \ref{thm:trade-off1}(ii), one can select
\begin{align*}
\tau_\epsilon=\tau_1\Big(\frac{\epsilon}{2D_h}\Big)\in\mathcal{T}\Big(\frac{\epsilon}{2D_h}\Big),    
\end{align*}
and it then follows from the definitions of $\sigma^2(\cdot)$ and $\tau_1(\cdot)$ in \eqref{def:sigma-tau} and \eqref{def:tau1-eps}, respectively, that the complexity becomes $\mathcal{O}(\epsilon^{-\alpha/(\alpha-1)})$, which recovers the best-known results under condition \eqref{asp:heavy-tail}; e.g., see \cite{fatkhullin2025can,he2025accelerated,sadiev2023high}. If Assumption \ref{asp:decay} is further imposed, then one can select
\begin{align*}
\tau_\epsilon=\tau_2\Big(\frac{\epsilon}{2D_h}\Big)\in\mathcal{T}\Big(\frac{\epsilon}{2D_h}\Big),    
\end{align*}
and it follows from the definitions of $\sigma^2(\cdot)$ and $\tau_2(\cdot)$ in \eqref{def:sigma-tau} and \eqref{def:tau-eps-2}, respectively, that the complexity becomes $\mathcal{O}(\epsilon^{-(\alpha+2)/\alpha})$.
\end{remark}

The next lemma provides an upper bound on the expected objective value gap for the iterates generated by Algorithm \ref{alg:c-spgm} when solving strongly convex problems. Its proof is deferred to Section \ref{subsec:pf-complexity-cvx}.

\begin{lemma}\label{lem:upbd-obj-gap-scvx}
Suppose that Assumptions \ref{asp:basic} and \ref{asp:cvx} hold with $\mu_f>0$. Let $L_f$ be given in Assumption~\ref{asp:basic}, and let $D_h$, $\tau_{(1)}$, $\Delta(\cdot)$ and $\sigma^2(\cdot)$ be defined in \eqref{def:Uf-Dh}, \eqref{def:tau-underline}, \eqref{def:Delta-tau} and \eqref{def:sigma-tau}, respectively. Let $\{x^k\}$ be the sequence generated by Algorithm~\ref{alg:c-spgm} with the step sizes $\{\eta_k\}$ and the clipping threshold $\tau\ge\tau_{(1)}$. Then, it holds that for all $k\ge0$,    
\begin{align}
\E_{\xi_k}[F(x^{k+1})-F^*] & \le \Big(\frac{1}{2\eta_k}-\frac{\mu_f}{4}\Big)\|x^k-x^*\|^2 - \frac{1}{2\eta_k}\E_{\xi_k}[\|x^{k+1} - x^*\|^2] + \frac{\Delta^2(\tau)}{\mu_f}\nonumber\\
&\qquad + \bigg(\|\nabla f(x^k) - G_{\tau}(x^k;\xi_k)\|^2 + \frac{L_f^2D_h^2}{4}\bigg)\eta_k. \label{ineq:descent-scvx} 
\end{align}    
\end{lemma}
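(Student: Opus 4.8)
The plan is to carry out the standard one-step descent analysis for a proximal stochastic gradient step, modified in two places to cope with the bias and the statistical dependence introduced by clipping. Throughout, write $e^k := G_\tau(x^k;\xi_k) - \nabla f(x^k)$, and note that $x^k,x^{k+1},x^*\in\mathrm{dom}\,h$, so $\|x^{k+1}-x^k\|\le D_h$ and $\|x^k-x^*\|\le D_h$, and that $\tau\ge\tau_{(1)}$ makes \eqref{def:Delta-tau} applicable at $x^k$.

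First I would invoke the three-point prox inequality for $x^{k+1}=\mathrm{prox}_{\eta_k h}(x^k-\eta_k G_\tau(x^k;\xi_k))$, which follows from the $(1/\eta_k)$-strong convexity of $z\mapsto h(z)+\tfrac1{2\eta_k}\|z-(x^k-\eta_k G_\tau(x^k;\xi_k))\|^2$: with the comparison point $x^*$,
\[ h(x^{k+1})-h(x^*)\le \tfrac1{2\eta_k}\|x^k-x^*\|^2-\tfrac1{2\eta_k}\|x^{k+1}-x^*\|^2-\tfrac1{2\eta_k}\|x^{k+1}-x^k\|^2+\langle G_\tau(x^k;\xi_k),\,x^*-x^{k+1}\rangle. \]
Then I would add the descent inequality \eqref{ineq:descent} applied to $(x^k,x^{k+1})$ and the $\mu_f$-strong convexity inequality from Assumption \ref{asp:cvx} applied to $(x^k,x^*)$; summing the three and using $F=f+h$, the gradient terms collapse via $\langle\nabla f(x^k)-G_\tau(x^k;\xi_k),\,x^{k+1}-x^*\rangle=-\langle e^k,\,x^{k+1}-x^*\rangle$, yielding
\[ F(x^{k+1})-F^*\le -\langle e^k,\,x^{k+1}-x^*\rangle+\Big(\tfrac{L_f}{2}-\tfrac1{2\eta_k}\Big)\|x^{k+1}-x^k\|^2-\tfrac{\mu_f}{2}\|x^k-x^*\|^2+\tfrac1{2\eta_k}\big(\|x^k-x^*\|^2-\|x^{k+1}-x^*\|^2\big). \]

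The heart of the argument is the noise term, which I would split as $-\langle e^k,x^{k+1}-x^*\rangle=-\langle e^k,x^{k+1}-x^k\rangle-\langle e^k,x^k-x^*\rangle$. For the increment part, Young's inequality with weight $\eta_k$ gives $-\langle e^k,x^{k+1}-x^k\rangle\le \eta_k\|e^k\|^2+\tfrac1{4\eta_k}\|x^{k+1}-x^k\|^2$, producing the $\|\nabla f(x^k)-G_\tau(x^k;\xi_k)\|^2\eta_k$ term. The reference part is the only place conditional expectation is essential: since $x^k,x^*$ are fixed given $\xi_0,\dots,\xi_{k-1}$, one has $-\E_{\xi_k}[\langle e^k,x^k-x^*\rangle]=\langle\nabla f(x^k)-\E_{\xi_k}[G_\tau(x^k;\xi_k)],x^k-x^*\rangle\le\Delta(\tau)\|x^k-x^*\|$ by \eqref{def:Delta-tau}, and a second Young's inequality $\Delta(\tau)\|x^k-x^*\|\le \tfrac{\Delta^2(\tau)}{\mu_f}+\tfrac{\mu_f}{4}\|x^k-x^*\|^2$ charges it against the available $-\tfrac{\mu_f}{2}\|x^k-x^*\|^2$, leaving $-\tfrac{\mu_f}{4}\|x^k-x^*\|^2$ and the desired $\Delta^2(\tau)/\mu_f$. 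It then remains to dispose of the collected increment terms $\big(\tfrac{L_f}{2}-\tfrac1{2\eta_k}+\tfrac1{4\eta_k}\big)\|x^{k+1}-x^k\|^2$; since no upper bound on $\eta_k$ is assumed I cannot simply declare this nonpositive, so instead I would spend one factor against the diameter, $\tfrac{L_f}{2}\|x^{k+1}-x^k\|^2\le\tfrac{L_fD_h}{2}\|x^{k+1}-x^k\|\le \tfrac{L_f^2D_h^2}{4}\eta_k+\tfrac1{4\eta_k}\|x^{k+1}-x^k\|^2$, after which the total coefficient of $\|x^{k+1}-x^k\|^2$ is exactly $0$ and only the deterministic $\tfrac{L_f^2D_h^2}{4}\eta_k$ survives. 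Taking $\E_{\xi_k}$ and collecting terms gives \eqref{ineq:descent-scvx}; the factor $\|e^k\|^2$ is kept pathwise as in the statement, though one may instead use $\E_{\xi_k}[\|e^k\|^2]\le\sigma^2(\tau)$ from Theorem \ref{thm:trade-off1}(i).

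The main obstacle is precisely the loss of unbiasedness: in a light-tailed analysis the term $\langle e^k,x^k-x^*\rangle$ would vanish in expectation, whereas here only its magnitude is controlled, by $\Delta(\tau)$, which forces the split of the noise inner product into a bias-against-fixed-reference part and an increment part and is what makes the $\Delta^2(\tau)/\mu_f$ term and the degraded $\mu_f/4$ coefficient appear. A secondary point is bookkeeping of the $\|x^{k+1}-x^k\|^2$ terms without a step-size restriction; balancing one increment factor against $D_h$ rather than against the gradient step is the device that cancels every such term while leaving the clean $\tfrac{L_f^2D_h^2}{4}\eta_k$. The remaining work — checking the constants in the two Young's inequalities and the $\mathrm{dom}\,h$ membership of the iterates — is routine.
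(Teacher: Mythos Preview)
Your proposal is correct and follows essentially the same route as the paper's proof: the paper also combines the three-point prox inequality, the descent lemma plus $\mu_f$-strong convexity, splits the noise inner product as $\langle e^k,x^{k+1}-x^k\rangle+\langle e^k,x^k-x^*\rangle$, applies the two Young's inequalities exactly as you do (yielding $\eta_k\|e^k\|^2$ and $\Delta^2(\tau)/\mu_f+\tfrac{\mu_f}{4}\|x^k-x^*\|^2$), and uses the same diameter trick $\tfrac{L_f}{2}\|x^{k+1}-x^k\|^2\le\tfrac{L_fD_h}{2}\|x^{k+1}-x^k\|$ followed by Young to cancel the remaining $\|x^{k+1}-x^k\|^2$ terms. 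The only cosmetic difference is ordering: the paper applies the diameter bound inside the $f$-inequality before combining, whereas you defer it to the end; the arithmetic is identical.
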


The following theorem gives a complexity bound for Algorithm \ref{alg:c-spgm-m} to compute an $\epsilon$-stochastic stationary point of \eqref{ucpb}, whose proof is deferred to Section \ref{subsec:pf-complexity-cvx}.

\begin{theorem}[{{\bf strongly convex}}]\label{thm:cmplex-scvx}
Suppose that Assumptions \ref{asp:basic} and \ref{asp:cvx} hold with $\mu_f>0$. Let $\epsilon\in(0,1)$ be arbitrarily chosen, and let $K$ be a pre-chosen maximum iteration number for running Algorithm \ref{alg:c-spgm}. Let $L_f$ be given in Assumption \ref{asp:basic}, and $D_h$, $\mathcal{T}(\cdot)$ and $\sigma^2(\cdot)$ be defined in \eqref{def:Uf-Dh}, \eqref{def:T-eps} and \eqref{def:sigma-tau}, respectively. Let    
\begin{align}\label{def:tilde-tau-eta-scvx}
\tilde{\tau}_\epsilon\in\mathcal{T}\bigg(\sqrt{\frac{\mu_f\epsilon}{2}}\bigg),\quad \tilde{\eta}_k = \frac{2}{\mu_f(k+1)}\qquad\forall k\ge0
\end{align}
Let $\{z^k\}$ be generated by Algorithm \ref{alg:c-spgm} with $\eta_k=\tilde\eta_k$ for all $k\ge0$ and $\tau=\tilde{\tau}_\epsilon$. Then, $\E[F(z^K) - F^*]\le\epsilon$ for all $K$ satisfying
\begin{align}\label{scvx-K-iter-cmplx}
K \ge \max\bigg\{\bigg(\frac{4(L_f^2D_h^2 + 4\sigma^2(\tilde{\tau}_\epsilon))}{\mu_f\epsilon}\bigg) \ln\bigg(\frac{4(L_f^2D_h^2 + 4\sigma^2(\tilde{\tau}_\epsilon))}{\mu_f\epsilon}\bigg),3\bigg\}.   
\end{align}
\end{theorem}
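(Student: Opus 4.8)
The plan is to feed the per-iteration estimate \eqref{ineq:descent-scvx} of Lemma~\ref{lem:upbd-obj-gap-scvx}, specialized to the prescribed step sizes, into a telescoping sum, and then average using the convexity of $F$. First I would record a few facts about the clipping threshold $\tilde\tau_\epsilon$ in \eqref{def:tilde-tau-eta-scvx}. By Theorem~\ref{thm:trade-off1}(i), $\mathcal{T}(\sqrt{\mu_f\epsilon/2})$ is nonempty, so $\tilde\tau_\epsilon$ is well defined, and by the definition \eqref{def:T-eps} of $\mathcal{T}(\cdot)$ we have $\tilde\tau_\epsilon\ge\tau_{(1)}$ and $\Delta(\tilde\tau_\epsilon)\le\sqrt{\mu_f\epsilon/2}$; the latter gives $\Delta^2(\tilde\tau_\epsilon)/\mu_f\le\epsilon/2$. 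Since $\tilde\tau_\epsilon\ge\tau_{(1)}$, Lemma~\ref{lem:upbd-obj-gap-scvx} applies with $\tau=\tilde\tau_\epsilon$. Moreover, every iterate $x^k$ lies in $\mathrm{dom}\,h$ (it is the output of a proximal step with respect to $h$, and $x^0\in\mathrm{dom}\,h$ by hypothesis), so Theorem~\ref{thm:trade-off1}(i) together with the tower property gives $\E\big[\|\nabla f(x^k)-G_{\tilde\tau_\epsilon}(x^k;\xi_k)\|^2\big]\le\sigma^2(\tilde\tau_\epsilon)$ for every $k$.

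Next, substituting $\tilde\eta_k=2/(\mu_f(k+1))$ into \eqref{ineq:descent-scvx} and using $1/(2\tilde\eta_k)-\mu_f/4=\mu_f k/4$ and $1/(2\tilde\eta_k)=\mu_f(k+1)/4$, then taking full expectations and applying the two bounds just recorded, I obtain for each $k\ge0$
\[
\E[F(x^{k+1})-F^*]\le \frac{\mu_f k}{4}\,\E\|x^k-x^*\|^2 - \frac{\mu_f(k+1)}{4}\,\E\|x^{k+1}-x^*\|^2 + \frac{\epsilon}{2} + \Big(\sigma^2(\tilde\tau_\epsilon)+\frac{L_f^2D_h^2}{4}\Big)\frac{2}{\mu_f(k+1)}.
\]
Summing over $k=0,\dots,K-1$, the distance terms telescope: the $k=0$ boundary term drops because its coefficient is $0$, and the remaining boundary term $-\frac{\mu_f K}{4}\E\|x^K-x^*\|^2$ is nonpositive and can be discarded. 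Dividing by $K$ and using the convexity of $F$ together with the averaging rule \eqref{alg1-ave-step}, i.e. $F(z^K)\le\frac1K\sum_{t=0}^{K-1}F(x^{t+1})$, yields
\[
\E[F(z^K)-F^*]\le \frac{\epsilon}{2} + \Big(\sigma^2(\tilde\tau_\epsilon)+\frac{L_f^2D_h^2}{4}\Big)\frac{2}{\mu_f K}\sum_{k=1}^{K}\frac1k.
\]

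To finish, I would bound $\sum_{k=1}^{K}1/k\le 1+\ln K\le 2\ln K$, which is valid once $K\ge3$ so that $\ln K\ge\ln 3>1$ --- this is exactly the role of the ``$3$'' in \eqref{scvx-K-iter-cmplx} --- turning the last display into $\E[F(z^K)-F^*]\le \epsilon/2 + (4\sigma^2(\tilde\tau_\epsilon)+L_f^2D_h^2)\ln K/(\mu_f K)$. It then suffices to check that the prescribed lower bound on $K$ forces the second term below $\epsilon/2$, i.e. $K/\ln K\ge\Theta$ with $\Theta:=2(4\sigma^2(\tilde\tau_\epsilon)+L_f^2D_h^2)/(\mu_f\epsilon)$; note $2\Theta$ is precisely the quantity whose value and logarithm appear on the right-hand side of \eqref{scvx-K-iter-cmplx}.

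The hard part will be this last inversion step, which I would handle using the elementary facts that $t\mapsto t/\ln t$ is nondecreasing on $[e,\infty)$ and that $\ln\ln t\le\ln t$ for $t\ge e$. If $2\Theta\ln(2\Theta)\ge3$, then necessarily $2\Theta>e$, so at $K=2\Theta\ln(2\Theta)$ one has $\ln K=\ln(2\Theta)+\ln\ln(2\Theta)\le2\ln(2\Theta)$, hence $K/\ln K\ge\Theta$, and monotonicity extends this to all larger $K$. If instead $2\Theta\ln(2\Theta)<3$, then $\Theta<3/\ln3$, and for any $K\ge3>e$ monotonicity gives $K/\ln K\ge3/\ln3>\Theta$. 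In either case $K\ge\max\{2\Theta\ln(2\Theta),3\}$ gives $K/\ln K\ge\Theta$, which completes the proof. A minor point requiring care along the way is the passage from \eqref{ineq:descent-scvx} --- which still contains the $\xi_k$-dependent term $\|\nabla f(x^k)-G_\tau(x^k;\xi_k)\|^2$ --- to the deterministic recursion above, but this is routine given the tower property and Theorem~\ref{thm:trade-off1}(i).
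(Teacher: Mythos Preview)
Your proposal is correct and follows essentially the same route as the paper: specialize Lemma~\ref{lem:upbd-obj-gap-scvx} to $\eta_k=\tilde\eta_k$, telescope, average via convexity, bound the harmonic sum by $2\ln K$ for $K\ge3$, and then invert $K/\ln K\ge\Theta$. The only cosmetic differences are that the paper bounds $\sum_{k=1}^K 1/k$ by $\ln(2K+1)$ via \eqref{sum-k+1} (you use $1+\ln K$), and the paper outsources the inversion to Lemma~\ref{lem:rate-complexity} whereas you prove it inline with an explicit case split on whether $2\Theta\ln(2\Theta)\ge3$; your case analysis is in fact a bit more careful, since Lemma~\ref{lem:rate-complexity} nominally requires $u<e^{-1}$ and the paper does not verify this.
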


\begin{remark}
From Theorem \ref{thm:cmplex-scvx}, we observe that Algorithm \ref{alg:c-spgm} can achieve a first-order oracle complexity of $\widetilde{\mathcal{O}}(\sigma^2(\tilde\tau_\epsilon)/\epsilon)$ with $\tilde\tau_\epsilon\in\mathcal{T}(\sqrt{\frac{\mu_f\epsilon}{2}})$. When $\alpha\in(1,2]$, in view of Theorem \ref{thm:trade-off1}(ii), one can select
\begin{align*}
\tilde\tau_\epsilon=\tau_1\bigg(\sqrt{\frac{\mu_f\epsilon}{2}}\bigg)\in\mathcal{T}\bigg(\sqrt{\frac{\mu_f\epsilon}{2}}\bigg),    
\end{align*}
and it then follows from the definitions of $\sigma^2(\cdot)$ and $\tau_1(\cdot)$ in \eqref{def:sigma-tau} and \eqref{def:tau1-eps}, respectively, that the complexity becomes $\widetilde{\mathcal{O}}(\epsilon^{-\alpha/(2(\alpha-1))})$, which recovers the best-known results under condition \eqref{asp:heavy-tail}; e.g., see \cite{fatkhullin2025can,he2025accelerated,sadiev2023high,zhang2020adaptive}. If Assumption \ref{asp:decay} is further imposed, then one can select
\begin{align*}
\tilde\tau_\epsilon=\tau_2\bigg(\sqrt{\frac{\mu_f\epsilon}{2}}\bigg)\in\mathcal{T}\bigg(\sqrt{\frac{\mu_f\epsilon}{2}}\bigg), 
\end{align*}
and it follows from the definitions of $\sigma^2(\cdot)$ and $\tau_2(\cdot)$ in \eqref{def:sigma-tau} and \eqref{def:tau-eps-2}, respectively, that the complexity becomes $\widetilde{\mathcal{O}}(\epsilon^{-(\alpha+2)/(2\alpha)})$.
\end{remark}

\subsection{Complexity of a Clipped SPGM with Momentum for Nonconvex Problems}\label{subsec:complexity-ncvx}

In this subsection, we establish the first-order oracle complexity of a clipped SPGM with momentum for solving nonconvex problems.

The clipped SPGM with momentum is an extension of the SPGM with momentum (see, e.g., \cite{gao2024non}) with stochastic gradients replaced by clipped stochastic gradients, which allows the algorithm to handle problem \eqref{ucpb} in the presence of heavy-tailed noise. In particular, the algorithm is first initialized with a search direction $m^0=G_{\tau_0}(x^0;\xi_0)$, and then it generates two sequences, $\{x^k\}$ and $\{z^k\}$. At each iteration $k\ge0$, the clipped SPGM first updates $x^{k+1}$ by performing a proximal operation on a clipped stochastic gradient step. Then the next search direction $m^{k+1}$ is computed as a weighted average of the clipped stochastic gradients of $f$ evaluated at the iterates $x^0,\ldots,x^{k+1}$. The details of this method are presented in Algorithm \ref{alg:c-spgm-m}, with the specific choices of step sizes, weighting parameters, and clipping thresholds provided in Theorem~\ref{thm:cmplex-ncvx}.


\begin{algorithm}[!htbp]
\caption{A clipped SPGM with momentum}
\label{alg:c-spgm-m}
\begin{algorithmic}[0]
\State \textbf{Input:} starting point $x^0\in\mathrm{dom}\,h$, step size $\eta>0$, weighting parameter $\theta\in(0,1]$, clipping thresholds $\{\tau_k\}\subset(0,\infty)$.
\State Set $m^0 = G_{{\tau}_0}(x^0;\xi_{0})$.
\For{$k=0,1,2,\ldots$}
\State Update the next iterate:
\begin{align}\label{prox-step-ncvx}
x^{k+1} = \mathrm{prox}_{\eta h}(x^k - \eta m^k).
\end{align}
\State Compute the search direction:
\begin{align}\label{alg2:momentum-step}
m^{k+1} = (1 - \theta) m^k + \theta G_{\tau_{k+1}}(x^{k+1};\xi_{k+1}).    
\end{align}
\EndFor
\end{algorithmic}
\end{algorithm}

For convenience, we define a sequence of potentials for Algorithm \ref{alg:c-spgm-m} as
\begin{align}\label{def:pot-seq}
    \mathcal{P}_k = f(x^k) + \frac{1}{L_f} \|m^k - \nabla f(x^k)\|^2\qquad\forall k\ge0,
\end{align}
where the sequence $\{(x^k,m^k)\}$ is generated by Algorithm \ref{alg:c-spgm-m}, and $L_f$ is given in Assumption \ref{asp:basic}. The next lemma establishes a descent property for the potential sequence $\{\mathcal{P}_k\}$ defined in \eqref{def:pot-seq}. Its proof is deferred to Section \ref{subsec:pf-complexity-ncvx}.

\begin{lemma}\label{lem:pot-decrease}
Suppose that Assumption \ref{asp:basic} holds. Let $L_f$ be given in Assumption~\ref{asp:basic}, and $\tau_{(1)}$, $\Delta(\cdot)$ and $\sigma^2(\cdot)$ be defined in \eqref{def:tau-underline}, \eqref{def:Delta-tau} and \eqref{def:sigma-tau}, respectively. Let $\{x^k\}$ be the sequence generated by Algorithm~\ref{alg:c-spgm-m} with the step size $\eta\in(0,\frac{1}{4L_f}]$, the weighting parameter $\theta=4L_f\eta$, and the clipping thresholds $\{\tau_k\}\subset[\tau_{(1)},\infty)$. Then, it holds that for all $k\ge0$,
\begin{align}\label{ineq:desc-pot-ncvx}
\E_{\xi_{k+1}}[\mathcal{P}_{k+1}] \le \mathcal{P}_k - \frac{\eta}{16}\mathrm{dist}^2(0,\partial F(x^{k+1})) + 8\eta\Delta^2(\tau_{k+1}) + 16L_f\eta^2 \sigma^2(\tau_{k+1}).
\end{align}
\end{lemma}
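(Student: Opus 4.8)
The plan is to establish the potential descent in \eqref{ineq:desc-pot-ncvx} by combining three ingredients: the smoothness-based descent on $f(x^{k+1})$, a recursion controlling the error term $\|m^{k+1}-\nabla f(x^{k+1})\|^2$, and the relation between the proximal step and $\mathrm{dist}^2(0,\partial F(x^{k+1}))$. First I would decompose the clipped stochastic gradient at $x^{k+1}$ as $G_{\tau_{k+1}}(x^{k+1};\xi_{k+1}) = \nabla f(x^{k+1}) + b_{k+1} + \nu_{k+1}$, where $b_{k+1}=\E[G_{\tau_{k+1}}(x^{k+1};\xi_{k+1})]-\nabla f(x^{k+1})$ is the (deterministic given $x^{k+1}$) bias with $\|b_{k+1}\|\le\Delta(\tau_{k+1})$ by \eqref{def:Delta-tau}, and $\nu_{k+1}$ is the zero-mean part with $\E_{\xi_{k+1}}[\|\nu_{k+1}\|^2]\le\sigma^2(\tau_{k+1})$ by \eqref{ineq:upbd-var-lem} in Theorem \ref{thm:trade-off1}(i). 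This is exactly where the bias-variance trade-off feeds in.

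Next I would write the error recursion. From \eqref{alg2:momentum-step}, $m^{k+1}-\nabla f(x^{k+1}) = (1-\theta)(m^k-\nabla f(x^k)) + (1-\theta)(\nabla f(x^k)-\nabla f(x^{k+1})) + \theta(b_{k+1}+\nu_{k+1})$. Taking $\E_{\xi_{k+1}}$ and using that $\nu_{k+1}$ is mean-zero and independent of the rest, the cross terms with $\nu_{k+1}$ vanish, leaving $\E_{\xi_{k+1}}[\|m^{k+1}-\nabla f(x^{k+1})\|^2] \le \|(1-\theta)(m^k-\nabla f(x^k)) + (1-\theta)(\nabla f(x^k)-\nabla f(x^{k+1})) + \theta b_{k+1}\|^2 + \theta^2\sigma^2(\tau_{k+1})$. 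I would then expand the first square using Young's inequality with weights tuned to $\theta$: bound $\|(1-\theta)v + w\|^2 \le (1-\theta)\|v\|^2 + \theta^{-1}\|w\|^2$ applied with $v = m^k-\nabla f(x^k)$ and $w = (1-\theta)(\nabla f(x^k)-\nabla f(x^{k+1})) + \theta b_{k+1}$, then split $w$ further. Using $L_f$-Lipschitzness, $\|\nabla f(x^k)-\nabla f(x^{k+1})\|\le L_f\|x^{k+1}-x^k\|$, and the step $\|x^{k+1}-x^k\|\le\eta\|m^k\|$ together with... more carefully, I would keep $\|x^{k+1}-x^k\|^2$ as a quantity to be absorbed by the descent in $f$. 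This yields something of the form $\E_{\xi_{k+1}}[\|m^{k+1}-\nabla f(x^{k+1})\|^2] \le (1-\theta/2)\|m^k-\nabla f(x^k)\|^2 + \frac{cL_f^2}{\theta}\|x^{k+1}-x^k\|^2 + c'\theta\Delta^2(\tau_{k+1}) + \theta^2\sigma^2(\tau_{k+1})$ for absolute constants $c,c'$.

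Then I would combine with the descent lemma \eqref{ineq:descent}: $f(x^{k+1})\le f(x^k) + \nabla f(x^k)^T(x^{k+1}-x^k) + \frac{L_f}{2}\|x^{k+1}-x^k\|^2$, and use the optimality of the proximal step \eqref{prox-step-ncvx}, which gives $h(x^{k+1}) + \frac{1}{\eta}(x^{k+1}-x^k+\eta m^k)^T(z-x^{k+1})\ge ... $; specializing $z=x^k$ and reorganizing produces a term $-\frac{1}{\eta}\|x^{k+1}-x^k\|^2$ plus a cross term $(m^k-\nabla f(x^k))^T(x^{k+1}-x^k)$ that I split by Young's inequality into $\frac{\eta}{\text{const}}\|m^k-\nabla f(x^k)\|^2 + \frac{\text{const}}{\eta}\|x^{k+1}-x^k\|^2$. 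Adding $\frac{1}{L_f}$ times the error recursion to the $F$-descent, the $\|m^k-\nabla f(x^k)\|^2$ coefficient on the right becomes $\frac{1}{L_f}(1-\theta/2) + O(\eta)$ which must be $\le \frac{1}{L_f}$ minus a margin — this is where $\theta = 4L_f\eta$ is calibrated so that $\frac{\theta}{2L_f} = 2\eta$ dominates the $O(\eta)$ contributions. The $\|x^{k+1}-x^k\|^2$ terms, of total order $\frac{1}{\eta} + \frac{L_f^2\eta}{\theta} = \frac{1}{\eta} + \frac{L_f}{4}$, must net negative after the $+\frac{L_f}{2}\|x^{k+1}-x^k\|^2$ from descent and the positive Young contributions; the constraint $\eta\le\frac{1}{4L_f}$ makes $\frac{1}{\eta}$ large enough to win. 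Finally I would invoke the standard relation that for the prox step, $\frac{1}{\eta^2}\|x^{k+1}-x^k\|^2 \gtrsim \mathrm{dist}^2(0,\partial F(x^{k+1}))$ up to an additive $\|m^k-\nabla f(x^{k+1})\|^2$ term (since $\frac{1}{\eta}(x^k-x^{k+1}) - m^k \in \partial h(x^{k+1})$, so $\nabla f(x^{k+1}) + \frac{1}{\eta}(x^k-x^{k+1}) - m^k \in \partial F(x^{k+1})$), triangle-inequality the $m^k-\nabla f(x^{k+1})$ into $m^k-\nabla f(x^k)$ plus a Lipschitz term in $\|x^{k+1}-x^k\|$, and absorb everything to extract the clean $-\frac{\eta}{16}\mathrm{dist}^2(0,\partial F(x^{k+1}))$ on the right.

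The main obstacle I anticipate is the bookkeeping of constants so that, simultaneously, (a) the coefficient of $\|m^k-\nabla f(x^k)\|^2$ in $\E_{\xi_{k+1}}[\mathcal P_{k+1}]$ is at most that in $\mathcal P_k$ with enough slack to absorb the $\mathrm{dist}^2$ extraction, (b) all $\|x^{k+1}-x^k\|^2$ coefficients sum to something negative, and (c) the residual is exactly $\frac{\eta}{16}\mathrm{dist}^2$. Each of the three constraints $\eta\le\frac{1}{4L_f}$, $\theta=4L_f\eta$ is chosen precisely to thread this needle, so the work is careful constant-chasing rather than a conceptual difficulty; I would carry the constants symbolically through the Young inequalities and only fix their numerical values at the end to match the stated $8\eta\Delta^2(\tau_{k+1})$ and $16L_f\eta^2\sigma^2(\tau_{k+1})$.
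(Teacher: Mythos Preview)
Your proposal is correct and follows essentially the same three-ingredient strategy as the paper: an error recursion for $\|m^{k+1}-\nabla f(x^{k+1})\|^2$ (the paper's \eqref{ineq:rec-rela-pm}), an $F$-descent inequality from the prox step (the paper's \eqref{ineq:descent-ncvx}), and the subdifferential characterization $\frac{1}{\eta}(x^k-x^{k+1})-m^k\in\partial h(x^{k+1})$ to extract $\mathrm{dist}^2(0,\partial F(x^{k+1}))$ (the paper's \eqref{ineq:bd-stat-cond-ncvx}), all combined via the potential with $\theta=4L_f\eta$. Your explicit bias--noise decomposition $G_{\tau_{k+1}}=\nabla f(x^{k+1})+b_{k+1}+\nu_{k+1}$ is a slightly cleaner bookkeeping device than the paper's direct Young's-inequality treatment of the cross term, but it leads to the same recursion and the same constants once you chase them through.
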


\begin{remark}
The descent relation \eqref{ineq:desc-pot-ncvx} resembles the one that can be established for SPGMs under light-tailed noise (see, e.g., \cite[Lemma 5]{gao2024non}), except that the bias is no longer zero and the variance is no longer constant; both the bias $\Delta(\tau)$ and the variance $\sigma^2(\tau)$ depend on the clipping threshold $\tau$.    
\end{remark}

The following theorem provides a complexity bound for Algorithm 3 to compute an $\epsilon$-stochastic stationary point of
\eqref{ucpb}. Its proof is deferred to Section \ref{subsec:pf-complexity-ncvx}.

\begin{theorem}[{{\bf nonconvex}}]\label{thm:cmplex-ncvx}
Suppose that Assumption \ref{asp:basic} holds. Let $\epsilon\in(0,1)$ be arbitrarily chosen, and let $K$ be a pre-chosen maximum iteration number for running Algorithm \ref{alg:c-spgm-m}. Let $L_f$ be given in Assumption \ref{asp:basic}, and $F_{\mathrm{low}}$, $\tau_{(1)}$, $\mathcal{T}(\cdot)$ and $\sigma^2(\cdot)$ be defined in \eqref{def:Uf-Dh}, \eqref{def:tau-underline}, \eqref{def:T-eps} and \eqref{def:sigma-tau}, respectively. Let   
\begin{align}\label{def:hat-eta-theta-tau}
\hat{\tau}_\epsilon \in\mathcal{T}\Big(\frac{\epsilon}{32}\Big),\quad \hat\eta_\epsilon = \frac{1}{4}\min\bigg\{\frac{1}{L_f},\bigg(\frac{F(x^0) -F_{\mathrm{low}} + \sigma^2(\tau_{(1)})/L_f}{KL_f\sigma^2(\hat{\tau}_\epsilon)}\bigg)^{1/2}\bigg\},\quad \hat{\theta}_\epsilon=4L_f\hat{\eta}_\epsilon.
\end{align}
Let $\{x^k\}$ be generated by Algorithm \ref{alg:c-spgm-m} with $(\eta,\theta)=(\hat\eta_\epsilon,\hat\theta_\epsilon)$, $\tau_0=\tau_{(1)}$, and $\tau_k=\hat{\tau}_\epsilon$ for all $k\ge1$. Then, $\E[\mathrm{dist}^2(0,\partial F(x^{\iota_K}))]\le\epsilon^2$ for all $K$ satisfying
\begin{align}\label{K-lwbd-ncvx}
K\ge \max\bigg\{1, \frac{512L_f(F(x^0) -F_{\mathrm{low}} + \sigma^2({\tau}_{(1)})/L_f)}{3\epsilon^2},\frac{256^2L_f(F(x^0) -F_{\mathrm{low}} + \sigma^2({\tau}_{(1)})/L_f)\sigma^2(\hat{\tau}_\epsilon)}{\epsilon^4}\bigg\},    
\end{align}
where $\iota_K$ is uniformly drawn from $\{1,\ldots,K\}$.
\end{theorem}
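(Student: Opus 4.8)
The plan is to run a potential-telescoping argument on the descent inequality of Lemma~\ref{lem:pot-decrease}, exploiting the fact that the chosen threshold $\hat{\tau}_\epsilon\in\mathcal{T}(\epsilon/32)$ forces the clipping bias $\Delta(\hat{\tau}_\epsilon)$ to be small while the chosen step size $\hat\eta_\epsilon$ is calibrated against the variance proxy $\sigma^2(\hat{\tau}_\epsilon)$. First I would check that the hypotheses of Lemma~\ref{lem:pot-decrease} are met: $\mathcal{T}(\epsilon/32)$ is nonempty by Theorem~\ref{thm:trade-off1}(i), and every element of it is $\ge\tau_{(1)}$ by definition \eqref{def:T-eps}, so the thresholds $\tau_0=\tau_{(1)}$, $\tau_k=\hat{\tau}_\epsilon$ $(k\ge1)$ are admissible; moreover $\hat\eta_\epsilon\le\frac1{4L_f}$ and $\hat\theta_\epsilon=4L_f\hat\eta_\epsilon\in(0,1]$ by \eqref{def:hat-eta-theta-tau}. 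Then Lemma~\ref{lem:pot-decrease} yields, for every $k\ge0$, $\E_{\xi_{k+1}}[\mathcal{P}_{k+1}]\le\mathcal{P}_k-\frac{\hat\eta_\epsilon}{16}\mathrm{dist}^2(0,\partial F(x^{k+1}))+8\hat\eta_\epsilon\Delta^2(\hat{\tau}_\epsilon)+16L_f\hat\eta_\epsilon^2\sigma^2(\hat{\tau}_\epsilon)$, where I used $\tau_{k+1}=\hat{\tau}_\epsilon$ for all $k\ge0$; since $\hat{\tau}_\epsilon\in\mathcal{T}(\epsilon/32)$ we have $\Delta(\hat{\tau}_\epsilon)\le\epsilon/32$, hence $8\hat\eta_\epsilon\Delta^2(\hat{\tau}_\epsilon)\le\hat\eta_\epsilon\epsilon^2/128$.

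Next I would take total expectations (tower property), rearrange to isolate $\frac{\hat\eta_\epsilon}{16}\E[\mathrm{dist}^2(0,\partial F(x^{k+1}))]$, and sum over $k=0,\dots,K-1$, telescoping the potential terms to get $\frac{\hat\eta_\epsilon}{16}\sum_{k=0}^{K-1}\E[\mathrm{dist}^2(0,\partial F(x^{k+1}))]\le\E[\mathcal{P}_0]-\E[\mathcal{P}_K]+\frac{K\hat\eta_\epsilon\epsilon^2}{128}+16KL_f\hat\eta_\epsilon^2\sigma^2(\hat{\tau}_\epsilon)$. For the boundary terms I would use that $\{x^k\}\subset\mathrm{dom}\,h$ to lower-bound $\E[\mathcal{P}_K]$ by $F_{\mathrm{low}}$, and that $m^0=G_{\tau_{(1)}}(x^0;\xi_0)$ together with the variance bound of Theorem~\ref{thm:trade-off1}(i) to get $\E[\mathcal{P}_0]\le f(x^0)+\sigma^2(\tau_{(1)})/L_f\le F(x^0)+\sigma^2(\tau_{(1)})/L_f$; hence $\E[\mathcal{P}_0]-\E[\mathcal{P}_K]\le\Delta_0:=F(x^0)-F_{\mathrm{low}}+\sigma^2(\tau_{(1)})/L_f$. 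Dividing through by $\hat\eta_\epsilon K/16$ and observing that $\iota_K$ uniform on $\{1,\dots,K\}$ makes $\E[\mathrm{dist}^2(0,\partial F(x^{\iota_K}))]$ exactly equal to the left-hand average, I reach $\E[\mathrm{dist}^2(0,\partial F(x^{\iota_K}))]\le\frac{16\Delta_0}{\hat\eta_\epsilon K}+\frac{\epsilon^2}{8}+256L_f\hat\eta_\epsilon\sigma^2(\hat{\tau}_\epsilon)$.

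Finally I would substitute $\hat\eta_\epsilon=\frac14\min\{1/L_f,(\Delta_0/(KL_f\sigma^2(\hat{\tau}_\epsilon)))^{1/2}\}$. Using $1/\hat\eta_\epsilon=4\max\{L_f,(KL_f\sigma^2(\hat{\tau}_\epsilon)/\Delta_0)^{1/2}\}$, the first term becomes $\max\{64L_f\Delta_0/K,\,64(L_f\Delta_0\sigma^2(\hat{\tau}_\epsilon)/K)^{1/2}\}$, and the third term is at most $64(L_f\Delta_0\sigma^2(\hat{\tau}_\epsilon)/K)^{1/2}$ (since $\hat\eta_\epsilon\le\frac14(\Delta_0/(KL_f\sigma^2(\hat{\tau}_\epsilon)))^{1/2}$). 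Each of these three quantities is $\le 7\epsilon^2/16$ once $K$ exceeds a fixed constant multiple of $L_f\Delta_0/\epsilon^2$ and of $L_f\Delta_0\sigma^2(\hat{\tau}_\epsilon)/\epsilon^4$ respectively, and the explicit constants in \eqref{K-lwbd-ncvx} are more than sufficient; adding the $\epsilon^2/8$ term then gives a total of at most $\epsilon^2$, which is the claim. The hard part, such as it is, is purely the case split induced by the $\min$ in $\hat\eta_\epsilon$ and the bookkeeping of numerical constants; the substantive content—how the bias enters through $\Delta(\hat{\tau}_\epsilon)$ and the variance through $\sigma^2(\hat{\tau}_\epsilon)$—is already packaged into Lemma~\ref{lem:pot-decrease} and Theorem~\ref{thm:trade-off1}, so I anticipate no genuine obstacle.
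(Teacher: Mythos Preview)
Your proposal is correct and follows essentially the same route as the paper: verify the hypotheses of Lemma~\ref{lem:pot-decrease}, telescope the potential, bound $\E[\mathcal{P}_0]-\E[\mathcal{P}_K]$ via Theorem~\ref{thm:trade-off1}(i) and the definition of $F_{\mathrm{low}}$, use $\Delta(\hat\tau_\epsilon)\le\epsilon/32$ for the bias term, and then trade off $\frac{16\Delta_0}{K\hat\eta_\epsilon}$ against $256L_f\hat\eta_\epsilon\sigma^2(\hat\tau_\epsilon)$. The only cosmetic difference is that the paper packages the last step as the elementary optimization Lemma~\ref{lem:tech-qm} (giving the bound $64L_f\Delta_0/K+128(L_f\Delta_0\sigma^2(\hat\tau_\epsilon)/K)^{1/2}$), whereas you do the equivalent direct case split on the $\min$ in $\hat\eta_\epsilon$; both lead to the same constants in \eqref{K-lwbd-ncvx}. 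One small caution: the potential in \eqref{def:pot-seq} should be read with $F$ in place of $f$ (as the proof of Lemma~\ref{lem:pot-decrease} makes clear), so your boundary estimates $\E[\mathcal{P}_K]\ge F_{\mathrm{low}}$ and $\E[\mathcal{P}_0]\le F(x^0)+\sigma^2(\tau_{(1)})/L_f$ go through directly without the intermediate step $f(x^0)\le F(x^0)$.
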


\begin{remark}
From Theorem \ref{thm:cmplex-ncvx}, we observe that Algorithm \ref{alg:c-spgm-m} can achieve a first-order oracle complexity of $\widetilde{\mathcal{O}}(\sigma^2(\hat\tau_\epsilon)/\epsilon^4)$ with $\hat\tau_\epsilon\in\mathcal{T}(\frac{\epsilon}{32})$. When $\alpha\in(1,2]$, in view of Theorem \ref{thm:trade-off1}(ii), one can select
\begin{align*}
\hat\tau_\epsilon=\tau_1\Big(\frac{\epsilon}{32}\Big)\in\mathcal{T}\Big(\frac{\epsilon}{32}\Big),    
\end{align*}
and it then follows from the definitions of $\sigma^2(\cdot)$ and $\tau_1(\cdot)$ in \eqref{def:sigma-tau} and \eqref{def:tau1-eps}, respectively, that the complexity becomes $\mathcal{O}(\epsilon^{-(3\alpha-2)/(\alpha-1)})$, which recovers the best-known results under condition \eqref{asp:heavy-tail}; e.g., see \cite{fatkhullin2025can,he2025complexity,sadiev2023high,zhang2020adaptive}. If Assumption \ref{asp:decay} is further imposed, then one can select
\begin{align*}
\hat\tau_\epsilon=\tau_2\Big(\frac{\epsilon}{32}\Big)\in\mathcal{T}\Big(\frac{\epsilon}{32}\Big), 
\end{align*}
and it follows from the definitions of $\sigma^2(\cdot)$ and $\tau_2(\cdot)$ in \eqref{def:sigma-tau} and \eqref{def:tau-eps-2}, respectively, that the complexity becomes $\mathcal{O}(\epsilon^{-(3\alpha+2)/\alpha})$.
\end{remark}

\section{Numerical Results}\label{sec:nr}
In this section, we present numerical experiments evaluating the convergence behavior of Algorithms \ref{alg:c-spgm} and \ref{alg:c-spgm-m} under different clipping thresholds and noise levels. Both algorithms are implemented in \textsc{Matlab}, and all computations are performed on a laptop equipped with an Intel Core i9-14900HX processor (2.20 GHz) and 32 GB of RAM.

In implementing Algorithms \ref{alg:c-spgm} and \ref{alg:c-spgm-m}, we simulate noisy gradient evaluations by injecting heavy-tailed noise into the gradients as $G(x;\xi)=\nabla f(x) + \xi$, where $\xi$ is an $n$-dimensional random vector with each coordinate follows a heavy-tailed distribution with tail index $\alpha\in(0,2]$. We generate each coordinate of $\xi$ as $\xi_{(i)}=YU^{-1/\alpha}$, where $Y$ follows a Rademacher distribution (i.e., $Y=1$ with probability $0.5$ and $Y=-1$ with probability $0.5$), and $U$ follows a uniform distribution over $(0,1]$. Then one can verify that the density function of $\xi_{(i)}$ is symmetric with respect to $0$, and also that
\begin{align*}
\mathbb{P}(|\xi_{(i)}| \ge \tau) = \mathbb{P}(U^{-1/\alpha} \ge \tau) = \mathbb{P}(U \le \tau^{-\alpha}) = \tau^{-\alpha}\qquad\forall \tau\ge1.  
\end{align*}
Hence, the density function of $\xi_{(i)}$, denoted by $p_{i}(z)$ is of order $\mathcal{O}(|z|^{-(\alpha+1)})$, and the tail index of $\xi_{(i)}$ is $\alpha$.

\subsection{$\ell_1$-Regularized Convex Regression}
In this subsection, we consider the $\ell_1$-regularized least-squares problem:
\begin{align}\label{pb:lasso}
\min_{l\le x\le u}\ \frac{1}{2}\|Ax-b\|^2 + \lambda\|x\|_1,    
\end{align}
where $A\in\R^{m\times n}$, $b\in\R^n$ with $m=200$ and $n=100$, $u=-l=100\cdot\mathbf{1}$ with $\mathbf{1}$ being the all-ones vector, and $\lambda=1$. We randomly generate $A$ and $b$, with each entry sampled from the standard normal distribution. We apply Algorithm \ref{alg:c-spgm} with different clipping thresholds $\tau>0$ to solve \eqref{pb:lasso} under noise with different tail indices $\alpha\in(0,2]$. For every run of Algorithm \ref{alg:c-spgm} with specific tail index $\alpha$ and clipping threshold $\tau$, we initialize the algorithm at the all-zero vector and tune the step size to optimize its individual performance.

\begin{figure}[ht]
\centering
\includegraphics[width=.24\linewidth]{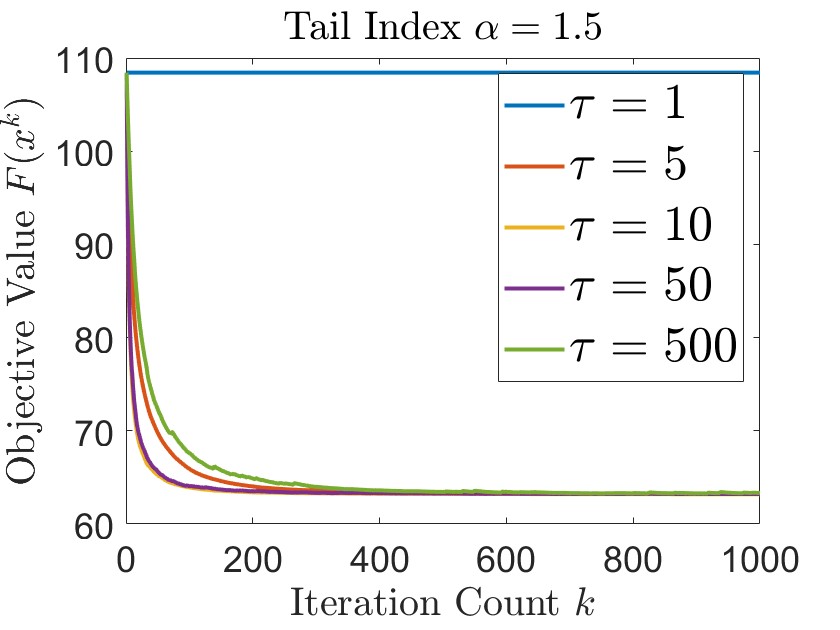}\includegraphics[width=.24\linewidth]{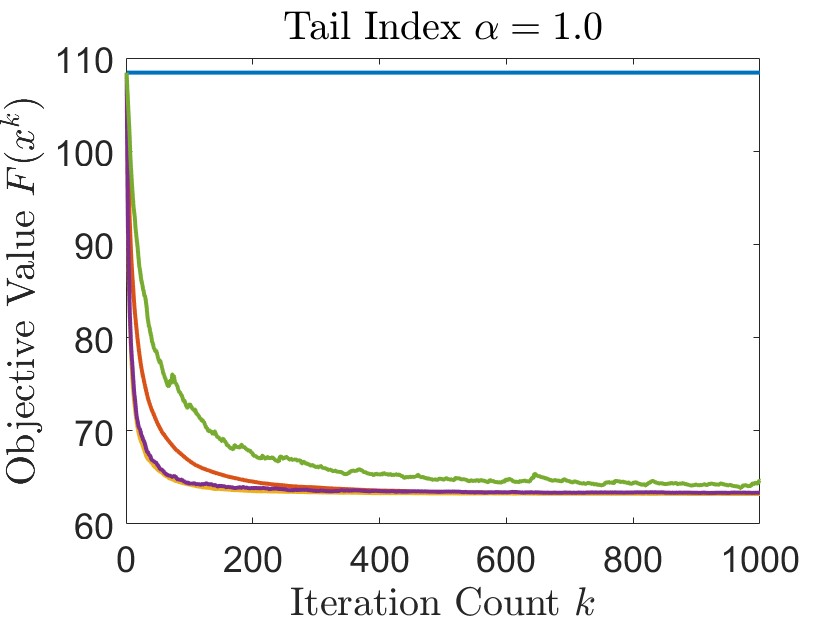}\includegraphics[width=.24\linewidth]{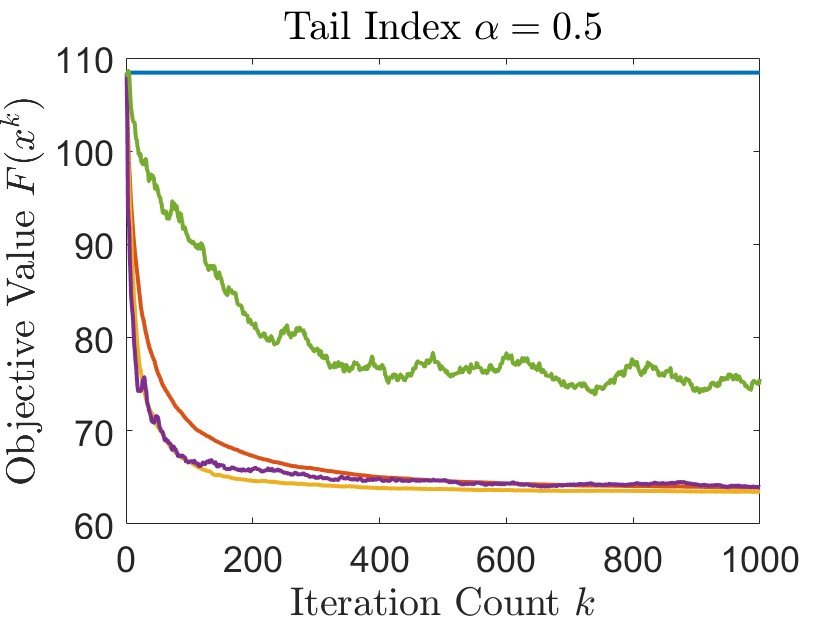}\includegraphics[width=.24\linewidth]{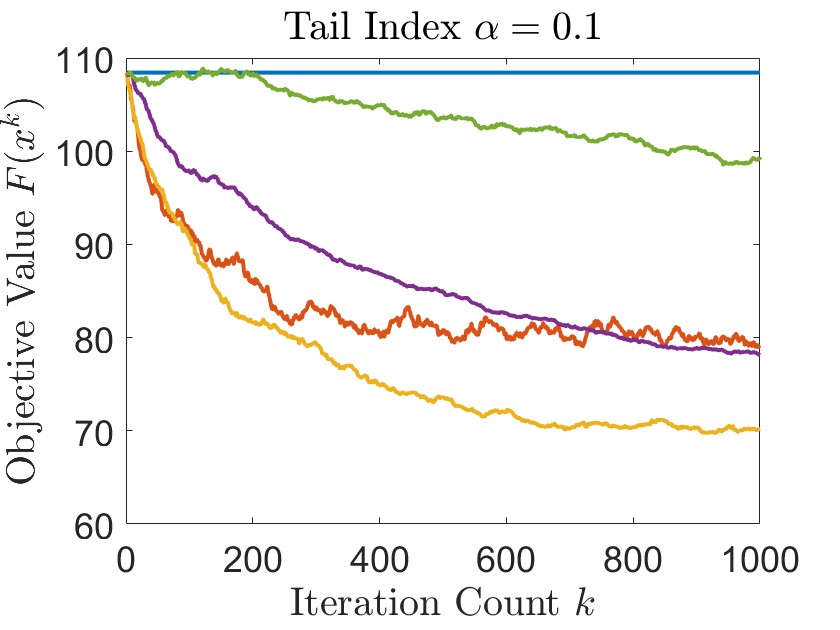}\\
\includegraphics[width=.24\linewidth]{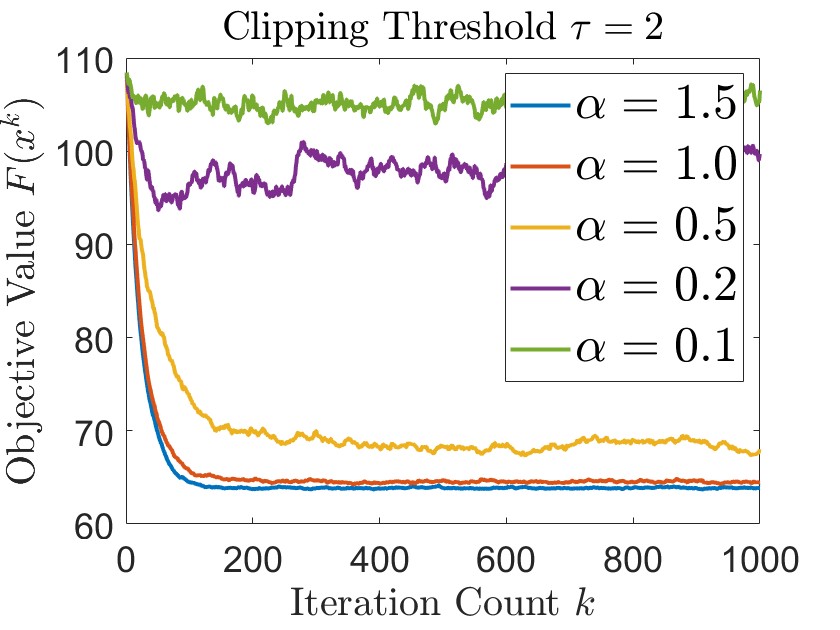}\includegraphics[width=.24\linewidth]{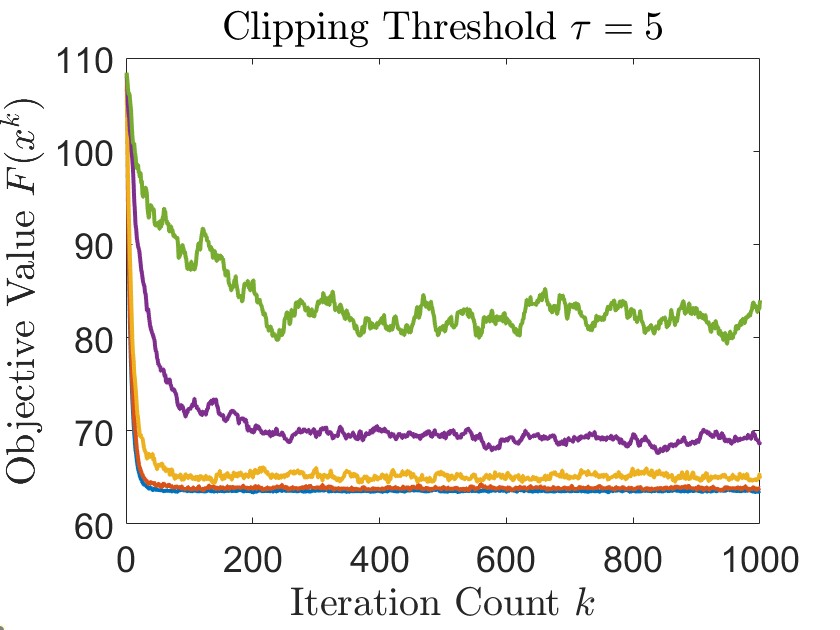}\includegraphics[width=.24\linewidth]{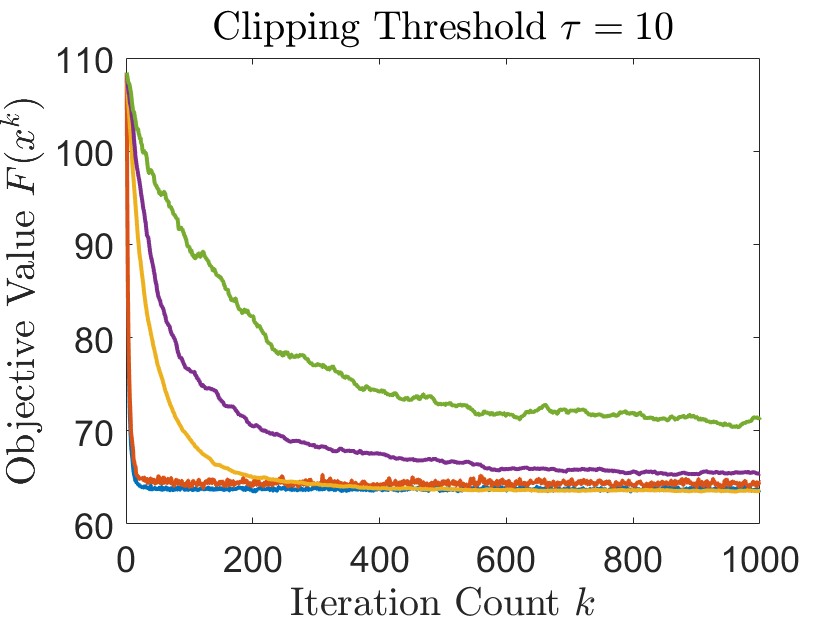}\includegraphics[width=.24\linewidth]{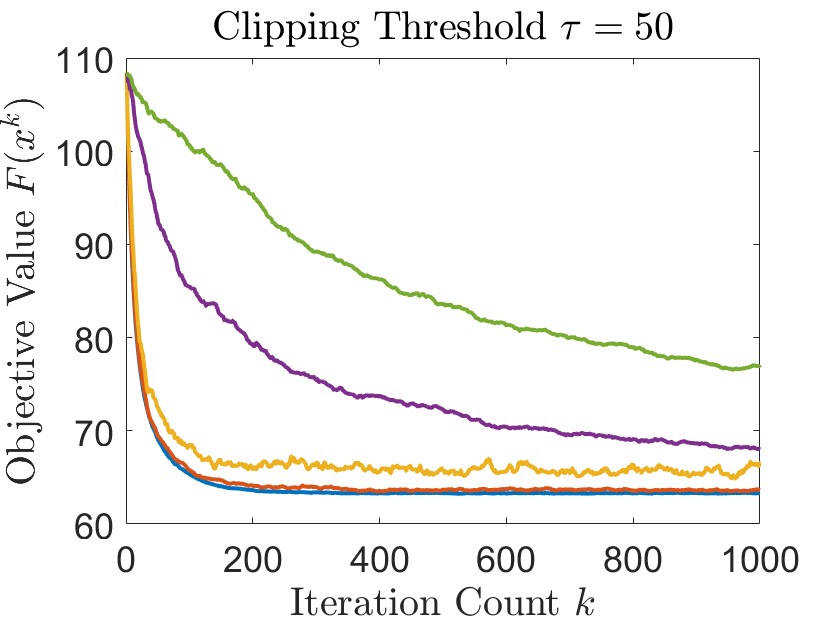}
\caption{Convergence Behavior of Algorithm \ref{alg:c-spgm} for Solving \eqref{pb:lasso} under Different Clipping Thresholds and Noise Levels}
\label{fig:cvx-behave}
\end{figure}

The convergence behavior of Algorithm \ref{alg:c-spgm} is presented in Figure \ref{fig:cvx-behave}. Specifically, the first row presents the convergence behavior with different clipping thresholds $\tau>0$, for fixed heavy-tail indices $\alpha$, and the second row presents the convergence behavior under different noise levels $\alpha\in(0,2]$, for fixed clipping thresholds $\tau>0$. From the first row, we can see that the clipped SPGM can converge if a suitable clipping threshold is applied even in cases where the noise has no finite mean (i.e., $\alpha \le 1$). However, if the clipping threshold is too small, the clipped SPGM fails to reduce the objective value due to the large bias introduced by clipping. Conversely, if the clipping threshold is too large and the noise has no finite mean (i.e., $\alpha \le 1$), the clipped SPGM will also fail to converge, because in this case it is close to the vanilla SPGM, whose convergence appears to be disrupted by the infinite-mean noise. From the second row, we observe that for noise with a less heavy tail (i.e., $\alpha\ge1$), the clipped SPGM converges with all tested clipping thresholds. However, for noise with a heavier tail (i.e., $\alpha\le0.5$), the algorithm performs well only within a narrower range of clipping thresholds. This suggests that, for heavier-tailed noise, the range of suitable clipping thresholds becomes increasingly limited.

\subsection{$\ell_1$-Regularized Nonconvex Regression}
In this subsection, we consider the $\ell_1$-regularized nonconvex regression problem:
\begin{align}\label{pb:ncvx-l1}
\min_{l\le x\le u}\ \sum_{i=1}^m\phi(a_i^Tx-b_i) + \lambda\|x\|_1,    
\end{align}
where $\phi(t)=t^2/(1+t^2)$, $a_i\in\R^{n}$, $b_i\in\R$, $i=1,\ldots,m$, with $m=200$ and $n=100$, $u=-l=100\cdot\mathbf{1}$ with $\mathbf{1}$ being the all-ones vector, and $\lambda=1$. We randomly generate $a_i$'s and $b_i$'s, with each element sampled from the standard normal distribution. We apply Algorithm \ref{alg:c-spgm-m} with different clipping thresholds $\tau>0$ to solve \eqref{pb:ncvx-l1} under noise with different tail indices $\alpha\in(0,2]$. For every run of Algorithm \ref{alg:c-spgm-m} with specific tail index $\alpha$ and clipping threshold $\tau$, we initialize the algorithm at the all-zero vector and tune the step size and weighting parameter to optimize its individual performance.

\begin{figure}[ht]
\centering
\includegraphics[width=.24\linewidth]{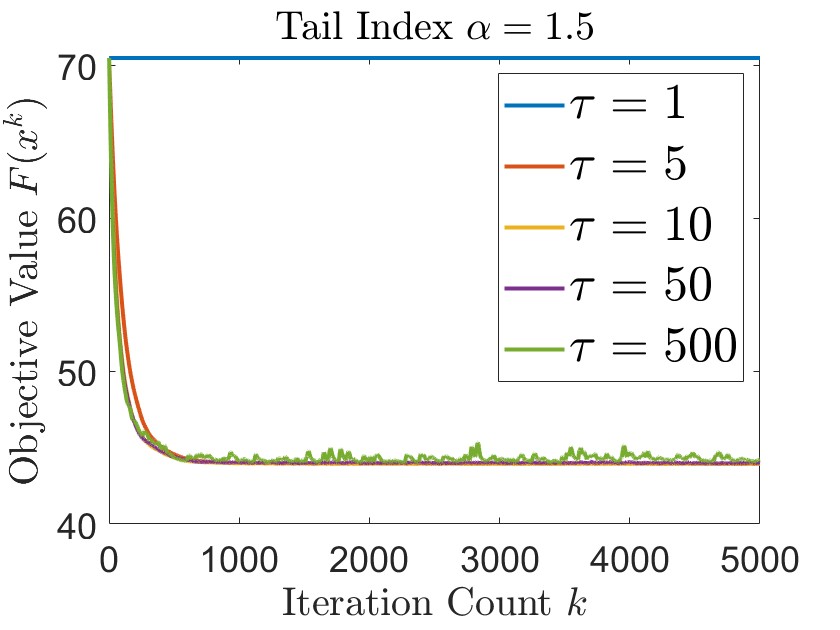}\includegraphics[width=.24\linewidth]{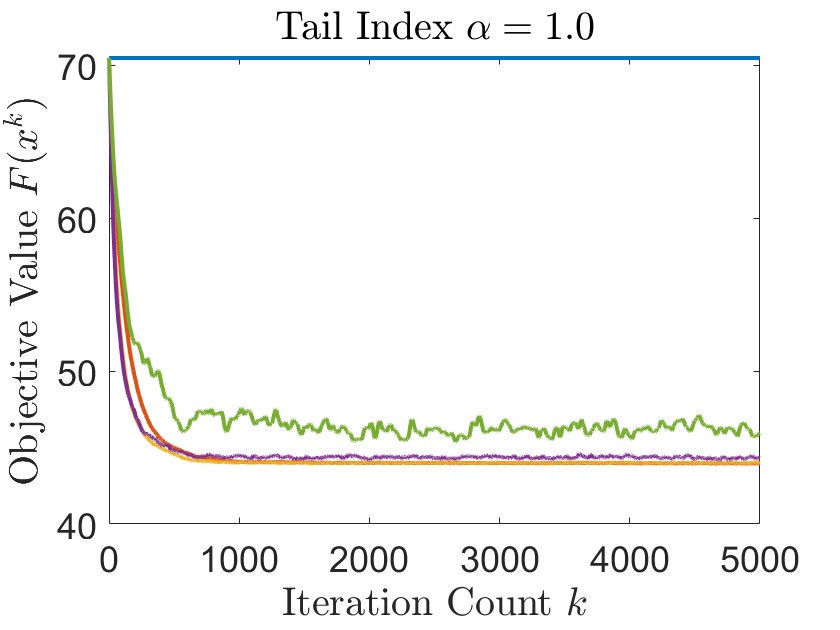}\includegraphics[width=.24\linewidth]{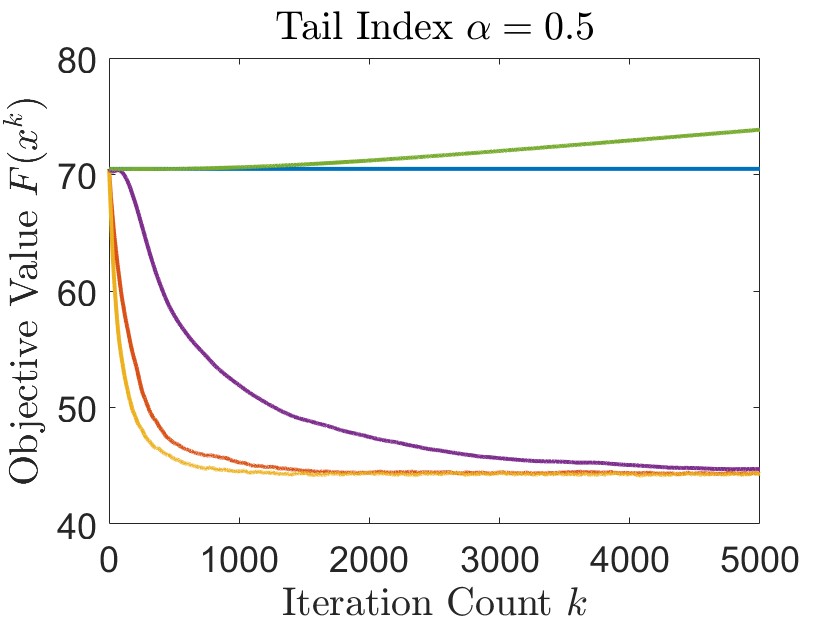}\includegraphics[width=.24\linewidth]{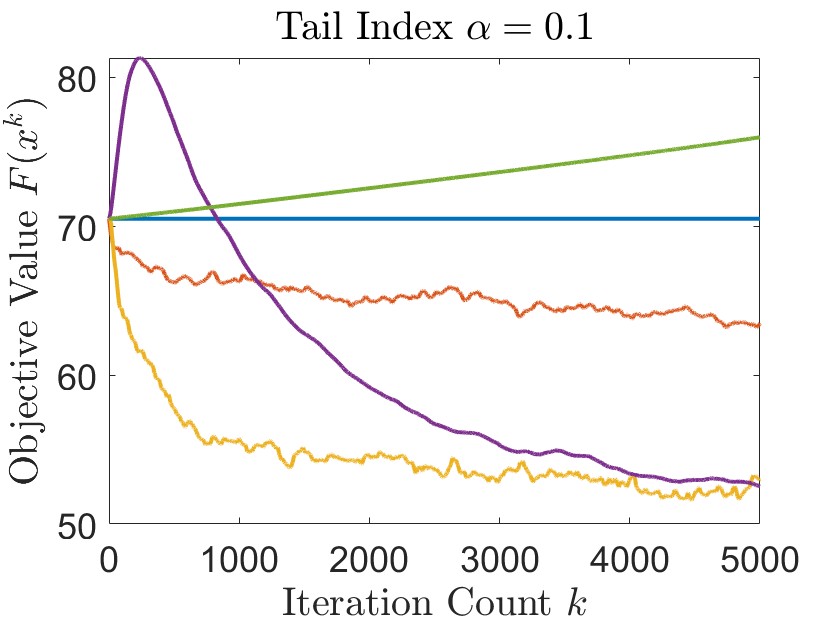}\\
\includegraphics[width=.24\linewidth]{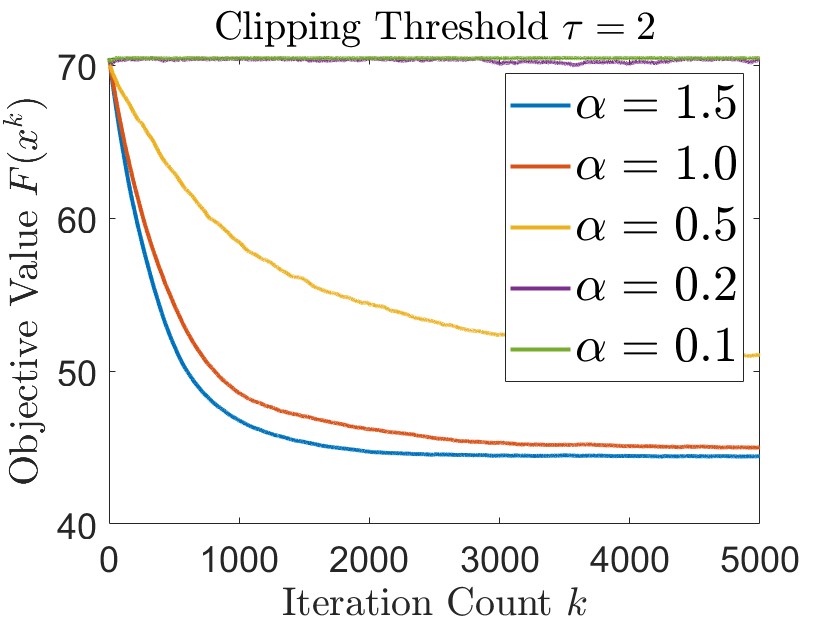}\includegraphics[width=.24\linewidth]{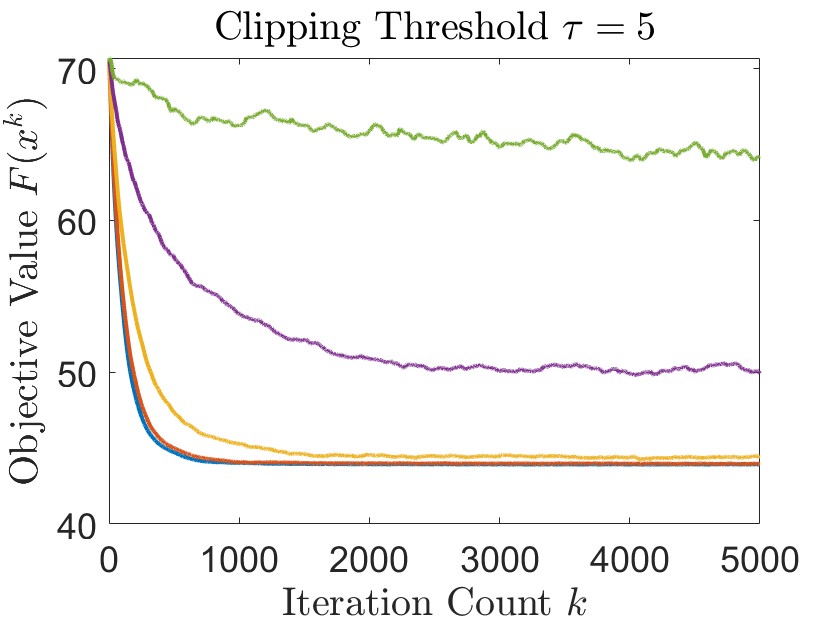}\includegraphics[width=.24\linewidth]{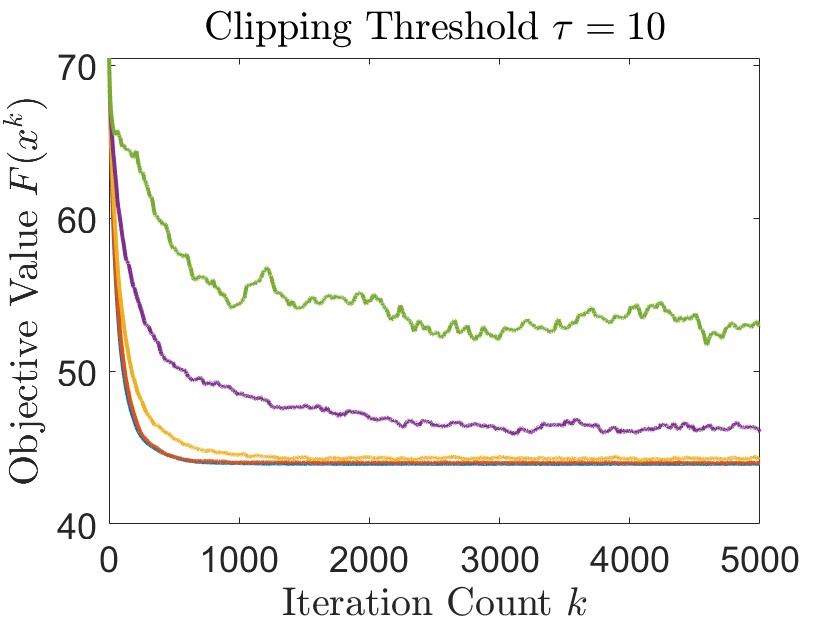}\includegraphics[width=.24\linewidth]{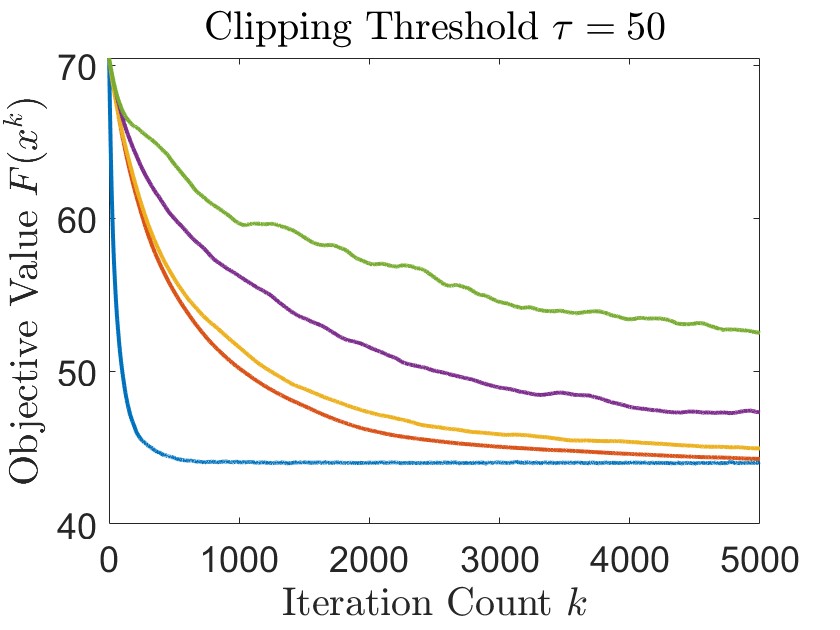}
\label{fig:ncvx-behave}
\caption{Convergence Behavior of Algorithm \ref{alg:c-spgm-m} for Solving \eqref{pb:ncvx-l1} under Different Clipping Thresholds and Noise Levels}
\end{figure}

The convergence behavior of Algorithm \ref{alg:c-spgm-m} is presented in Figure \ref{fig:ncvx-behave}. Specifically, the first row presents the convergence behavior with different clipping thresholds $\tau>0$, for fixed heavy-tail indices $\alpha$, and the second row presents the convergence behavior under different noise levels $\alpha\in(0,2]$, for fixed clipping thresholds $\tau>0$. From the first row, we observe that the clipped SPGM with momentum can converge when a moderate clipping threshold is applied, even in cases where the noise has no finite mean (i.e., $\alpha\le1$). However, if the clipping threshold is too small, the algorithm fails to reduce the objective value due to the large bias introduced by clipping. On the other hand, if the clipping threshold is too large, the clipped SPGM with momentum becomes close to the vanilla SPGM, which fails to converge or may even diverge under heavier-tailed noise with an infinite mean (i.e., $\alpha\le0.5$). From the second row, we see that when the noise has a less heavy tail (i.e., $\alpha\ge1$), the clipped SPGM with momentum converges across the tested clipping thresholds selected from $[2,50]$. In contrast, for noise with a heavier tail (i.e., $\alpha\le0.5$), the algorithm performs well only within a limited range of thresholds. This is partly because a too-small clipping threshold introduces large bias, while a too-large threshold induces excessive variance; therefore, a moderate clipping threshold is required to ensure optimal performance. These observations also indicate that as the noise becomes heavier-tailed, the interval of suitable clipping thresholds may become increasingly narrow.

\section{Proof of the Main Results}\label{sec:pf}

In this section, we provide the proofs of our main results presented in Sections \ref{sec:not}, \ref{sec:bias-var}, and \ref{sec:complexity}, specifically, Proposition \ref{lem:htbd-density}, Lemmas \ref{lem:bias-asym} to \ref{lem:pot-decrease}, and Theorems \ref{thm:trade-off1} to \ref{thm:cmplex-ncvx}.

\subsection{Proof of the Main Results in Section \ref{sec:not}}\label{subsec:pf-not}

In this subsection, we prove Proposition \ref{lem:htbd-density} and Lemma \ref{lem:bias-asym}.

\begin{proof}[\textbf{Proof of Proposition \ref{lem:htbd-density}}]
We first prove statement (i). Fix any $\beta\in(0,\alpha)$. Then one has that
\begin{align*}
\E[|\zeta|^\beta] & = \int^\infty_{-\infty} |z|^\beta p(z) \mathrm{d}z = \int^{z_1}_{-z_1} |z|^\beta p(z) \mathrm{d}z + \int_{|z|> z_1} |z|^\beta p(z)\mathrm{d}z  \\
&\le \int^{z_1}_{-z_1} |z|^\beta p(z) \mathrm{d}z + \int_{|z|\ge z_1} \frac{M}{|z|^{\alpha-\beta+1}}\mathrm{d}z = \int^{z_1}_{-z_1} |z|^\beta p(z) \mathrm{d}z + 2\int_{z\ge z_1} \frac{M}{z^{\alpha-\beta+1}}\mathrm{d}z\\
&= \int^{z_1}_{-z_1} |z|^\beta p(z) \mathrm{d}z + \frac{2M}{(\alpha-\beta)z_1^{\alpha-\beta}} < \infty,
\end{align*}
where the inequality follows from $p(z)\le M/|z|^{\alpha+1}$ for all $|z|\ge z_1$. Hence, statement (i) holds. 

We next prove statement (ii). By Markov's inequality, one has that 
\begin{align*}
\int_{|z|\ge\tau} p(z)\mathrm{d}z = \mathbb{P}(|\zeta|\ge \tau) = \mathbb{P}(|\zeta|^\alpha\ge \tau^\alpha)  \le \frac{\E[|\zeta|^\alpha]}{\tau^\alpha} \le \frac{M}{\tau^\alpha}.
\end{align*}
Let $z^\prime\ge 2z_1$ be arbitrarily chosen. Notice that $p(\cdot)$ is nonincreasing over $[z_1,\infty)$. It follows from the above inequality that 
\begin{align*}
\frac{z^\prime p(z^\prime)}{2}\le \int_{z^\prime/2}^{z^\prime} p(z)\mathrm{d}z \le M\Big(\frac{2}{z^\prime}\Big)^\alpha.
\end{align*}
Rearranging terms of  this inequality, we obtain that $p(z)\le M(2/|z|)^{\alpha+1}$ holds for any $z\ge 2z_1$. For any $z\le -2z_1$, the same argument applies to $q(\cdot):=p(-\cdot)$. Hence, statement (ii) holds as desired.
\end{proof}

\begin{proof}[\textbf{Proof of Lemma \ref{lem:bias-asym}}]
Since $\zeta$ has mean zero, we have
\begin{align}\label{ppt-mean-0}
\int^\tau_{-\tau} zp(z)\mathrm{d}z = - \int^{\infty}_\tau zp(z)\mathrm{d}z  - \int^{-\tau}_{-\infty} zp(z)\mathrm{d}z.
\end{align}
It then follows that for any $\tau\ge z_1$,
\begin{align*}
\int^{\infty}_\tau zp(z)\mathrm{d}z & \le \int^{\infty}_\tau M z^{-\alpha}\mathrm{d}z = \frac{M}{(\alpha-1)\tau^{\alpha-1}},\\
-\int_{-\infty}^{-\tau} zp(z)\mathrm{d}z & \le -\int_{-\infty}^{-\tau} M (-z)^{-\alpha}\mathrm{d}z = \frac{M}{(\alpha-1)\tau^{\alpha-1}}.
\end{align*}
These along with \eqref{ppt-mean-0} imply the first relation in \eqref{decay-rate-a}. We next prove the second relation in \eqref{decay-rate-a}. Notice that 
\begin{align*}
\Big|\int^\infty_\tau(p(z)-p(-z))\mathrm{d}z\Big| \le \int^\infty_\tau(p(z)+p(-z))\mathrm{d}z \le \int^\infty_\tau 2M z^{-(\alpha+1)}\mathrm{d}z = \frac{2M}{\alpha \tau^\alpha}  
\end{align*}
Hence, the second relation in \eqref{decay-rate-a} holds as desired.
\end{proof}

\subsection{Proof of the Main Results in Section \ref{sec:bias-var}}\label{subsec:pf-bias-var}

In this subsection, we prove Lemma \ref{lem:bias-var-decomp}, and Theorems \ref{thm:trade-off1} and \ref{thm:trade-off2}.
\begin{proof}[\textbf{Proof of Lemma \ref{lem:bias-var-decomp}}]
We first prove \eqref{ineq:1dim-bias}. By the definition of the projection operator, we have
\begin{align}\label{proj-a+v}
\Pi_{[-\tau,\tau]}(a+z) = \left\{\begin{array}{cc}
a+z   &\text{if}\ |a+z|<\tau,\\
\mathrm{sgn}(a+z)\tau &\text{if}\ |a+ z|\ge\tau
\end{array}\right.    \qquad\forall z\in\R.
\end{align}
By splitting the integral into subintervals and rearranging terms, we derive that for all $\tau>0$,
\begin{align}
\E\big[\Pi_{[-\tau,\tau]}(a+\zeta)\big] - a & = \int_{-\infty}^{\infty}\big(\Pi_{[-\tau,\tau]}(a+z) - a\big) p(z)\mathrm{d}z \nonumber\\
& = \int_{-\tau-a}^{\tau-a} zp(z)\mathrm{d}z - (\tau+a) \int_{-\infty}^{-\tau-a}p(z)\mathrm{d}z + (\tau-a)\int_{\tau-a}^\infty p(z)\mathrm{d}z\nonumber\\
& = \int_{-\tau}^{\tau}zp(z)\mathrm{d}z + \int_{-\tau-a}^{-\tau}zp(z)\mathrm{d}z - \int_{\tau-a}^\tau zp(z)\mathrm{d}z\nonumber\\
&\qquad\ - \tau\int_{-\infty}^{-\tau}p(z)\mathrm{d}z + \tau\int_\tau^\infty p(z) \mathrm{d}z + \tau \int_{-\tau-a}^{-\tau} p(z)\mathrm{d}z + \tau\int_{\tau-a}^\tau p(z) \mathrm{d}z\nonumber\\
&\qquad\ - a\Big(\int^{-\tau-a}_{-\infty}p(z)\mathrm{d}z + \int^{\infty}_{\tau-a}p(z)\mathrm{d}z\Big)\nonumber\\
& = \int_{-\tau}^\tau zp(z)\mathrm{d}z + \tau\int_\tau^\infty (p(z) - p(-z)) \mathrm{d}z\nonumber\\
&\qquad\ +  \int_{-\tau-a}^{-\tau}(z+\tau)p(z)\mathrm{d}z - \int_{\tau-a}^{\tau}(z-\tau)p(z)\mathrm{d}z\nonumber\\
&\qquad\ - a\Big(\int^{-\tau-a}_{-\infty}p(z)\mathrm{d}z + \int^{\infty}_{\tau-a}p(z)\mathrm{d}z\Big).\label{pf-bias-decom}
\end{align}
Using the fact that $p(z)\le M_2/|z|^{\alpha+1}$ for all $|z|\ge z_1$, we obtain that for all $\tau\ge z_1+|a|$,
\begin{align}
&\Big|\int_{\tau-a}^{\tau}(z-\tau)p(z)\mathrm{d}z\Big| \le \int_{\tau-a}^{\tau}|(z-\tau)p(z)|\mathrm{d}z \le a^2 \max_{z\in[\tau-|a|,\tau+|a|]}\{p(z)\} \le \frac{a^2M_2}{(\tau-|a|)^{\alpha+1}},\label{decay-1dim-1}\\
&\Big|\int_{-\tau-a}^{-\tau}(z+\tau)p(z)\mathrm{d}z \Big| \le \int_{-\tau-a}^{-\tau}|(z+\tau)p(z)|\mathrm{d}z \le a^2 \max_{z\in[-\tau-|a|,-\tau+|a|]}\{p(z)\} \le \frac{a^2M_2}{(\tau-|a|)^{\alpha+1}}.\label{decay-1dim-2}
\end{align}
In addition, we have that for all $\tau\ge z_1+|a|$,
\begin{align}
&\int^{\infty}_{\tau-a}p(z)\mathrm{d}z\le \int^{\infty}_{\tau-|a|}M_2z^{-(\alpha+1)}\mathrm{d}z \le \frac{M_2}{\alpha(\tau-|a|)^\alpha},\label{decay-1dim-3}\\
&\int^{-\tau-a}_{-\infty}p(z)\mathrm{d}z\le \int^{-\tau+|a|}_{-\infty}M_2(-z)^{-(\alpha+1)}\mathrm{d}z \le \frac{M_2}{\alpha(\tau-|a|)^\alpha}.\label{decay-1dim-4}
\end{align}
By substituting \eqref{decay-1dim-1}, \eqref{decay-1dim-2}, \eqref{decay-1dim-3} and \eqref{decay-1dim-4} into \eqref{pf-bias-decom}, we can obtain that \eqref{ineq:1dim-bias} holds as desired.

We now prove \eqref{ineq:1dim-var}. By using \eqref{proj-a+v}, splitting the integral into subintervals, and rearranging terms, we can derive that for all $\tau>0$,
\begin{align}
\E\big[\big(\Pi_{[-\tau,\tau]}(a+\zeta)- a\big)^2\big] & = \int_{-\infty}^{\infty}\big(\Pi_{[-\tau,\tau]}(a+z) - a\big)^2 p(z)\mathrm{d}z \nonumber\\
& = \int_{-\tau-a}^{\tau-a} z^2p(z)\mathrm{d}z + (\tau+a)^2 \int_{-\infty}^{-\tau-a}p(z)\mathrm{d}z + (\tau-a)^2\int_{\tau-a}^\infty p(z)\mathrm{d}z.\label{var-growth-pf}
\end{align}
Since $\E[|\zeta|^\alpha]\le M_1$, we can see that for all $\tau>0$,
\begin{align*}
\int_{-\tau-a}^{\tau-a} z^2p(z)\mathrm{d}z \le (\tau+|a|)^{2-\alpha} \int_{-\tau-a}^{\tau-a} |z|^\alpha p(z) \mathrm{d}z \le (\tau+|a|)^{2-\alpha} \E[|\zeta|^\alpha] \le M_1 (\tau+|a|)^{2-\alpha}.    
\end{align*}
Combining this, \eqref{decay-1dim-3} and \eqref{decay-1dim-4} with \eqref{var-growth-pf}, we obtain that \eqref{ineq:1dim-var} holds for all $\tau\ge z_1 + |a|$ as desired.
\end{proof}

For convenience, we denote
\begin{align}\label{def:proj-one-dim}
G_{i,\tau}(x;\xi) = \Pi_{[-\tau,\tau]}(G_i(x;\xi)).    
\end{align}
Then, by $G_\tau(x;\xi)=\Pi_{\{g:\|g\|_\infty\le\tau\}}(G(x;\xi))$, one has that
\begin{align*}
G_\tau(x;\xi) = \begin{bmatrix}
G_{1,\tau}(x;\xi)\\
\vdots\\
G_{n,\tau}(x;\xi)
\end{bmatrix}.    
\end{align*}

\begin{proof}[\textbf{Proof of Theorem \ref{thm:trade-off1}}]
In view of Assumption \ref{asp:basic}(c), \eqref{grad-noise}, and \eqref{def:proj-one-dim}, we see that the assumptions of Lemma \ref{lem:bias-var-decomp} holds with $(\zeta,a,p,M_1,M_2,z_1)=(N_i(x,\xi),\nabla_if(x),p_{i,x},\Lambda_1,\Lambda_2,u_1)$. It then follows from Lemma \ref{lem:bias-var-decomp} that for any $x\in\R^n$ and $1\le i\le n$,
\begin{align*}
\big|\E\big[G_{i,\tau}(x;\xi)\big] - \nabla_i f(x)\big| & \le \Big|\int_{-\tau}^\tau up_{i,x}(u)\mathrm{d}u\Big| + \Big|\tau\int_\tau^\infty (p_{i,x}(u) - p_{i,x}(-u)) \mathrm{d}u\Big|\nonumber\\
&\qquad +  \frac{2\Lambda_2 |\nabla_i f(x)|}{(\tau-|\nabla_i f(x)|)^{\alpha}}\Big(\frac{|\nabla_i f(x)|}{\tau - |\nabla_i f(x)|} + \frac{1}{\alpha}\Big),\\
\E\big[\big(G_{i,\tau}(x;\xi)- \nabla_i f(x)\big)^2\big]&\le \Lambda_1 (\tau+|\nabla_i f(x)|)^{2-\alpha} + \frac{2\Lambda_2(\tau^2 + \nabla_i f(x)^2)}{\alpha(\tau-|\nabla_i f(x)|)^\alpha}
\end{align*}    
hold for all $\tau\ge u_1+|\nabla_i f(x)|$. Then, using \eqref{def:Uf-Dh} and the above relations, we obtain that for any $x\in\mathrm{dom}\,h$ and $1\le i\le n$,
\begin{align}
\big|\E\big[G_{i,\tau}(x;\xi)\big] - \nabla_i f(x)\big| & \le \Big|\int_{-\tau}^\tau up_{i,x}(u)\mathrm{d}u\Big| + \Big|\tau\int_\tau^\infty (p_{i,x}(u) - p_{i,x}(-u)) \mathrm{d}u\Big|\nonumber\\
&\qquad +  \frac{2\Lambda_2 U_f}{(\tau- U_f)^{\alpha}}\Big(\frac{U_f}{\tau - U_f} + \frac{1}{\alpha}\Big),\label{ineq:upbd-1dim-bias}\\
\E\big[\big(G_{i,\tau}(x;\xi)- \nabla_i f(x)\big)^2\big]&\le \Lambda_1 (\tau+U_f)^{2-\alpha} + \frac{2\Lambda_2(\tau^2 + U_f^2)}{\alpha(\tau- U_f)^\alpha}\label{ineq:upbd-1dim-var}
\end{align}    
hold for all $\tau\ge u_1+U_f \overset{\eqref{def:tau-underline}}{=} \tau_{(1)}$.

We first prove statement (i). Using \eqref{ineq:upbd-1dim-bias}, we derive that for any $x\in\mathrm{dom}\,h$ and $\tau\ge\tau_{(1)}$,
\begin{align}
\big\|\E\big[G_{\tau}(x;\xi)\big] - \nabla f(x)\big\| & \le \sqrt{n}\big\|\E\big[G_{\tau}(x;\xi)\big] - \nabla f(x)\big\|_\infty = \sqrt{n} \max_{1\le i\le n}\big\{\big|\E\big[G_{i,\tau}(x;\xi)\big] - \nabla_i f(x)\big|\big\} \nonumber\\
&\overset{\eqref{ineq:upbd-1dim-bias}}{\le}\sqrt{n}\Big[\max_{1\le i\le n}\Big\{\Big|\int_{-\tau}^\tau up_{i,x}(u)\mathrm{d}u\Big|\Big\} + \max_{1\le i\le n}\Big\{\Big|\tau\int_\tau^\infty (p_{i,x}(u) - p_{i,x}(-u)) \mathrm{d}u\Big|\Big\}\nonumber\\
&\qquad +  \frac{2\Lambda_2 U_f}{(\tau- U_f)^{\alpha}}\Big(\frac{U_f}{\tau - U_f} + \frac{1}{\alpha}\Big)\Big]\nonumber\\
&\le \sqrt{n}\Big[\max_{1\le i\le n}\Big\{\Big|\int_{-\tau}^\tau up_{i,x}(u)\mathrm{d}u\Big|\Big\} + \max_{1\le i\le n}\Big\{\Big|\tau\int_\tau^\infty (p_{i,x}(u) - p_{i,x}(-u)) \mathrm{d}u\Big|\Big\}\nonumber\\
&\qquad +  \frac{2\Lambda_2 U_f(U_f/u_1 + 1/\alpha)}{(\tau- U_f)^{\alpha}}\Big],\label{upbd-bias-decom}
\end{align}
where the last inequality is due to $\tau\ge\tau_{(1)}\overset{\eqref{def:tau-underline}}{=} u_1+U_f$. In view of this and \eqref{truncate-bias} and \eqref{near-sym}, we see that 
\begin{align*}
\lim_{\tau\to\infty} \sup_{x\in\mathrm{dom}\,h} \big\{\big\|\E\big[G_{\tau}(x;\xi)\big] - \nabla f(x)\big\|\big\} = 0,  
\end{align*}
which along with \eqref{def:T-eps} implies that $\mathcal{T}(\varepsilon)$ is nonempty for any $\varepsilon\in(0,1)$. In addition, it follows from \eqref{ineq:upbd-1dim-var} that for any $x\in\mathrm{dom}\,h$ and $\tau\ge\tau_{(1)}$,
\begin{align*}
\E\big[\big\|G_{\tau}(x;\xi) - \nabla f(x)\big\|^2\big]&=\sum_{i=1}^n \E\big[\big(G_{i,\tau}(x;\xi)- \nabla_i f(x)\big)^2\big]\\
&\overset{\eqref{ineq:upbd-1dim-var}}{\le} n\bigg[\Lambda_1 (\tau+ U_f)^{2-\alpha} + \frac{2\Lambda_2(\tau^2 + U_f^2)}{\alpha(\tau-U_f)^\alpha}\bigg] = \sigma^2(\tau).
\end{align*} 
This immediately implies \eqref{def:sigma-tau}. Hence, statement (i) holds as desired.

We next prove statement (ii). Under Assumption \ref{asp:basic}(c), we see that the assumptions of Lemma \ref{lem:bias-asym} holds with $(\zeta,p,M_2,z_1)=(N_i(x,\xi),p_{i,x},\Lambda_2,u_1)$. It then follows from Lemma \ref{lem:bias-asym} that 
\begin{align*}
\Big|\int_{-\tau}^\tau up_{i,x}(u)\mathrm{d}u\Big| \le \frac{2\Lambda_2}{(\alpha-1)\tau^{\alpha-1}},\quad \Big|\tau\int_\tau^\infty (p_{i,x}(u) - p_{i,x}(-u)) \mathrm{d}u\Big|\le \frac{2\Lambda_2}{\alpha \tau^{\alpha-1}}\qquad\forall x\in\R^n,1\le i\le n.
\end{align*}
By substituting these into \eqref{upbd-bias-decom} and using $\alpha\in(1,2]$ and \eqref{def:tau-underline}, we can derive that for all $\tau\ge \tau_{(1)}$, $x\in\mathrm{dom}\,h$, and $1\le i\le n$,
\begin{align}
\big\|\E\big[G_{\tau}(x;\xi)\big] - \nabla f(x)\big\| & \le \sqrt{n}\Big[\frac{2(2\alpha-1)\Lambda_2}{\alpha(\alpha-1)\tau^{\alpha-1}} +  \frac{2\Lambda_2 U_f(U_f/u_1 + 1/\alpha)}{(\tau- U_f)^{\alpha}}\Big]\nonumber\\
&\le \sqrt{n}\Big[\frac{3\Lambda_2}{(\alpha-1)\tau^{\alpha-1}} +  \frac{2\Lambda_2 U_f(u_1+U_f)}{u_1(\tau- U_f)^{\alpha}}\Big]\nonumber\\
&\overset{\eqref{def:tau-underline}}{=} \sqrt{n}\Big[\frac{3\Lambda_2}{(\alpha-1)\tau^{\alpha-1}} +  \frac{2\Lambda_2 U_f\tau_{(1)}}{u_1(\tau- U_f)^{\alpha}}\Big].\nonumber
\end{align}
By this, one can see that $\|\E[G_{\tau}(x;\xi)] - \nabla f(x)\|\le\varepsilon/2+\varepsilon/2$ holds for $\tau_{1}(\varepsilon)$ defined in \eqref{def:tau1-eps}. Hence, $\tau_{1}(\varepsilon)\in\mathcal{T}(\varepsilon)$ holds, which completes the proof.
\end{proof}

\begin{proof}[\textbf{Proof of Theorem \ref{thm:trade-off2}}]
Notice that the assumptions in Theorem \ref{thm:trade-off1} holds. By the same arguments as for proving \eqref{upbd-bias-decom}, one has that for any $x\in\mathrm{dom}\,h$ and $\tau\ge\tau_{(1)}$,
\begin{align}
\big\|\E\big[G_{\tau}(x;\xi)\big] - \nabla f(x)\big\| & \le \sqrt{n}\Big[\max_{1\le i\le n}\Big\{\Big|\int_{-\tau}^\tau up_{i,x}(u)\mathrm{d}u\Big|\Big\} + \max_{1\le i\le n}\Big\{\Big|\tau\int_\tau^\infty (p_{i,x}(u) - p_{i,x}(-u)) \mathrm{d}u\Big|\Big\}\nonumber\\
&\qquad +  \frac{2\Lambda_2 U_f(U_f/u_1 + 1/\alpha)}{(\tau- U_f)^{\alpha}}\Big]\label{upbd-bias-decom2}
\end{align}
In addition, recall from Assumption \ref{asp:decay} that
\begin{align*}
\Big|\int_{-\tau}^\tau up_{i,x}(u)\mathrm{d}u\Big| \le \Gamma_1 \tau^{-\alpha},\quad \Big|\tau\int_\tau^\infty (p_{i,x}(u) - p_{i,x}(-u)) \mathrm{d}u\Big|\le \Gamma_2\tau^{-\alpha}\qquad\forall x\in\R^n,\tau\ge\tau_{(2)},1\le i\le n.
\end{align*}
By substituting these into \eqref{upbd-bias-decom2}, we obtain that for all $\tau\ge \tau_{(2)}$, $x\in\mathrm{dom}\,h$, and $1\le i\le n$,
\begin{align*}
\big\|\E\big[G_{\tau}(x;\xi)\big] - \nabla f(x)\big\| \le  \sqrt{n}\Big[(\Gamma_1+\Gamma_2)\tau^{-\alpha}  +  \frac{2\Lambda_2 U_f(U_f/u_1 + 1/\alpha)}{(\tau- U_f)^{\alpha}}\Big].
\end{align*}
By this, one can see that $\|\E[G_{\tau}(x;\xi)] - \nabla f(x)\|\le\varepsilon/2+\varepsilon/2$ holds for $\tau_{2}(\varepsilon)$ defined in \eqref{def:tau-eps-2}. Hence, the conclusion of this theorem holds as desired.
\end{proof}

\subsection{Proof of the Main Results in Section \ref{subsec:complexity-cvx}}\label{subsec:pf-complexity-cvx}

In this subsection, we prove Lemmas \ref{lem:upbd-obj-gap} and \ref{lem:upbd-obj-gap-scvx}, and Theorems \ref{thm:cmplex-cvx} and \ref{thm:cmplex-scvx}.

\begin{proof}[\textbf{Proof of Lemma \ref{lem:upbd-obj-gap}}]
Fix any $k\ge0$. By the optimality condition of \eqref{prox-step} with $\eta_k=\eta$, it follows that there exists $h^\prime(x^{k+1})\in\partial h(x^{k+1})$ such that
\begin{align*}
G_\tau(x^k;\xi_k) + \frac{1}{\eta}(x^{k+1} - x^k) + h^\prime(x^{k+1}) = 0,
\end{align*}
which along with the convexity of $h$ implies that
\begin{align}
h(x^{k+1}) & \le h(x^*) + h^\prime(x^{k+1})^T(x^{k+1} - x^*) = h(x^*) + G_{\tau}(x^k;\xi_k)^T(x^* - x^{k+1}) + \frac{1}{\eta}(x^{k+1} - x^k)^T(x^* - x^{k+1}) \nonumber\\
& = h(x^*) + G_{\tau}(x^k;\xi_k)^T(x^* - x^{k+1}) + \frac{1}{2\eta}(\|x^k-x^*\|^2 - \|x^{k+1} - x^*\|^2 - \|x^{k+1} - x^k\|^2).\label{cvx-h-ineq}
\end{align}
By \eqref{ineq:descent}, the definition of $D_h$ in \eqref{def:Uf-Dh}, and the convexity of $f$, one has
\begin{align}
f(x^{k+1}) & \le f(x^k) + \nabla f(x^k)^T(x^{k+1} - x^k) + \frac{L_f}{2}\|x^{k+1} - x^k\|^2\nonumber\\
& \le f(x^*) + \nabla f(x^k)^T(x^{k+1} - x^*) + \frac{L_fD_h}{2}\|x^{k+1} - x^k\|.\label{cvx-f-ineq}
\end{align}
Combining this with \eqref{cvx-h-ineq}, we obtain that
\begin{align}
F(x^{k+1}) & \le F(x^*) + (\nabla f(x^k) - G_{\tau}(x^k;\xi_k))^T(x^k - x^*) + (\nabla f(x^k) - G_{\tau}(x^k;\xi_k))^T(x^{k+1} - x^k) \nonumber \\
&\qquad + \frac{1}{2\eta}(\|x^k-x^*\|^2 - \|x^{k+1} - x^*\|^2) - \frac{1}{2\eta}\|x^{k+1} - x^k\|^2 + \frac{L_fD_h}{2}\|x^{k+1} - x^k\| \nonumber\\
&\le F(x^*) + (\nabla f(x^k) - G_{\tau}(x^k;\xi_k))^T(x^k - x^*) + (\|\nabla f(x^k) - G_{\tau}(x^k;\xi_k)\|^2 + L_f^2D_h^2) \eta \nonumber\\
&\qquad + \frac{1}{2\eta}(\|x^k-x^*\|^2 - \|x^{k+1} - x^*\|^2), \label{desc-pf1-Fk+1}
\end{align}
where the last inequality is due to 
\begin{align*}
\frac{L_fD_h}{2}\|x^{k+1} - x^k\| &\le \frac{\|x^{k+1} - x^k\|^2}{4\eta} + \frac{L_f^2D_h^2\eta}{4},\\
(\nabla f(x^k) - G_{\tau}(x^k;\xi_k))^T(x^{k+1} - x^k) & \le \frac{\|x^{k+1} - x^k\|^2}{4\eta} + \|\nabla f(x^k) - G_{\tau}(x^k;\xi_k)\|^2\eta.    
\end{align*}
In addition, we recall from Theorem \ref{thm:trade-off1}(i) that for all $\tau\ge \tau_{(1)}$,
\begin{align}\label{pf1-bd-Var}
\E_{\xi_k}[\|\nabla f(x^k) - G_{\tau}(x^k;\xi_k)\|^2]\le\sigma^2(\tau).    
\end{align}
Using the definition of $D_h$ in \eqref{def:Uf-Dh}, and the definition of $\Delta(\cdot)$ in \eqref{def:Delta-tau}, we obtain that for all $\tau\ge0$,
\begin{align}\label{pf1-bd-bias}
\E_{\xi_k}[(\nabla f(x^k) - G_{\tau}(x^k;\xi_k))^T(x^k - x^*)]\le \|\nabla f(x^k) - \E_{\xi_k}[G_{\tau}(x^k;\xi_k)]\|\cdot\|x^k-x^*\| \le D_h\Delta(\tau).    
\end{align}
Taking expectation on \eqref{desc-pf1-Fk+1} with respect to $\xi_k$, using \eqref{pf1-bd-Var} and \eqref{pf1-bd-bias}, and rearranging terms, we obtain that \eqref{ineq:descent-cvx} holds as desired.
\end{proof}

\begin{proof}[\textbf{Proof of Theorem \ref{thm:cmplex-cvx}}]
Notice from Theorem \ref{thm:trade-off1}(i) that $\mathcal{T}\big(\frac{\epsilon}{2D_h}\big)\neq\emptyset$ under Assumption \ref{asp:basic}(c). Thus, $\tau_\epsilon$ exists. Using \eqref{alg1-ave-step}, \eqref{ineq:descent-cvx} with $(\eta,\tau)=(\eta_\epsilon,\tau_\epsilon)$, and the convexity of $F$, we obtain that for all $K\ge1$,
\begin{align}
\E[F(z^K)] - F^* \overset{\eqref{alg1-ave-step}}{\le} \frac{1}{K}\sum_{k=0}^{K-1}(\E[F(x^{k+1})] - F^*)  \overset{\eqref{ineq:descent-cvx}}{\le} \frac{\|x^0 - x^*\|^2}{2K\eta_\epsilon} + D_h\Delta(\tau_\epsilon) + \bigg(\frac{L_f^2D_h^2}{4} + \sigma^2(\tau_\epsilon)\bigg)\eta_\epsilon.\label{thm1-pf-ineq1}
\end{align}
Recall from the definition of $D_h$ in \eqref{def:Uf-Dh} that $\|x^0-x^*\|\le D_h$. In addition, by the definitions of $\mathcal{T}(\cdot)$ and $\tau_\epsilon$ in \eqref{def:T-eps} and \eqref{def:tau-eta-cvx}, respectively, one has that $D_h\Delta(\tau_\epsilon)\le\epsilon/2$. Combining these with \eqref{thm1-pf-ineq1}, we obtain that for all $K\ge1$,
\begin{align*}
\E[F(z^K)] - F^* & \overset{\eqref{thm1-pf-ineq1}}{\le} \frac{D_h^2}{2K\eta_\epsilon} + \bigg(\frac{L_f^2D_h^2}{4} + \sigma^2(\tau_\epsilon)\bigg)\eta_\epsilon + \frac{\epsilon}{2}  = \min_{\hat{\eta}} \bigg\{\frac{D_h^2}{2K\hat\eta} + \bigg(\frac{L_f^2D_h^2}{4} + \sigma^2(\tau_\epsilon)\bigg)\hat\eta\bigg\}  + \frac{\epsilon}{2}\nonumber\\
& = \sqrt{2}D_h\bigg(\frac{L_f^2D_h^2/4 + \sigma^2(\tau_\epsilon)}{K}\bigg)^{1/2} + \frac{\epsilon}{2},
\end{align*}
where the first equality is due to the definition of $\eta_\epsilon$ in \eqref{def:tau-eta-cvx}. Then, by this, one can see that $\E[F(z^K)] - F^*\le\epsilon/2+\epsilon/2$ holds for all $K$ satisfying \eqref{cvx-K-iter-cmplx}. Hence, the conclusion of this theorem holds as desired.
\end{proof}

The following inequality provides an estimation of the $K$th harmonic number:
\begin{align}\label{sum-k+1}
\sum_{k=0}^{K-1}\frac{1}{k+1} \le \sum_{k=0}^{K-1} \int_{k+1/2}^{k+3/2} \frac{1}{t+1}\mathrm{d}t = \int_{1/2}^{K+1/2} \frac{1}{t+1}\mathrm{d}t = \ln(2K+1),
\end{align}
where the first inequality follows from the convexity of $\phi(t)=1/(t+1)$ with $t\ge0$ (see also \cite[Lemma 2]{he2025complexity} with $(a,b,\beta)=(1,K,1)$).

We next provide a lemma that will be used to derive complexity bounds for Algorithm \ref{alg:c-spgm}. Its proof follows similarly to that of \cite[Lemma 3]{he2025complexity}.

\begin{lemma}\label{lem:rate-complexity}
Let $u\in(0,e^{-1})$ be given. Then, $v^{-1}\ln v\le 2u$ holds for all $v\ge u^{-1}\ln(1/u)$.
\end{lemma}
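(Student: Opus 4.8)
The plan is to exploit the monotonicity of the map $g(v):=v^{-1}\ln v$. First I would record that $g$ is nonincreasing on $[e,\infty)$: its derivative is $g'(v)=(1-\ln v)/v^2$, which is nonpositive as soon as $v\ge e$. Next I would verify that the threshold $v_0:=u^{-1}\ln(1/u)$ exceeds $e$. Indeed, $u\in(0,e^{-1})$ forces $1/u>e$ and hence $\ln(1/u)>1$, so $v_0=(1/u)\ln(1/u)>1/u>e$. Consequently $g(v)\le g(v_0)$ for every $v\ge v_0$, and it suffices to establish the single inequality $g(v_0)\le 2u$.

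To check $g(v_0)\le 2u$, I would substitute $t:=1/u>e$, so that $v_0=t\ln t$ and $2u=2/t$. The desired bound $\ln(v_0)/v_0\le 2/t$ becomes, after multiplying through by $t>0$ and dividing by $\ln t>0$, the inequality $\ln(t\ln t)\le 2\ln t$, i.e.\ $\ln t+\ln\ln t\le 2\ln t$, i.e.\ $\ln\ln t\le\ln t$. Since $t>e$ gives $\ln t>1>0$, applying the elementary bound $\ln x\le x-1$ (valid for all $x>0$) with $x=\ln t$ yields $\ln\ln t\le\ln t-1<\ln t$, which closes the argument.

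I do not expect a genuine obstacle here; the only points needing a little care are (a) confirming $v_0>e$, so that the monotonicity of $g$ can be invoked on the whole ray $[v_0,\infty)$, and (b) carrying out the substitution $t=1/u$ so that the target inequality collapses cleanly to $\ln\ln t\le\ln t$. This mirrors the structure of the proof of \cite[Lemma 3]{he2025complexity}, so the bookkeeping should transfer with only cosmetic changes.
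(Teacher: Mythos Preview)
Your proof is correct and follows essentially the same route as the paper's: both use the monotonicity of $v\mapsto v^{-1}\ln v$ on $(e,\infty)$, check that the threshold $v_0=u^{-1}\ln(1/u)$ exceeds $e$, and reduce the claim to the elementary bound $\ln\ln t\le\ln t$ for $t=1/u>e$. One small slip: in step~4 you should be \emph{multiplying} by $\ln t$, not dividing, to pass from $\ln(t\ln t)/\ln t\le 2$ to $\ln(t\ln t)\le 2\ln t$; the target inequality you wrote is nonetheless correct.
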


\begin{proof}
    Fix any $v$ satisfying $v\ge u^{-1}\ln(1/u)$. Then, by $u\in(0,e^{-1})$ and the fact that $\psi(u)=u^{-1}\ln(1/u)$ is decreasing, one has $v\ge u^{-1}\ln(1/u) > e$. Let $\phi(t)=t^{-1}\ln t$. It can be verified that $\phi$ is decreasing on $(e,\infty)$. By this and $v\ge u^{-1}\ln(1/u)>e$, one has that 
    \begin{align*}
        v^{-1}\ln v = \phi(v) \le \phi(u^{-1}\ln(1/u)) = u \Big(1 + \frac{\ln\ln(1/u)}{\ln(1/u)}\Big) \le 2u,
    \end{align*}
    where the last inequality follows from $\ln\ln(1/u)\le\ln(1/u)$ due to $u\in(0,e^{-1})$. Hence, the conclusion of this lemma holds.
\end{proof}

\begin{proof}[\textbf{Proof of Lemma \ref{lem:upbd-obj-gap-scvx}}]
By similar arguments for proving \eqref{cvx-h-ineq} and \eqref{cvx-f-ineq}, one can prove that
\begin{align*}
h(x^{k+1}) & \le h(x^*) + G_{\tau}(x^k;\xi_k)^T(x^* - x^{k+1}) + \frac{1}{2\eta_k}(\|x^k-x^*\|^2 - \|x^{k+1} - x^*\|^2 - \|x^{k+1} - x^k\|^2),\\
f(x^{k+1}) & \le f(x^*) + \nabla f(x^k)^T(x^{k+1} - x^*) - \frac{\mu_f}{2}\|x^k - x^*\|^2 + \frac{L_fD_h}{2}\|x^{k+1} - x^k\|.
\end{align*}
By the same arguments for proving \eqref{desc-pf1-Fk+1}, one has that
\begin{align}
F(x^{k+1}) & \le F(x^*) + (\nabla f(x^k) - G_{\tau}(x^k;\xi_k))^T(x^k - x^*) + \bigg(\|\nabla f(x^k) - G_{\tau}(x^k;\xi_k)\|^2 + \frac{L_f^2D_h^2}{4}\bigg) \eta_k \nonumber\\
&\qquad + \Big(\frac{1}{2\eta_k}-\frac{\mu_f}{2}\Big)\|x^k-x^*\|^2 - \frac{1}{2\eta_k}\|x^{k+1} - x^*\|^2.\label{2-desc-pf1-Fk+1}
\end{align}
Using the definition of $\Delta(\cdot)$ in \eqref{def:Delta-tau}, we obtain that for all $\tau\ge0$,
\begin{align}
\E_{\xi_k}[(\nabla f(x^k) - G_{\tau}(x^k;\xi_k))^T(x^k - x^*)] & \le \|\nabla f(x^k) - \E_{\xi_k}[G_{\tau}(x^k;\xi_k)]\| \cdot \|x^k-x^*\| \nonumber\\
& \le \frac{\mu_f}{4}\|x^k-x^*\|^2 + \frac{\Delta^2(\tau)}{\mu_f}.    \label{pf2-bd-bias}
\end{align}
Taking expectation on \eqref{2-desc-pf1-Fk+1} with respect to $\xi_k$, using \eqref{pf1-bd-Var} and \eqref{pf2-bd-bias}, and rearranging terms, we obtain that \eqref{ineq:descent-scvx} holds as desired.
\end{proof}

\begin{proof}[\textbf{Proof of Theorem \ref{thm:cmplex-scvx}}]
Note from Theorem \ref{thm:trade-off1}(i) that $\mathcal{T}\Big(\sqrt{\frac{\mu_f\epsilon}{2}}\Big)\neq\emptyset$ under Assumption \ref{asp:basic}(c). Thus, $\tilde{\tau}_\epsilon$ exists. Using \eqref{alg1-ave-step}, \eqref{ineq:descent-scvx} with $\eta_k=\tilde{\eta}_k$ for all $k\ge0$ and $\tau=\tilde\tau_\epsilon$, and the convexity of $F$, we obtain that for all $K\ge1$,
\begin{align}
\E[F(z^K)] - F^* & \overset{\eqref{alg1-ave-step}}{\le} \frac{1}{K}\sum_{k=0}^{K-1}(\E[F(x^{k+1})] - F^*) \nonumber\\
&\overset{\eqref{ineq:descent-scvx}}{\le} \frac{1}{K}\sum_{k=0}^{K-1}\Big[\frac{\mu_f k}{4}\|x^k-x^*\|^2 - \frac{\mu_f(k+1)}{4}\|x^{k+1} - x^*\|^2\Big] + \frac{L_f^2D_h^2/2 + 2\sigma^2(\tilde\tau_\epsilon)}{\mu_fK}\sum_{k=0}^{K-1}\frac{1}{k+1}\nonumber\\
&\qquad  + \frac{\Delta^2(\tilde\tau_\epsilon)}{\mu_f}\nonumber\\
& = \frac{L_f^2D_h^2/2 + 2\sigma^2(\tilde\tau_\epsilon)}{\mu_fK}\sum_{k=0}^{K-1}\frac{1}{k+1} + \frac{\Delta^2(\tilde\tau_\epsilon)}{\mu_f},\label{ineq:pf2-scvx-1}
\end{align}
where the second inequality is due to \eqref{ineq:descent-scvx} and the definition of $\tilde{\eta}_k$ in \eqref{def:tilde-tau-eta-scvx}. By the definitions of $\mathcal{T}(\cdot)$ and $\tilde\tau_\epsilon$ in \eqref{def:T-eps} and \eqref{def:tilde-tau-eta-scvx}, respectively, one has that $\Delta^2(\tilde\tau_\epsilon)/\mu_f\le\epsilon/2$. Then, using this, \eqref{sum-k+1} and \eqref{ineq:pf2-scvx-1}, we obtain that for all $K\ge3$,
\begin{align}\label{ineq:last2-bd-EFz-Fs}
\E[F(z^K)] - F^* \le \frac{(L_f^2D_h^2/2 + 2\sigma^2(\tilde\tau_\epsilon))\ln(2K+1)}{\mu_fK} + \frac{\epsilon}{2} \le \frac{(L_f^2D_h^2 + 4\sigma^2(\tilde\tau_\epsilon))\ln K}{\mu_fK} + \frac{\epsilon}{2}, 
\end{align}
where the second inequality is because $\ln(2K+1)\le 2\ln K$ for all $K\ge3$. In addition, using Lemma \ref{lem:rate-complexity} with $(v,u)=(K,\frac{\mu_f\epsilon}{4(L_f^2D_h^2 + 4\sigma^2(\tilde{\tau}_\epsilon))})$, we obtain that
\begin{align*}
\frac{\ln K}{K} \le \frac{\mu_f\epsilon}{2(L_f^2D_h^2 + 4\sigma^2(\tilde{\tau}_\epsilon))}\qquad\forall K\ge \bigg(\frac{4(L_f^2D_h^2 + 4\sigma^2(\tilde{\tau}_\epsilon))}{\mu_f\epsilon}\bigg) \ln\bigg(\frac{4(L_f^2D_h^2 + 4\sigma^2(\tilde{\tau}_\epsilon))}{\mu_f\epsilon}\bigg),    
\end{align*}
which along with \eqref{ineq:last2-bd-EFz-Fs} implies that $\E[F(z^K)] - F^* \le\epsilon/2 + \epsilon/2$ holds for all $K$ satisfying \eqref{scvx-K-iter-cmplx}. Hence, the conclusion of this theorem holds as desired.
\end{proof}

\subsection{Proof of the Main Results in Section \ref{subsec:complexity-ncvx}}\label{subsec:pf-complexity-ncvx}

In this subsection, we prove Lemma \ref{lem:pot-decrease} and Theorem \ref{thm:cmplex-ncvx}.

\begin{lemma}
Suppose that Assumption \ref{asp:basic} holds. Let $L_f$ be given in Assumption~\ref{asp:basic}, and $\tau_{(1)}$, $\Delta(\cdot)$ and $\sigma^2(\cdot)$ be defined in \eqref{def:tau-underline}, \eqref{def:Delta-tau} and \eqref{def:sigma-tau}, respectively. Let $\{(x^k,m^k)\}$ be the sequence generated by Algorithm \ref{alg:c-spgm-m} with input parameters $(\eta,\theta)$ and $\{\tau_k\}\subset[\tau_{(1)},\infty)$. Then we have for all $k\ge0$,
\begin{align}\label{ineq:rec-rela-pm}
\E_{\xi_{k+1}}[\|m^{k+1} - \nabla f(x^{k+1})\|^2] \le (1-\theta) \|m^k - \nabla f(x^k)\|^2 + \frac{2L_f^2}{\theta}\|x^{k+1} - x^k\|^2 + 2\theta \Delta^2({\tau_{k+1}}) + \theta^2 \sigma^2({\tau_{k+1}}).
\end{align}
\end{lemma}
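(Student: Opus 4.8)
The plan is to expand the momentum recursion \eqref{alg2:momentum-step} around $\nabla f(x^{k+1})$, split the resulting error into a part that is deterministic given $\xi_0,\dots,\xi_k$ and a zero-mean stochastic part, and then combine the variance bound of Theorem~\ref{thm:trade-off1}(i) with Young's inequality. Write $e^k := m^k - \nabla f(x^k)$ and $b_{k+1} := \E_{\xi_{k+1}}[G_{\tau_{k+1}}(x^{k+1};\xi_{k+1})] - \nabla f(x^{k+1})$. Using \eqref{alg2:momentum-step} and adding and subtracting $\nabla f(x^k)$ and $\E_{\xi_{k+1}}[G_{\tau_{k+1}}(x^{k+1};\xi_{k+1})]$, I would decompose $m^{k+1} - \nabla f(x^{k+1}) = D_{k+1} + R_{k+1}$ with
\[
D_{k+1} = (1-\theta)e^k + (1-\theta)\big(\nabla f(x^k) - \nabla f(x^{k+1})\big) + \theta b_{k+1},\qquad
R_{k+1} = \theta\big(G_{\tau_{k+1}}(x^{k+1};\xi_{k+1}) - \E_{\xi_{k+1}}[G_{\tau_{k+1}}(x^{k+1};\xi_{k+1})]\big).
\]
Since $x^{k+1}$ is produced by \eqref{prox-step-ncvx} from $x^k$ and $m^k$, it is a function of $\xi_0,\dots,\xi_k$ only, so $D_{k+1}$ is deterministic conditionally on $\xi_0,\dots,\xi_k$ while $\E_{\xi_{k+1}}[R_{k+1}] = 0$; hence the cross term vanishes and $\E_{\xi_{k+1}}[\|m^{k+1}-\nabla f(x^{k+1})\|^2] = \|D_{k+1}\|^2 + \E_{\xi_{k+1}}[\|R_{k+1}\|^2]$.

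For the stochastic part, I would use that the variance of a random vector is no larger than its second moment about any fixed point, in particular about $\nabla f(x^{k+1})$, so that $\E_{\xi_{k+1}}[\|R_{k+1}\|^2] \le \theta^2\,\E_{\xi_{k+1}}[\|G_{\tau_{k+1}}(x^{k+1};\xi_{k+1}) - \nabla f(x^{k+1})\|^2] \le \theta^2\sigma^2(\tau_{k+1})$, where the last step is Theorem~\ref{thm:trade-off1}(i) applied with $\tau = \tau_{k+1}\ge\tau_{(1)}$ and $x = x^{k+1}\in\mathrm{dom}\,h$. For the deterministic part, I would apply $\|u+v\|^2 \le (1+s)\|u\|^2 + (1+s^{-1})\|v\|^2$ with $u = (1-\theta)e^k$, $v$ the sum of the remaining two terms of $D_{k+1}$, and the choice $s = \theta/(1-\theta)$, so that $1+s = 1/(1-\theta)$ and $1+s^{-1} = 1/\theta$; this yields $\|D_{k+1}\|^2 \le (1-\theta)\|e^k\|^2 + \theta^{-1}\big\|(1-\theta)(\nabla f(x^k)-\nabla f(x^{k+1})) + \theta b_{k+1}\big\|^2$. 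Bounding the last squared norm by $2(1-\theta)^2\|\nabla f(x^k)-\nabla f(x^{k+1})\|^2 + 2\theta^2\|b_{k+1}\|^2 \le 2L_f^2\|x^{k+1}-x^k\|^2 + 2\theta^2\Delta^2(\tau_{k+1})$, using the Lipschitz estimate of Assumption~\ref{asp:basic}(b) and $\|b_{k+1}\| \le \Delta(\tau_{k+1})$ by \eqref{def:Delta-tau}, gives $\|D_{k+1}\|^2 \le (1-\theta)\|e^k\|^2 + 2L_f^2\theta^{-1}\|x^{k+1}-x^k\|^2 + 2\theta\Delta^2(\tau_{k+1})$. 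Adding the bounds on $\|D_{k+1}\|^2$ and $\E_{\xi_{k+1}}[\|R_{k+1}\|^2]$ produces \eqref{ineq:rec-rela-pm}.

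I do not expect a serious obstacle: this is the standard momentum-error recursion adapted to biased (clipped) gradients. The only points needing care are the measurability bookkeeping that lets the cross term drop (which rests on $x^{k+1}$ being $\sigma(\xi_0,\dots,\xi_k)$-measurable) and tuning the Young's-inequality parameter so that the coefficient of $\|e^k\|^2$ is exactly $1-\theta$ and not merely $(1-\theta)^2$ times an amplifying factor; both are routine.
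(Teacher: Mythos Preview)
Your proof is correct. The paper takes essentially the same route but organizes the estimates a little differently: it writes $m^{k+1}-\nabla f(x^{k+1})=(1-\theta)(m^k-\nabla f(x^{k+1}))+\theta(G_{\tau_{k+1}}-\nabla f(x^{k+1}))$, expands the square so that after taking $\E_{\xi_{k+1}}$ only the cross term with the bias $\E_{\xi_{k+1}}[G_{\tau_{k+1}}]-\nabla f(x^{k+1})$ survives, bounds that cross term by a first Young's inequality to obtain the factor $(1-\theta)(1-\theta/2)$ in front of $\|m^k-\nabla f(x^{k+1})\|^2$, and then applies a second Young's inequality with parameter $c=\theta/(2-\theta)$ to split $m^k-\nabla f(x^{k+1})$ into $e^k$ and the gradient difference. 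Your bias--variance split isolates the zero-mean part $R_{k+1}$ at the outset, so a single Young's inequality with $s=\theta/(1-\theta)$ on the deterministic remainder $D_{k+1}$ suffices; this is slightly more streamlined but uses the same ingredients and lands on exactly the stated bound. One cosmetic point: the choice $s=\theta/(1-\theta)$ presumes $\theta<1$, whereas Algorithm~\ref{alg:c-spgm-m} allows $\theta=1$; in that boundary case $D_{k+1}=b_{k+1}$ and \eqref{ineq:rec-rela-pm} is immediate, so no harm is done.
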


\begin{proof}
Fix any $k\ge0$. It follows from \eqref{alg2:momentum-step} that
\begin{align}
&\E_{\xi_{k+1}}[\|m^{k+1} - \nabla f(x^{k+1})\|^2] \overset{\eqref{alg2:momentum-step}}{=} \E_{\xi_{k+1}}[\|(1-\theta)(m^k - \nabla f(x^{k+1})) + \theta (G_{\tau_{k+1}}(x^{k+1};\xi_{k+1}) - \nabla f(x^{k+1}))\|^2]\nonumber\\
& =(1-\theta)^2 \|m^k - \nabla f(x^{k+1})\|^2 + 2(1-\theta)\theta(m^k - \nabla f(x^{k+1}))^T(\E_{\xi_{k+1}}[G_{\tau_{k+1}}(x^{k+1};\xi_{k+1})] - \nabla f(x^{k+1}))\nonumber\\
&\qquad + \theta^2 \E_{\xi_{k+1}}[\|G_{\tau_{k+1}}(x^{k+1};\xi_{k+1}) - \nabla f(x^{k+1})\|^2] \nonumber\\
&\le (1-\theta)\Big(1 - \frac{\theta}{2}\Big)\|m^k - \nabla f(x^{k+1})\|^2 + 2\theta\|\E_{\xi_{k+1}}[G_{\tau_{k+1}}(x^{k+1};\xi_{k+1})] - \nabla f(x^{k+1})\|^2\nonumber\\
&\qquad + \theta^2 \E_{\xi_{k+1}}[\|G_{\tau_{k+1}}(x^{k+1};\xi_{k+1}) - \nabla f(x^{k+1})\|^2],\label{lem1-ineq1-ncvx}
\end{align}
where the last relation is due to $\theta\in(0,1]$ and 
\begin{align*}
&(m^k - \nabla f(x^{k+1}))^T(\E_{\xi_{k+1}}[G_{\tau_{k+1}}(x^{k+1};\xi_{k+1})] - \nabla f(x^{k+1})) \\
&\le \frac{1}{4}\|m^k - \nabla f(x^{k+1})\|^2 + \|\E_{\xi_{k+1}}[G_{\tau_{k+1}}(x^{k+1};\xi_{k+1})] - \nabla f(x^{k+1})\|^2.     
\end{align*}
In addition, we recall from \eqref{def:Delta-tau} and Theorem \ref{thm:trade-off1}(i) that for all $\tau\ge u_1 + U_f$,
\begin{align*}
\|\E_{\xi_{k+1}}[G_{\tau_{k+1}}(x^{k+1};\xi_{k+1})] - \nabla f(x^{k+1})\| \le \Delta(\tau_{k+1}),\quad \E_{\xi_{k+1}}[\|G_\tau(x^{k+1};\xi_{k+1}) - \nabla f(x^{k+1})\|^2] \le \sigma^2(\tau_{k+1}).  
\end{align*}
By substituting them into \eqref{lem1-ineq1-ncvx}, one can derive that for all $c>0$,
\begin{align*}
\E_{\xi_{k+1}}[\|m^{k+1} - \nabla f(x^{k+1})\|^2] & \le (1-\theta)\Big(1 - \frac{\theta}{2}\Big)\|m^k - \nabla f(x^{k+1})\|^2 + 2\theta \Delta^2({\tau_{k+1}}) + \theta^2 \sigma^2({\tau_{k+1}})\\
&\le (1-\theta)\Big(1 - \frac{\theta}{2}\Big)(1+c)\|m^k - \nabla f(x^k)\|^2\\
&\qquad + (1-\theta)\Big(1 - \frac{\theta}{2}\Big)\Big(1+\frac{1}{c}\Big) \|\nabla f(x^{k+1}) - \nabla f(x^k)\|^2\\
&\qquad + 2\theta \Delta^2({\tau_{k+1}}) + \theta^2 \sigma^2({\tau_{k+1}}), 
\end{align*}
where the second inequality follows from $\|a+b\|^2\le (1+c)\|a\|^2 + (1+1/c)\|b\|^2$ for all $c>0$ and $a,b\in\R^n$. Letting $c=\theta/(2-\theta)$, and using Assumption \ref{asp:basic}(b) and the fact that $\theta\in(0,1]$, we obtain that
\begin{align*}
\E_{\xi_{k+1}}[\|m^{k+1} - \nabla f(x^{k+1})\|^2] \le (1-\theta) \|m^k - \nabla f(x^k)\|^2 + \frac{2L_f^2}{\theta}\|x^{k+1} - x^k\|^2 + 2\theta \Delta^2({\tau_{k+1}}) + \theta^2 \sigma^2({\tau_{k+1}}).
\end{align*}
Hence, \eqref{ineq:rec-rela-pm} holds as desired.
\end{proof}

\begin{lemma}
Suppose that Assumption \ref{asp:basic} holds. Let $L_f$ be given in Assumption~\ref{asp:basic}, and $\tau_{(1)}$, $\Delta(\cdot)$ and $\sigma^2(\cdot)$ be defined in \eqref{def:tau-underline}, \eqref{def:Delta-tau} and \eqref{def:sigma-tau}, respectively. Let $\{(x^k,m^k)\}$ be the sequence generated by Algorithm \ref{alg:c-spgm-m} with input parameters $(\eta,\theta)$ satisfying $\eta\in(0,\frac{1}{4L_f}]$ and $\{\tau_k\}\subset[\tau_{(1)},\infty)$. Then we have for all $k\ge0$,
\begin{align}
F(x^{k+1}) & \le F(x^k) - \frac{3}{4\eta}\|x^{k+1} - x^k\|^2 + 2\eta\|\nabla f(x^k) -  m^k\|^2, \label{ineq:descent-ncvx}\\
\|x^{k+1} - x^k\|^2 & \ge \frac{\eta^2}{4}\mathrm{dist}^2(0,\partial F(x^{k+1})) - \frac{3\eta^2}{4}\|\nabla f(x^k) - m^k\|^2.\label{ineq:bd-stat-cond-ncvx}
\end{align}
\end{lemma}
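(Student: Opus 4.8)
The plan is to establish the two inequalities independently; both rely only on the prox update \eqref{prox-step-ncvx}, the $L_f$-Lipschitz continuity of $\nabla f$ from Assumption \ref{asp:basic}(b), and the step-size restriction $\eta\in(0,\frac{1}{4L_f}]$, which I will use in the forms $L_f/2\le 1/(8\eta)$ and $L_f\le 1/(4\eta)$. Throughout, fix $k\ge 0$ and recall that by the first-order optimality condition of \eqref{prox-step-ncvx} there exists a subgradient $g^{k+1}:=\tfrac1\eta(x^k-x^{k+1})-m^k\in\partial h(x^{k+1})$.

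For \eqref{ineq:descent-ncvx}, the key is a prox-progress bound that is sharp enough in the coefficient of $\|x^{k+1}-x^k\|^2$. Applying the subgradient inequality for the convex function $h$ at $x^{k+1}$ with the subgradient $g^{k+1}$, I get $h(x^{k+1})\le h(x^k)-(g^{k+1})^T(x^k-x^{k+1})=h(x^k)-\tfrac1\eta\|x^{k+1}-x^k\|^2-(m^k)^T(x^{k+1}-x^k)$. (Note: the weaker estimate obtained by merely using that $x^{k+1}$ minimizes the prox objective and evaluating it at $x^k$ yields only the coefficient $-\tfrac1{2\eta}$, which is insufficient for the stated constant.) Adding the descent inequality \eqref{ineq:descent} applied with $(x,y)=(x^k,x^{k+1})$, regrouping, and using $L_f/2\le 1/(8\eta)$ gives
\begin{align*}
F(x^{k+1})\le F(x^k)+(\nabla f(x^k)-m^k)^T(x^{k+1}-x^k)-\frac{7}{8\eta}\|x^{k+1}-x^k\|^2.
\end{align*}
Finally, Young's inequality $(\nabla f(x^k)-m^k)^T(x^{k+1}-x^k)\le 2\eta\|\nabla f(x^k)-m^k\|^2+\frac1{8\eta}\|x^{k+1}-x^k\|^2$ absorbs the cross term and leaves the coefficient $-\tfrac{6}{8\eta}=-\tfrac{3}{4\eta}$ in front of $\|x^{k+1}-x^k\|^2$, which is exactly \eqref{ineq:descent-ncvx}.

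For \eqref{ineq:bd-stat-cond-ncvx}, I will use that $g^{k+1}\in\partial h(x^{k+1})$ implies $\nabla f(x^{k+1})+\tfrac1\eta(x^k-x^{k+1})-m^k\in\partial F(x^{k+1})$, so $\mathrm{dist}(0,\partial F(x^{k+1}))$ is at most the norm of this element. Splitting $\nabla f(x^{k+1})-m^k=(\nabla f(x^{k+1})-\nabla f(x^k))+(\nabla f(x^k)-m^k)$, applying the triangle inequality, and bounding $\|\nabla f(x^{k+1})-\nabla f(x^k)\|\le L_f\|x^{k+1}-x^k\|\le\tfrac1{4\eta}\|x^{k+1}-x^k\|$ yields $\mathrm{dist}(0,\partial F(x^{k+1}))\le\tfrac{5}{4\eta}\|x^{k+1}-x^k\|+\|\nabla f(x^k)-m^k\|$. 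Squaring via $(p+q)^2\le 2p^2+2q^2$ and using $\tfrac{25}{8}\le 4$ and $2\le 3$ gives $\mathrm{dist}^2(0,\partial F(x^{k+1}))\le\tfrac{4}{\eta^2}\|x^{k+1}-x^k\|^2+3\|\nabla f(x^k)-m^k\|^2$, and multiplying by $\eta^2/4$ and rearranging yields \eqref{ineq:bd-stat-cond-ncvx}.

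The only genuine obstacle is the sharp constant in the first inequality: one must extract an explicit subgradient from the prox optimality condition and use the subgradient (equivalently, strong-convexity/three-point) inequality to obtain the $-\tfrac1\eta\|x^{k+1}-x^k\|^2$ term; the more common argument based only on optimality of the prox minimizer produces $-\tfrac1{2\eta}$ and, after absorbing the smoothness term and the inner product via Young's inequality, collapses to $-\tfrac1{4\eta}$, which does not match the claim. The second inequality is routine bookkeeping with the triangle inequality and the numerical slack afforded by $\eta\le 1/(4L_f)$.
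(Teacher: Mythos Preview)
Your proposal is correct and follows essentially the same approach as the paper: for \eqref{ineq:descent-ncvx} both extract the subgradient $h'(x^{k+1})=-m^k-\tfrac1\eta(x^{k+1}-x^k)$ from the prox optimality condition, combine the convexity inequality for $h$ with the descent lemma for $f$, and absorb the cross term via the same Young split; for \eqref{ineq:bd-stat-cond-ncvx} both bound $\mathrm{dist}(0,\partial F(x^{k+1}))$ by $\|\nabla f(x^{k+1})+h'(x^{k+1})\|$, split off $\nabla f(x^{k+1})-\nabla f(x^k)$, and use $\eta\le 1/(4L_f)$ to round the constants. The only cosmetic difference is that the paper applies $\|a+b+c\|^2\le 3(\|a\|^2+\|b\|^2+\|c\|^2)$ directly, whereas you first merge the two step-difference terms via the triangle inequality to get $\tfrac{5}{4\eta}\|x^{k+1}-x^k\|$ and then square with $(p+q)^2\le 2p^2+2q^2$; both routes land on the same final bound.
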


\begin{proof}
    Fix any $k\ge0$. By the optimality condition of \eqref{prox-step-ncvx}, there exists $h^\prime(x^{k+1})\in\partial h(x^{k+1})$ such that
    \begin{align}\label{stat-cond-ncvx}
    m^k + \frac{1}{\eta}(x^{k+1} - x^k) + h^\prime(x^{k+1}) = 0.      
    \end{align}

    We first prove \eqref{ineq:descent-ncvx}. By \eqref{stat-cond-ncvx} and the convexity of $h$, one has that
    \begin{align}\label{ineq:ncvx-h-cvx}
    h(x^{k+1}) \le h(x^k) + h^\prime(x^{k+1})^T(x^{k+1} - x^k) = h(x^k) - (m^k)^T(x^{k+1} - x^k) - \frac{1}{\eta} \|x^{k+1} - x^k\|^2
    \end{align}
    Using \eqref{ineq:descent} with $(y,x)=(x^{k+1}, x^k)$, we obtain that
    \begin{align*}
        f(x^{k+1}) \le f(x^k) + \nabla f(x^k)^T(x^{k+1} - x^k) + \frac{L_f}{2} \|x^{k+1} - x^k\|^2.
    \end{align*}
    Combining this with \eqref{ineq:ncvx-h-cvx}, we obtain that 
    \begin{align*}
        F(x^{k+1}) & \le F(x^k)  + (\nabla f(x^k) - m^k)^T(x^{k+1} - x^k) - \Big(\frac{1}{\eta} - \frac{L_f}{2}\Big)\|x^{k+1} - x^k\|^2 \\
        & \le F(x^k) - \Big(\frac{7}{8\eta} - \frac{L_f}{2}\Big)\|x^{k+1} - x^k\|^2 + 2\eta\|\nabla f(x^k) -  m^k\|^2,
    \end{align*}
    where the second inequality is due to $(\nabla f(x^k) - m^k)^T(x^{k+1} - x^k)\le 2\eta\|\nabla f(x^k) - m^k\|^2 + \|x^{k+1} - x^k\|^2/(8\eta)$. This along with $\eta\in(0,\frac{1}{4L_f}]$ implies that \eqref{ineq:descent-ncvx} holds.

    We next prove \eqref{ineq:bd-stat-cond-ncvx}. It follows from \eqref{stat-cond-ncvx} that
    \begin{align*}
    \mathrm{dist}^2(0,\partial F(x^{k+1})) & \le \|\nabla f(x^{k+1}) + h^\prime(x^{k+1})\|^2 = \Big\|\nabla f(x^{k+1}) - m^k - \frac{1}{\eta}(x^{k+1} - x^k)\Big\|^2 \\
    & = \Big\|\nabla f(x^{k+1}) - \nabla f(x^k) + \nabla f(x^k) - m^k - \frac{1}{\eta}(x^{k+1} - x^k)\Big\|^2 \\
    &\le 3\|\nabla f(x^{k+1}) - \nabla f(x^k)\|^2 + 3\|\nabla f(x^k) - m^k\|^2 + \frac{3}{\eta^2}\|x^{k+1} - x^k\|^2 \\
    &\le 3\Big(L_f^2 + \frac{1}{\eta^2}\Big)\|x^{k+1} - x^k\|^2 + 3\|\nabla f(x^k) - m^k\|^2, 
    \end{align*}
    where the second inequality is due to $\|a+b+c\|^2\le3\|a\|^2 + 3\|b\|^2 + 3\|c\|^2$ for all $a,b,c\in\R^n$, and the last inequality follows from Assumption \ref{asp:basic}(b). This together with $\eta\in(0,\frac{1}{4L_f}]$ implies that \eqref{ineq:bd-stat-cond-ncvx} holds, which completes the proof.
\end{proof}

\begin{proof}[\textbf{Proof of Lemma \ref{lem:pot-decrease}}]
Fix any $k\ge0$. Combining \eqref{def:pot-seq}, \eqref{ineq:rec-rela-pm}, \eqref{ineq:descent-ncvx}, and \eqref{ineq:bd-stat-cond-ncvx}, we obtain that
\begin{align}
\E_{\xi_{k+1}} [\mathcal{P}_{k+1}] & \overset{\eqref{def:pot-seq}}{=} \E_{\xi_{k+1}}\bigg[F(x^{k+1}) + \frac{1}{L_f}\|m^{k+1} - \nabla f(x^{k+1})\|^2\bigg]\nonumber\\
&\overset{\eqref{ineq:rec-rela-pm}\eqref{ineq:descent-ncvx}}{\le} F(x^k) - \frac{3}{4\eta}\|x^{k+1} - x^k\|^2 + 2\eta\|\nabla f(x^k) -  m^k\|^2\nonumber\\
&\qquad + \frac{1-\theta}{L_f} \|m^k - \nabla f(x^k)\|^2 + \frac{2L_f}{\theta}\|x^{k+1} - x^k\|^2 + \frac{2\theta \Delta^2(\tau_{k+1}) + \theta^2 \sigma^2(\tau_{k+1})}{L_f} \nonumber\\
& \overset{\eqref{def:pot-seq}}{=} \mathcal{P}_k - \Big(\frac{3}{4\eta} - \frac{2L_f}{\theta}\Big)\|x^{k+1} - x^k\|^2 + \Big(2\eta - \frac{\theta}{L_f}\Big)\|m^k - \nabla f(x^k)\|^2 + \frac{2\theta \Delta^2(\tau_{k+1}) + \theta^2 \sigma^2(\tau_{k+1})}{L_f} \nonumber\\
&\overset{\eqref{ineq:bd-stat-cond-ncvx}}{\le} \mathcal{P}_k - \Big(\frac{3}{4\eta} - \frac{2L_f}{\theta}\Big)\frac{\eta^2}{4}\mathrm{dist}^2(0,\partial F(x^{k+1})) + \Big[2\eta - \frac{\theta}{L_f} + \Big(\frac{3}{4\eta} - \frac{2L_f}{\theta}\Big)\frac{3\eta^2}{4}\Big]\|m^k - \nabla f(x^k)\|^2\nonumber\\
&\qquad + \frac{2\theta \Delta^2(\tau_{k+1}) + \theta^2 \sigma^2(\tau_{k+1})}{L_f}\nonumber\\
&= \mathcal{P}_k - \frac{\eta}{16}\mathrm{dist}^2(0,\partial F(x^{k+1})) -\eta\|m^k - \nabla f(x^k)\|^2 + 8\eta\Delta^2(\tau_{k+1}) + 16L_f\eta^2 \sigma^2(\tau_{k+1}), \nonumber
\end{align}
where the last relation is due to $\theta=4L_f\eta$. Hence, the conclusion of this lemma holds as desired.
\end{proof}

The next lemma will be used to derive the complexity bounds for Algorithm \ref{alg:c-spgm-m}. Its proof can be found in \cite[Lemma 2]{he2025accelerated} and is therefore omitted here.

\begin{lemma}\label{lem:tech-qm}
Let $a,b,c>0$ be given, $t^*=\min\{c,(a/b)^{1/2}\}$, and $\varphi(t)=a/t + bt$ for $t\in(0,\infty)$. Then, it holds that 
\begin{equation} \label{ineq:tech-qm}
\min_{t\in(0,c]} \varphi(t)=\varphi(t^*)\le a/c + 2(ab)^{1/2}.
\end{equation}
\end{lemma}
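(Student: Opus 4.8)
The plan is to treat this as an elementary one-variable optimization problem. First I would note that $\varphi$ is smooth and strictly convex on $(0,\infty)$, since $\varphi''(t) = 2a/t^3 > 0$. Solving $\varphi'(t) = -a/t^2 + b = 0$ gives the unique unconstrained stationary point $\bar t := (a/b)^{1/2}$, with value $\varphi(\bar t) = a/\bar t + b\bar t = (ab)^{1/2} + (ab)^{1/2} = 2(ab)^{1/2}$. By strict convexity, $\varphi$ is strictly decreasing on $(0,\bar t]$ and strictly increasing on $[\bar t,\infty)$.

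Second, I would identify the minimizer of $\varphi$ over the constraint set $(0,c]$. If $\bar t \le c$, then $\bar t \in (0,c]$ and, being the global minimizer of $\varphi$, it is a fortiori the minimizer over $(0,c]$. If $\bar t > c$, then $\varphi$ is strictly decreasing throughout $(0,c]$, so the minimum over $(0,c]$ is attained at the right endpoint $t=c$. In both cases the minimizer is $t^* = \min\{c,(a/b)^{1/2}\}$, which is exactly the quantity in the statement, so $\min_{t\in(0,c]}\varphi(t) = \varphi(t^*)$.

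Third, I would bound $\varphi(t^*)$ case by case. When $\bar t \le c$ we have $\varphi(t^*) = \varphi(\bar t) = 2(ab)^{1/2} \le a/c + 2(ab)^{1/2}$, since $a/c > 0$. When $\bar t > c$, i.e. $c < (a/b)^{1/2}$, we have $t^* = c$ and $\varphi(t^*) = a/c + bc$; here $bc < b\,(a/b)^{1/2} = (ab)^{1/2} \le 2(ab)^{1/2}$, so $\varphi(t^*) < a/c + 2(ab)^{1/2}$. Combining the two cases gives \eqref{ineq:tech-qm}.

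There is essentially no hard part here; the one point worth flagging is the second case, where one must observe that the endpoint term $bc$ produced by clipping the unconstrained optimizer down to $c$ is automatically dominated by $(ab)^{1/2}$ — and hence by $2(ab)^{1/2}$ — precisely because $c$ lies to the left of $(a/b)^{1/2}$. Everything else is a routine application of convexity and the first-order condition, which is why the paper is content to cite \cite[Lemma 2]{he2025accelerated} rather than reproduce the argument.
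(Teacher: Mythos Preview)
Your argument is correct and complete: convexity of $\varphi$, location of the unconstrained minimizer at $(a/b)^{1/2}$, the two-case split according to whether this lies in $(0,c]$, and the bound $bc<(ab)^{1/2}$ in the endpoint case are all handled cleanly. The paper itself omits the proof and simply cites \cite[Lemma~2]{he2025accelerated}, as you observed, so there is nothing further to compare.
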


\begin{proof}[\textbf{Proof of Theorem \ref{thm:cmplex-ncvx}}]
Notice from Theorem \ref{thm:trade-off1}(i) that $\mathcal{T}(\frac{\epsilon}{32})\neq\emptyset$ under Assumption \ref{asp:basic}(c). Thus, $\hat{\tau}_\epsilon$ exists. By the definition of $(\eta,\theta)$ and $\{\tau_k\}$, we obtain that the assumptions of Lemma \ref{lem:pot-decrease} hold. Then, by \eqref{def:Uf-Dh}, \eqref{ineq:upbd-var-lem}, and \eqref{def:pot-seq}, and $\tau_0=\tau_{(1)}$, one has that
\begin{align}
\E[\mathcal{P}_0] & = F(x^0) + \frac{1}{L_f}\E[\|G_{\tau_0}(x^0;\xi_0) - \nabla f(x^0)\|^2] \le F(x^0) + \frac{\sigma^2(\tau_{(1)})}{L_f},\label{upbd-P0}\\
\E[\mathcal{P}_K] & = F(x^K) + \frac{1}{L_f}\E[\|G_{\tau_K}(x^K;\xi_K) - \nabla f(x^K)\|^2] \ge F(x^K) \ge F_{\mathrm{low}}.\label{lowbd-PK}
\end{align}
Taking expectation on both sides of \eqref{ineq:desc-pot-ncvx} with respect to $\{\xi_i\}_{i=0}^{k+1}$, we have 
\begin{align*}
\E[\mathcal{P}_{k+1}] \le \E[\mathcal{P}_k] - \frac{\eta}{16}\mathrm{dist}^2(0,\partial F(x^{k+1})) + 8\eta\Delta^2(\tau_{k+1}) + 16L_f\eta^2 \sigma^2(\tau_{k+1})\qquad\forall k\ge0.
\end{align*}
Summing up this inequality over $k=0,\ldots,K-1$, and using \eqref{upbd-P0} and \eqref{lowbd-PK}, we obtain that for all $K\ge1$,
\begin{align*}
F_{\mathrm{low}} & \overset{\eqref{lowbd-PK}}{\le} \E[\mathcal{P}_K] \\
& \le \E[\mathcal{P}_0] - \frac{\eta}{16}\sum_{k=0}^{K-1}\mathrm{dist}^2(0,\partial F(x^{k+1})) + 8\eta\sum_{k=0}^{K-1}\Delta^2(\tau_{k+1}) + 16L_f\eta^2 \sum_{k=0}^{K-1}\sigma^2(\tau_{k+1})\\
&\overset{\eqref{upbd-P0}}{\le} F(x^0) + \frac{\sigma^2(\tau_{(1)})}{L_f} - \frac{\eta}{16}\sum_{k=0}^{K-1}\mathrm{dist}^2(0,\partial F(x^{k+1})) + 8\eta\sum_{k=0}^{K-1}\Delta^2(\tau_{k+1}) + 16L_f\eta^2 \sum_{k=0}^{K-1}\sigma^2(\tau_{k+1}).
\end{align*}
By the definitions of $\mathcal{T}(\cdot)$ and $\hat\tau_\epsilon$ in \eqref{def:T-eps} and \eqref{def:hat-eta-theta-tau}, respectively, one has that $\Delta(\hat\tau_\epsilon)\le\epsilon/32$. Rearranging the terms in this inequality and substituting $\eta=\hat{\eta}_\epsilon$ and $\tau_k=\hat{\tau}_\epsilon$ for all $k\ge1$, we obtain that
\begin{align*}
\frac{1}{K}\sum_{k=0}^{K-1}\E[\mathrm{dist}^2(0,\partial F(x^{k+1}))] & \le \frac{16(F(x^0) -F_{\mathrm{low}} + \sigma^2(\tau_{(1)})/L_f)}{K\hat{\eta}_\epsilon}  + 128\Delta^2(\hat{\tau}_\epsilon) + 256L_f\hat{\eta}_\epsilon \sigma^2(\hat{\tau}_\epsilon)\\
&\le \frac{16(F(x^0) -F_{\mathrm{low}} + \sigma^2(\tau_{(1)})/L_f)}{K\hat{\eta}_\epsilon} + 256L_f\hat{\eta}_\epsilon\sigma^2(\hat{\tau}_\epsilon) + \frac{\epsilon^2}{8}\\
&=\min_{\eta\in\big(0,\frac{1}{4L_f}\big]}\bigg\{\frac{16(F(x^0) -F_{\mathrm{low}} + \sigma^2(\tau_{(1)})/L_f)}{K\eta} + 256L_f \eta\sigma^2(\hat{\tau}_\epsilon)\bigg\} + \frac{\epsilon^2}{8}\\
&\overset{\eqref{ineq:tech-qm}}{\le}\frac{64L_f(F(x^0) -F_{\mathrm{low}} + \sigma^2(\tau_{(1)})/L_f)}{K} \\
&\qquad + 128\bigg[\frac{L_f(F(x^0) -F_{\mathrm{low}} + \sigma^2(\tau_{(1)})/L_f)\sigma^2(\hat{\tau}_\epsilon)}{K}\bigg]^{1/2} + \frac{\epsilon^2}{8},
\end{align*}
where the last relation is due to \eqref{ineq:tech-qm} and Lemma \ref{lem:tech-qm} with $(a,b,c)=(\frac{16(F(x^0) -F_{\mathrm{low}} + \sigma^2(\tau_{(1)})/L_f)}{K},256L_f\sigma^2(\hat\tau_\epsilon),\frac{1}{4L_f})$. Recall that $\iota_K$ is uniformly selected from $\{1,\ldots,K\}$. It then follows from this and the above relation that
\begin{align}
\E[\mathrm{dist}^2(0,\partial F(x^{\iota_K}))] & = \frac{1}{K}\sum_{k=0}^{K-1}\E[\mathrm{dist}^2(0,\partial F(x^{k+1}))]    \nonumber\\
&\le \frac{64L_f(F(x^0) -F_{\mathrm{low}} + \sigma^2(\tau_{(1)})/L_f)}{K} + 128\bigg[\frac{L_f(F(x^0) -F_{\mathrm{low}} + \sigma^2(\tau_{(1)})/L_f)\sigma^2(\hat{\tau}_\epsilon)}{K}\bigg]^{1/2}\nonumber \\
&\qquad + \frac{\epsilon^2}{8}.\nonumber
\end{align}
By this, one can observe that $\E[\mathrm{dist}^2(0,\partial F(x^{\iota_K}))]\le3\epsilon^2/8 + \epsilon^2/2 + \epsilon^2/8$ holds for all $K$ satisfying \eqref{K-lwbd-ncvx}, which completes the proof of this theorem.
\end{proof}

\bibliographystyle{abbrv}
\bibliography{ref}

\end{document}